\documentclass[twoside]{article}

% \usepackage{arxiv}
% If your paper is accepted, change the options for the package
% aistats2026 as follows:
%
\usepackage[accepted]{arxiv}

%
% This option will print headings for the title of your paper and
% headings for the authors names, plus a copyright note at the end of
% the first column of the first page.

% We also include a `preprint' option for non-anonymous preprints. 
% Change the options for the package aistats2026 as follows:
%
%\usepackage[preprint]{aistats2026}
%
% This option will print headings for the title of your paper and
% headings for the authors names, but does not print the copyright and 
% venue note at the end of the first column of the first page.

% If you set papersize explicitly, activate the following three lines:
%\special{papersize = 8.5in, 11in}
%\setlength{\pdfpageheight}{11in}
%\setlength{\pdfpagewidth}{8.5in}

% If you use the natbib package, activate the following three lines:
%\usepackage[round]{natbib}
%\renewcommand{\bibname}{References}
%\renewcommand{\bibsection}{\subsubsection*{\bibname}}

% If you use BibTeX in apalike style, activate the following line:
%\bibliographystyle{apalike}

\usepackage[utf8]{inputenc} % allow utf-8 input
\usepackage[T1]{fontenc}    % use 8-bit T1 fonts
\usepackage{hyperref}       % hyperlinks
\usepackage{url}            % simple URL typesetting
\usepackage{booktabs}       % professional-quality tables
\usepackage{amsfonts}       % blackboard math symbols
\usepackage{nicefrac}       % compact symbols for 1/2, etc.
\usepackage[protrusion=true,expansion=false,activate=true,final,kerning=true,spacing=true]{microtype}

\usepackage{xcolor}         % colors
\usepackage[english]{babel}
\usepackage{natbib}
\bibliographystyle{abbrvnat}
\usepackage[utf8]{inputenc}
\usepackage[T1]{fontenc}
\usepackage{subcaption}
\usepackage{booktabs}
\usepackage{multirow}
\usepackage{url}
\usepackage{soul}
\usepackage{tikz}
\usetikzlibrary{calc}
\usetikzlibrary{decorations.pathmorphing}
\usetikzlibrary{positioning}
\usetikzlibrary{shapes}
\usetikzlibrary{decorations.pathreplacing}
\usetikzlibrary{arrows.meta,backgrounds,automata}
\usetikzlibrary{positioning,shapes,matrix,calc}
\tikzstyle{node}=[circle, draw, fill=gray!80!white,thick,scale=1.2]
\tikzstyle{edge}=[draw=black, thick,-]
\usetikzlibrary{hobby,arrows,decorations.pathmorphing,backgrounds,positioning,fit,petri,calc}
\pgfdeclarelayer{background}
\pgfsetlayers{background,main}
\setlength {\marginparwidth }{2cm}
\usepackage{todonotes}

\usepackage{tikz}
\usepackage[edges]{forest}
\usetikzlibrary{arrows.meta,shadows.blur}
\usepackage{rotating}

\usepackage{capt-of}
\usepackage{multirow}
\usepackage{changepage,threeparttable}

\definecolor{purple}{RGB}{27, 158, 119}
\definecolor{blue}{RGB}{217, 95, 2}
\definecolor{orange}{RGB}{117, 112, 179}
\definecolor{gray}{RGB}{239,240,241}
\definecolor{pink}{RGB}{254,15,127}
\definecolor{green}{RGB}{231, 41, 138}
\definecolor{darkgreen}{rgb}{0, 0.5, 0}

\usepackage{paralist}
\usepackage[stable]{footmisc}
\usepackage{adjustbox}

\usepackage{ifthen}
\newcommand{\CC}[1][]{$\text{C\hspace{-.25ex}}^{_{_{_{++}}}}
\ifthenelse{\equal{#1}{}}{}{\text{\hspace{-.625ex}#1}}$}

\usepackage{bm}
\usepackage{bbm}
\usepackage{amsmath}
\usepackage{amssymb}
\usepackage{amsthm}
\usepackage{amsfonts}
\usepackage{thmtools}	
\usepackage{xfrac}
\usepackage{mleftright}
\usepackage{stmaryrd}
\usepackage{nicefrac}
\usepackage{algorithm}
\usepackage{algorithmicx}
\usepackage[noend]{algpseudocode}

\usepackage{dirtytalk}

\usepackage{thm-restate}

% Fixes some spacing issues with braces.
\let\originalleft\left
\let\originalright\right
\renewcommand{\left}{\mathopen{}\mathclose\bgroup\originalleft}
\renewcommand{\right}{\aftergroup\egroup\originalright}

\usepackage{pifont}

\usepackage{enumitem}
\setlist[enumerate]{itemsep=0.05ex, topsep=0.1\topsep}
\setlist[description]{itemsep=0.05ex, topsep=0.1\topsep}
\setlist[itemize]{itemsep=0.05ex, topsep=0.1\topsep}

% Let cleveref and thmtools work together
\makeatletter
\def\thmt@refnamewithcomma #1#2#3,#4,#5\@nil{%
\@xa\def\csname\thmt@envname #1utorefname\endcsname{#3}%
\ifcsname #2refname\endcsname
\csname #2refname\expandafter\endcsname\expandafter{\thmt@envname}{#3}{#4}%
\fi
}
\makeatother

% \usepackage[
% pdfa,
% hidelinks,
% pdftex, 
% pdfdisplaydoctitle,
% pdfpagelabels,
% pdfauthor={Christopher Morris},
% pdftitle={},
% pdfsubject={},
% pdfkeywords={MPNNs, generalization, expressivity},
% pdfproducer={Latex with the hyperref package},
% pdfcreator={pdflatex}
% ]{hyperref}
\usepackage[capitalise,noabbrev]{cleveref}   
\theoremstyle{definition}
\newtheorem{theorem}{Theorem}

\newtheorem{proposition}[theorem]{Proposition}

\newtheorem{lemma}[theorem]{Lemma}
\newtheorem{corollary}[theorem]{Corollary}
\newtheorem{definition}[theorem]{Definition}

\newtheorem*{definition*}{Definition}
\usepackage{thm-restate}
\usepackage[mathic=true]{mathtools}
\usepackage{fixmath}
\usepackage{siunitx}

\usepackage[edges]{forest}
\usetikzlibrary{arrows.meta,shadows.blur}
\usepackage{rotating}

\usepackage{ellipsis}

\usepackage{anyfontsize}

\usepackage[auth-lg]{authblk}

% Bold. 

% Calligraphic.
\newcommand{\cA}{\mathcal{A}}

\newcommand{\cF}{\mathcal{F}}
\newcommand{\cG}{\mathcal{G}}
\newcommand{\cH}{\mathcal{H}}
\newcommand{\cN}{\mathcal{N}}
\newcommand{\cO}{\mathcal{O}}
\newcommand{\cP}{\mathcal{P}}

\newcommand{\cS}{\mathcal{S}}
\newcommand{\cT}{\mathcal{T}}

\newcommand{\cV}{\mathcal{V}}

\newcommand{\cX}{\mathcal{X}}
\newcommand{\cY}{\mathcal{Y}}
\newcommand{\cZ}{\mathcal{Z}}

% Sans serif.

% Blackboard.

\newcommand{\Eb}{\mathbb{E}}

\newcommand{\Nb}{\mathbb{N}}

\newcommand{\Rb}{\mathbb{R}}

% Architecture names. 

\newcommand{\FNN}{\mathsf{FNN}}
\newcommand{\MPNN}{\textsf{MPNN}}
\newcommand{\UNR}[1]{\ensuremath{\mathsf{unr}(#1)}}

% Various architecture names.
\newcommand{\wlone}{$1$\textrm{-}\textsf{WL}}

\newcommand{\hb}{\mathbold{h}}

\newcommand{\UPD}{\mathsf{UPD}}
\newcommand{\AGG}{\mathsf{AGG}}

\newcommand{\REL}{\mathsf{RELABEL}}

\newcommand{\new}[1]{\emph{#1}}

\renewcommand{\vec}[1]{\mathbold{#1}}
\newcommand{\oms}{\{\!\!\{}
\newcommand{\cms}{\}\!\!\}}
\newcommand{\tup}[1]{{(#1)}}
\DeclarePairedDelimiterX{\norm}[1]{\lVert}{\rVert}{#1}

% Widebar.
\DeclareFontFamily{U}{mathx}{\hyphenchar\font45}
\DeclareFontShape{U}{mathx}{m}{n}{<-> mathx10}{}
\DeclareSymbolFont{mathx}{U}{mathx}{m}{n}
\DeclareMathAccent{\widebar}{0}{mathx}{"73}

\DeclareMathOperator{\tr}{\mathrm{Tr}}

\usepackage[auth-lg]{authblk}

\begin{document}

% If your paper is accepted and the title of your paper is very long,
% the style will print as headings an error message. Use the following
% command to supply a shorter title of your paper so that it can be
% used as headings.
%
%\runningtitle{I use this title instead because the last one was very long}

% If your paper is accepted and the number of authors is large, the
% style will print as headings an error message. Use the following
% command to supply a shorter version of the author names so that
% they can be used as headings (for example, use only the surnames)
%
%\runningauthor{Surname 1, Surname 2, Surname 3, ...., Surname n}

\twocolumn[

\aistatstitle{Understanding Generalization in Node and Link Prediction}

\aistatsauthor{%
Antonis Vasileiou \And
Timo Stoll \And
Christopher Morris%
}
\aistatsaddress{%
\\
Computer Science Department, RWTH Aachen University, Germany \\
\texttt{\{antonis.vasileiou, timo.stoll\}@log.rwth-aachen.de}%
}

]

\begin{abstract}
Message-passing graph neural networks (MPNNs) are widely applied to node and link prediction across scientific and industrial domains, motivating diverse architectural innovations. Nevertheless, their generalization beyond training data remains insufficiently characterized. Existing analyses often rely on unrealistic i.i.d.\@ assumptions, overlooking correlations between nodes and links that are inherently induced by the graph structure. Additionally, most works typically fix aggregation schemes and employ restrictive loss functions. We present a unified theoretical framework that explicitly accounts for these correlations when analyzing MPNN generalization in inductive and transductive prediction tasks. Our framework further incorporates architectural parameters, loss functions, and structural properties, and extends naturally to general classification problems. Empirical evaluations substantiate our theoretical findings and demonstrate the critical impact of structural correlations on MPNN generalization.
\end{abstract}

\section{Introduction}\label{sec:introduction}

Graphs model interactions in the life, natural, and formal sciences, such as atomistic systems~\citep{Duv+2023,Zha+2023c} or social networks~\citep{Eas+2010,Lov+2012}, motivating machine learning methods for graph-structured data. Neural networks tailored to such data, mainly \new{message-passing graph neural networks} (MPNNs)~\citep{Gil+2017,Sca+2009}, have gained wide attention, showing strong results in drug design~\citep{Won+2023}, social network analysis~\citep{Bor+2024}, weather forecasting~\citep{Lam+2023}, and combinatorial optimization~\citep{Cap+2021,Gas+2019,Sca+2024,Qia+2023}.

MPNNs support node-, link-, and graph-level prediction~\citep{Cha+2020}. While their generalization in graph-level tasks is well studied~\citep{Fra+2024,Mor+2023,scarselli2018vapnik,Lev+2023,Vas+2024,Vas+2024b}, node- and link-level generalization remains underexplored~\citep{Mor+2024b}. Existing node-level studies~\citep{Gar+2020,Ess+2021,Lia+2021,scarselli2018vapnik,Tan+2023,Ver+2019} often assume i.i.d.\@ samples, ignoring correlations from graph structure, and rely on restrictive losses (e.g., margin or $0$-$1$ losses), which are impractical for training. Likewise, despite advances in link-prediction models~\citep{Ali+2022,Ye+2022,Zha+2021}, their generalization is unstudied. In addition, most analyses target either the \new{inductive} or \new{transductive} regime (see~\cref{sec:learningongraphs}). That is, in the inductive regime, models are trained on multiple labeled graphs and predict on unseen ones, whereas the transductive regime assumes a single graph with partially observed labels. A unified understanding of MPNN generalization across these regimes is still lacking; see~\cref{app_sec:rel_work} for a detailed discussion of related work.

\textbf{Present work} We introduce a unified framework for analyzing MPNNs' generalization ability in node- and link-level prediction, extending recent covering number bounds~\citep{Vas+2024}. Unlike~\citet{Vas+2024}, we handle non-i.i.d.\@ samples in these regimes, requiring non-trivial extensions. Concretely,
\begin{enumerate}[leftmargin=*]
\item we present a unified framework, named \emph{generalized MPNNs} (\cref{sec:gen_MPNNs}), that encompasses a broad class of architectures, including modern models~\citep{Zha+2021,Zhu+2021}, for both node- and link-prediction tasks. 
\item We define pseudometrics for MPNNs, named \emph{unrolling distances} (\cref{sec:unrolling_distances}), which satisfy Lipschitz continuity, thereby capturing the underlying graph structure.
\item We establish robustness-based generalization bounds that apply to arbitrary inductive and transductive learning tasks (\cref{{thm:Xu_Mannor_noniid}}, \cref{thm:Xu_Mannor_transductive}). Combined, these bounds yield explicit generalization guarantees for node- and link-prediction (\cref{thm:binaryclassificationdatadepend}, \cref{thm:binaryclassificationtransductive}), while accounting for sample dependencies and standard losses, such as cross-entropy.
\item Empirically, we show our theory aligns with practice, yielding a sharper understanding of when MPNNs generalize in node- and link-level tasks.
\end{enumerate}

\emph{Overall, our results offer a unified framework for understanding the MPNNs' generalization abilities for node- and link-level prediction tasks, accounting for different regimes, non-i.i.d.\@ samples, architectural choices, and graph structure.}

%\av{ Maybe also add a paragraph walking the reader through the document. Even though I think it is clear from the Present work. What do you think?}
%\cm{IMO, 9-page paper does not need this.}
%\emph{Overall, our results provide a unified framework—via generalized MPNNs and unrolling distances (\cref{sec:generalized_mpnns_and_pms})—for analyzing MPNN generalization in node- and link-level prediction. The framework incorporates different regimes, non-i.i.d.\ samples, architectural choices, and graph structure, formalized in our main theorems (\cref{thm:binaryclassificationdatadepend}, \cref{thm:binaryclassificationtransductive}).}

% \begin{figure}
%     \centering
%     \scalebox{1.1}{\input{plots/fig_overview.tex}}
%     \caption{Illustrating how the graph structure, equipped with a proper pseudo-metric, induces an alignment between the node distances and their embeddings in the Euclidean space via Lipschitz continuity. The constant $c$ denotes the Lipschitz constant. Each color represents the corresponding unrolling tree, as indicated below the graph.}
%     \vspace{-15pt}
%     \label{fig:overview}
% \end{figure}

\section{Background}
\label{sec:background}
In the following, we introduce the MPNN architecture employed in this work, together with the two statistical settings (inductive and transductive) on which our generalization results are established. Throughout, we adopt standard notation for graphs, (pseudo-)metric spaces, and Lipschitz continuity; see~\cref{app_sec:background}.

\subsection{Message-passing neural networks}
\label{sec:MPNNs}
A well-known and widely used class of graph-based models is the family of message-passing neural networks (MPNNs)~\citep{Gil+2017,Sca+2009}. These architectures learn vector representations for each node by iteratively aggregating information from their neighbors.  Following~\citet{Gil+2017}, let $G$ be a node-featured graph with initial node features $\hb_{v}^\tup{0} \in \Rb^{d_0}$, $d_0 \in \Nb$, for $v\in V(G)$. An \new{MPNN architecture} consists of a composition of $L$ neural network layers for some $L>0$. In each \new{layer}, $t \in [L]$,  we compute a $d_{t}$-dimensional node feature $\hb ^\tup{t}_{G}(v)$ via
\begin{equation*}
%\label{def:MPNN_aggregation}
%\begin{aligned}
	\UPD^\tup{t}\Bigl(\hb_{G}(v)^\tup{t-1},\AGG^\tup{t} \bigl(\oms \hb_{G}(u)^\tup{t-1}
	\mid u\in N(v) \cms \bigr)\Bigr),
%\end{aligned}
\end{equation*}
$d_t \in \Nb$, for $v\in V(G)$, where $\UPD^\tup{t}$ and $\AGG^\tup{t}$ may be  parameterized functions, e.g., neural networks.\footnote{Strictly speaking, \citet{Gil+2017} consider a slightly more general setting in which node features are computed by $\hb_G(v)^\tup{t} \coloneq
		\UPD^\tup{t}\Bigl(\hb_G(v)^\tup{t-1},\AGG^\tup{t} \bigl(\oms (\hb_G(v)^\tup{t-1},\hb_G(u)^\tup{t-1},\ell_G(v,u))
		\mid u\in N(v) \cms \bigr)\Bigr)$,
	where $\ell_G(v,u)$ denotes the edge label of the edge $(v,u)$.}

\textbf{Sum aggregation} For our analysis, we focus on a simplified yet expressive MPNN architecture, matching $\wlone$ expressivity~\citep{Morris2019}, that uses sum aggregation. We note, however, that your analysis lifts straightforwardly to other aggregation functions, e.g., mean aggregation. Given a node-featured graph $(G, a_G)$, we initialize node features as $\hb_{G}(v)^\tup{0} = a_G(v)$ for all $v \in V(G)$, and we 
\begin{equation}
	\label{def:sum_mpnnsgraphs}
    \varphi_{t}\Bigl(\vec{W}_{t}^{(1)} \vec{h}^{(t-1)}_G(v) + \vec{W}_{t}^{(2)} \sum_{u \in N(v)} \vec{h}^{(t-1)}_G(u) \Bigr),
\end{equation}

for $v \in V(G)$, where $\varphi_{t} \colon \Rb^{d_{t-1}} \to \Rb^{d_t}$ is an $L_{\varphi_t}$-Lipschitz continuous function with respect to the metric induced by the $2$-norm, for $d_t \in \Nb$ and $t \in [L]$. The matrices $\vec{W}_{t}^{(1)}, \vec{W}_{t}^{(2)} \in \Rb^{d_{t-1} \times d_t}$ are assumed to have a $2$-norm bounded by some constant $B > 0$.

\textbf{MPNNs for link prediction}
Given a graph, MPNNs for link prediction aim at computing the probability of a link between two target nodes. Older architectures such as RGCN~\citep{Kipf2018RGCN}, CompGCN~\citep{Vashishth2020CompGCN}, and GEM-GCN~\citep{Yu2020GEMGCN} first compute node-level representations and combine them to compute a node-pair representation for each potential link. However, \citet{Zha+2021} demonstrated that such approaches are insufficient for link prediction, as they fail to capture node interactions. Hence, recently, a large set of more expressive link prediction architectures emerged, which are based on applying the MPNN to a suitably transformed graph that depends on the candidate link rather than to the original graph. An overview of state-of-the-art MPNN-based link prediction models, namely, SEAL~\citep{Zha+2018}, C-MPNNs~\citep{huang2023theory}, and NCNs~\citep{Wan+2024}, is provided in \cref{app_sec:MPNN_link_pred}.

\subsection{Inductive and transductive learning}\label{sec:learningongraphs}

This section presents the two learning frameworks underlying our generalization analysis, \new{inductive} and \new{transductive learning}.

\textbf{Inductive setting} Let $\cX$ denote the input space and $\cY$ the label space, and define $\cZ \coloneqq \cX \times \cY$. A learning algorithm $\cA$ receives as input a training sample $\cS = \{(x_i, y_i)\}_{i=1}^N \subset \cZ$ drawn i.i.d.\@ from an unknown distribution $\mu$ on $\cZ$, and outputs a hypothesis $h = \cA_\cS \in \cH$, where $\cH$ is a hypothesis class. Given a bounded loss function $\ell \colon \cH \times \cX \times \cY \to \Rb^+$, the \new{expected} and the \new{empirical risk} are defined as
\begin{align*}
&\ell_{\text{exp}}(\cA_\cS) \coloneqq \Eb_{(x,y)\sim\mu} \left[ \ell(\cA_\cS, x, y) \right], \quad \text{ and } \quad \\
&\ell_{\text{emp}}(\cA_\cS) \coloneqq \frac{1}{N} \sum_{(x,y)\in\cS} \ell(\cA_\cS, x, y),
\end{align*}
respectively.
The generalization error is then defined as the absolute difference
\begin{equation*}
\left| \ell_{\text{emp}}(\cA_\cS) - \ell_{\text{exp}}(\cA_\cS) \right|.
\end{equation*}

\textbf{Transductive setting} In contrast to the inductive case, the transductive setting does not assume a distribution over $\cZ$. Instead, we are given a fixed dataset $\cZ = \{(x_i, y_i)\}_{i=1}^{m+u}$ of $m$ labeled and $u$ unlabeled examples. A transductive learning algorithm receives the full input set $\{x_i\}_{i=1}^{m+u}$ and the labels $\{y_i\}_{i=1}^{m}$ of a randomly selected training subset. The goal is to minimize the average loss over the remaining $u$ test points. Following \citet[Setting 1]{DBLP:books/sp/Vapnik06} for the model sampling process we let $\pi$ be a random permutation over $[m+u]$, and we define,
\begin{align*}
&R_m \coloneqq \frac{1}{m} \sum_{i=1}^{m} \ell(\cA_\pi, x_{\pi(i)}, y_{\pi(i)}), \quad \text{ and } \\
&R_u \coloneqq \frac{1}{u} \sum_{i=m+1}^{m+u} \ell(\cA_\pi, x_{\pi(i)}, y_{\pi(i)}).
\end{align*}
The permutation $\pi$ is treated as a random variable, uniformly distributed over all permutations of $[m+u]$, and models the process in which the $m$ training samples are drawn uniformly at random \emph{without} replacement from the $m+u$ elements of $\cZ$.
The generalization error is given by
\begin{equation*}
\left| R_m - R_u \right|.
\end{equation*}

Further formal details, including the permutation model, symmetry assumptions, and loss function, are provided in \cref{app_sec:learning_formalism}. 
In the remainder of the paper, we focus on generalization bounds for node and link prediction under binary classification.
% In the inductive setting, we use $\cX = \cG_{d}^{\Rb} \otimes V$ and $\cY = \{0,1\}$, while in the transductive setting, $\cX = G \otimes E$ and $\cY = \{0,1\}$ for some fixed $G \in \cG_{d}^{\Rb}$.

\section{Generalized MPNNs and unrolling distances}
\label{sec:generalized_mpnns_and_pms}

\begin{figure*}
    \centering
    \scalebox{1.0}{\begin{tikzpicture}[transform shape, scale=1]

\definecolor{lgreen}{HTML}{4DA84D}
\definecolor{fontc}{HTML}{403E30}
\definecolor{llred}{HTML}{FF7A87}
\definecolor{llblue}{HTML}{7EAFCC}
\definecolor{lviolet}{HTML}{756BB1}
\definecolor{lorange}{HTML}{FF7F0E}

\newcommand{\gnode}[5]{%
    \draw[#3, fill=#3!20] (#1) circle (#2pt);
    \node[anchor=#4,fontc] at (#1) {\tiny #5};
}

\begin{scope}[shift={(0,0)}]
\begin{scope}[shift={(0,0)}]

\node[fontc] at (-0.75,-0.4) {\scalebox{3}{\textnormal{(}}};

\draw[decorate,line width=0.6pt,decoration={brace,amplitude=4pt},fontc] ((-0.49,-0.9) -- (-0.49,0.1) node[midway,xshift=9pt,llblue] {};
\draw[decorate,line width=0.6pt,decoration={brace,amplitude=4pt},fontc] ((0.49,0.1) -- (0.49,-0.9) node[midway,xshift=9pt,llblue] {};

\node[fontc] at (0.65,-0.9) {\scalebox{1.2}{\textnormal{,}}};

\draw[fontc] (0,0) -- (-0.25,-0.4);
\draw[fontc] (0,0) -- (0.25,-0.4);
\draw[fontc] (0.25,-0.4) -- (0.25,-0.8);

\gnode{0,0}{2}{llblue}{south}{\scalebox{0.5}{$(1,0)$}};
\gnode{-0.25,-0.4}{2}{llblue}{south}{};
\gnode{0.25,-0.4}{2}{llblue}{north}{};
\gnode{0.25,-0.8}{2}{llblue}{north}{\scalebox{0.5}{$(1,1)$}};

\draw[decorate,line width=0.6pt,decoration={brace,amplitude=2pt},fontc] ((0.4,-1.07) -- (-0.4,-1.07) node[midway,xshift=9pt,llblue] {};
\node[fontc] at (0,-1.3) {\scalebox{0.5}{$\cF^{(1)}(G_1,S_1)$}};

\node[fontc] at (-0.34,-0.25) {\scalebox{0.5}{\tiny $(0,1)$}};
\node[fontc] at (0.34,-0.25) {\scalebox{0.5}{\tiny $(1,0)$}};

\end{scope}

\begin{scope}[shift={(2,0)}]

\draw[decorate,line width=0.6pt,decoration={brace,amplitude=4pt},fontc] ((-0.7,-0.9) -- (-0.7,0.1) node[midway,xshift=9pt,llblue] {};
\draw[decorate,line width=0.6pt,decoration={brace,amplitude=4pt},fontc] ((1.44,0.1) -- (1.44,-0.9) node[midway,xshift=9pt,llblue] {};

\draw[fontc] (0,0) -- (-0.36,-0.4);
\draw[fontc] (0,0) -- (0.36,-0.4);
\draw[fontc] (-0.36,-0.4) -- (-0.54,-0.8);
\draw[fontc] (-0.36,-0.4) -- (-0.18,-0.8);
\draw[fontc] (0.36,-0.4) -- (0.18,-0.8);
\draw[fontc] (0.36,-0.4) -- (0.54,-0.8);

\gnode{0,0}{2}{llblue}{south}{\scalebox{0.5}{$(1,1)$}};
\gnode{-0.36,-0.4}{2}{llblue}{north}{};
\gnode{0.36,-0.4}{2}{llblue}{north}{};
\gnode{-0.54,-0.8}{2}{llblue}{north}{\scalebox{0.5}{$(1,1)$}};
\gnode{-0.18,-0.8}{2}{llblue}{north}{\scalebox{0.5}{$(1,1)$}};
\gnode{0.18,-0.8}{2}{llblue}{north}{\scalebox{0.5}{$(1,0)$}};
\gnode{0.54,-0.8}{2}{llblue}{north}{\scalebox{0.5}{$(0,1)$}};

\node[fontc] at (-0.43,-0.25) {\scalebox{0.5}{\tiny $(0,1)$}};
\node[fontc] at (0.43,-0.25) {\scalebox{0.5}{\tiny $(1,0)$}};

\node[fontc] at (0.76,-0.9) {\scalebox{1.2}{\textnormal{,}}};

\draw[fontc] (1,0) -- (1,-0.4);
\draw[fontc] (1,-0.8) -- (1,-0.4);

\gnode{1,0}{2}{llblue}{south}{\scalebox{0.5}{$(0,1)$}};
\gnode{1,-0.4}{2}{llblue}{west}{\scalebox{0.5}{$(1,0)$}};
\gnode{1,-0.8}{2}{llblue}{west}{\scalebox{0.5}{$(0,1)$}};

\draw[decorate,line width=0.6pt,decoration={brace,amplitude=2pt},fontc] ((1.1,-1.07) -- (-0.64,-1.07) node[midway,xshift=9pt,llblue] {};
\node[fontc] at (0.2,-1.3) {\scalebox{0.5}{$\cF^{(1)}(G_2,S_2)$}};

\node[fontc] at (1.67,-0.4) {\scalebox{3}{\textnormal{)}}};

\end{scope}
\end{scope}

\draw[fontc,-stealth] (4.1,-0.4) -- (4.9,-0.4);
\node[fontc] at (4.5,-0.22) {\scalebox{0.8}{$\rho$}};

\begin{scope}[shift={(6.35,0)}]
\begin{scope}[shift={(0,0)}]

\node[fontc] at (-1,-0.4) {\scalebox{3}{\textnormal{(}}};

\draw[decorate,line width=0.6pt,decoration={brace,amplitude=4pt},fontc] ((-0.7,-0.9) -- (-0.7,0.1) node[midway,xshift=9pt,llblue] {};

\draw[fontc] (0,0) -- (-0.36,-0.4);
\draw[fontc] (0,0) -- (0.36,-0.4);
\draw[fontc] (-0.36,-0.4) -- (-0.54,-0.8);
\draw[fontc] (-0.36,-0.4) -- (-0.18,-0.8);
\draw[fontc] (0.36,-0.4) -- (0.18,-0.8);
\draw[fontc] (0.36,-0.4) -- (0.54,-0.8);

\gnode{0,0}{2}{llblue}{south}{\scalebox{0.5}{$(1,0)$}};
\gnode{-0.36,-0.4}{2}{llblue}{north}{};
\gnode{0.36,-0.4}{2}{llblue}{north}{};
\gnode{-0.54,-0.8}{2}{llred}{north}{};
\gnode{-0.18,-0.8}{2}{llred}{north}{};
\gnode{0.18,-0.8}{2}{llblue}{north}{\scalebox{0.5}{$(1,1)$}};
\gnode{0.54,-0.8}{2}{llred}{north}{};

\node[fontc] at (-0.43,-0.25) {\scalebox{0.5}{\tiny $(0,1)$}};
\node[fontc] at (0.43,-0.25) {\scalebox{0.5}{\tiny $(1,0)$}};

\node[fontc] at (0.72,-0.9) {\scalebox{1.2}{\textnormal{,}}};

\end{scope}

\begin{scope}[shift={(1.7,0)}]

\draw[decorate,line width=0.6pt,decoration={brace,amplitude=4pt},fontc] ((0.7,0.1) -- (0.7,-0.9) node[midway,xshift=9pt,llblue] {};

\draw[fontc] (0,0) -- (-0.36,-0.4);
\draw[fontc] (0,0) -- (0.36,-0.4);
\draw[fontc] (-0.36,-0.4) -- (-0.54,-0.8);
\draw[fontc] (-0.36,-0.4) -- (-0.18,-0.8);
\draw[fontc] (0.36,-0.4) -- (0.18,-0.8);
\draw[fontc] (0.36,-0.4) -- (0.54,-0.8);

\gnode{0,0}{2}{llred}{south}{};
\gnode{-0.36,-0.4}{2}{llred}{north}{};
\gnode{0.36,-0.4}{2}{llred}{north}{};
\gnode{-0.54,-0.8}{2}{llred}{north}{};
\gnode{-0.18,-0.8}{2}{llred}{north}{};
\gnode{0.18,-0.8}{2}{llred}{north}{};
\gnode{0.54,-0.8}{2}{llred}{north}{};

\node[fontc] at (0.9,-0.9) {\scalebox{1.2}{\textnormal{,}}};

\end{scope}

\begin{scope}[shift={(3.65,0)}]

\node[fontc] at (2.65,-0.4) {\scalebox{3}{\textnormal{)}}};

\draw[decorate,line width=0.6pt,decoration={brace,amplitude=4pt},fontc] ((-0.7,-0.9) -- (-0.7,0.1) node[midway,xshift=9pt,llblue] {};

\draw[fontc] (0,0) -- (-0.36,-0.4);
\draw[fontc] (0,0) -- (0.36,-0.4);
\draw[fontc] (-0.36,-0.4) -- (-0.54,-0.8);
\draw[fontc] (-0.36,-0.4) -- (-0.18,-0.8);
\draw[fontc] (0.36,-0.4) -- (0.18,-0.8);
\draw[fontc] (0.36,-0.4) -- (0.54,-0.8);

\gnode{0,0}{2}{llblue}{south}{\scalebox{0.5}{$(1,1)$}};
\gnode{-0.36,-0.4}{2}{llblue}{north}{};
\gnode{0.36,-0.4}{2}{llblue}{north}{};
\gnode{-0.54,-0.8}{2}{llblue}{north}{\scalebox{0.5}{$(1,1)$}};
\gnode{-0.18,-0.8}{2}{llblue}{north}{\scalebox{0.5}{$(1,1)$}};
\gnode{0.18,-0.8}{2}{llblue}{north}{\scalebox{0.5}{$(1,0)$}};
\gnode{0.54,-0.8}{2}{llblue}{north}{\scalebox{0.5}{$(0,1)$}};

\node[fontc] at (-0.43,-0.25) {\scalebox{0.5}{\tiny $(0,1)$}};
\node[fontc] at (0.43,-0.25) {\scalebox{0.5}{\tiny $(1,0)$}};

\node[fontc] at (0.78,-0.9) {\scalebox{1.2}{\textnormal{,}}};

\end{scope}

\begin{scope}[shift={(5.3,0)}]

\draw[decorate,line width=0.6pt,decoration={brace,amplitude=4pt},fontc] ((0.7,0.1) -- (0.7,-0.9) node[midway,xshift=9pt,llblue] {};

\draw[fontc] (0,0) -- (-0.36,-0.4);
\draw[fontc] (0,0) -- (0.36,-0.4);
\draw[fontc] (-0.36,-0.4) -- (-0.54,-0.8);
\draw[fontc] (-0.36,-0.4) -- (-0.18,-0.8);
\draw[fontc] (0.36,-0.4) -- (0.18,-0.8);
\draw[fontc] (0.36,-0.4) -- (0.54,-0.8);

\gnode{0,0}{2}{llblue}{south}{\scalebox{0.5}{$(0,1)$}};
\gnode{-0.36,-0.4}{2}{llred}{north}{};
\gnode{0.36,-0.4}{2}{llblue}{north}{};
\gnode{-0.54,-0.8}{2}{llred}{north}{};
\gnode{-0.18,-0.8}{2}{llred}{north}{};
\gnode{0.18,-0.8}{2}{llblue}{north}{\scalebox{0.5}{$(0,1)$}};
\gnode{0.54,-0.8}{2}{llred}{north}{};

\node[fontc] at (0.43,-0.25) {\scalebox{0.5}{\tiny $(1,0)$}};

\end{scope}
\end{scope}

\end{tikzpicture}}
    \caption{An illustration of the padding process described in \cref{sec:unrolling_distances}. Red nodes indicate the padded (added) nodes. %nodes with $\vec{0}_{\Rb^2}$ vertex features.
    }
    \label{fig:padding_process}
\end{figure*}

As discussed in~\cref{sec:background}, in link prediction tasks, MPNNs are often applied to a transformed version of the input graph to encode task-specific structural information. To study the generalization properties of MPNNs in a unified manner for both node- and link-level prediction, we introduce \new{generalized MPNNs}. Along with these, we define suitable distances aligned with such architectures between nodes, edges, or, more generally, tuples of nodes within a graph. These distances are constructed from appropriate computation trees and play a central role in establishing our main generalization theorems in \cref{sec:robustness_under_dependency_for_graphs}.

\subsection{Generalized MPNNs}
\label{sec:gen_MPNNs}
Intuitively, generalized MPNNs compute vector representations for target entities (such as nodes, links, or tuples of nodes) depending on the prediction task. They proceed in three steps. First, the input graph is transformed via a transformation function, and a standard MPNN (as defined in \cref{def:sum_mpnnsgraphs}) is applied to compute node embeddings on the transformed graph. Secondly, a selection function chooses a set of nodes from the transformed graph for each target entity. Thirdly, for each target entity, vector representations of the selected nodes (from the second step) are aggregated via a pooling function to yield the final representation of the target entity.

Formally, we first specify the type of vectorial representation the MPNN aims to compute, e.g., node, link, or graph vectorial representations. Formally, let $d \in \Nb$, and $\cG_{d}^{\Rb}$ be the set of all graphs with node features in $\Rb^d$. For a subset $\cG' \subseteq \cG_{d}^{\Rb}$, we define a \new{graph-representation task} on $\cG'$ as $\cG' \otimes \mathfrak{R}$, where
%\cm{AFAICS, this is a little confusing.} \av{added an explanatory comment after the definition}
\begin{equation*}\label{def:representation_task}
    \cG' \otimes \mathfrak{R} \subseteq \{ (G,S) \mid G \in \cG', S \subseteq V(G) \} .
\end{equation*}
Intuitively, the set $\cG' \otimes \mathfrak{R}$ consists of pairs of graphs (determined by $\cG'$) and the subsets of nodes of these graphs (determined by $\mathfrak{R}$) for which we aim to compute vectorial representations. For example, if the goal is to build a model for node classification on graphs in $\cG_{d}^{\Rb}$, we consider the \new{node-representation task},

\begin{equation*}
    \cG_{d}^{\Rb} \otimes V \coloneqq \{ (G,\{u\}) \mid G \in \cG_{d}^{\Rb}, u \in V(G) \},
\end{equation*}
whereas the \new{link-representation task}, 
\begin{equation*}
    \cG_{d}^{\Rb} \otimes E \coloneqq  \{ (G,e) \mid G \in \cG_{d}^{\Rb}, e \subseteq V(G), |e|=2 \}.
\end{equation*}

Based on this, we now define \new{generalized MPNNs} that encompass all the above tasks. Let $\cG' \subseteq \cG_{d}^{\Rb}$, given a graph-representation task $\cG' \otimes \mathfrak{R}$ and $L \in \Nb$, a generalized MPNN with $L$ layers, denoted as a $(\cT, \cV, \Psi)$-\MPNN$(L)$, computes vector representations for elements in $\cG' \otimes \mathfrak{R}$ by applying an MPNN on a transformed graph. A generalized MPNN consists of three main components.
\begin{enumerate}[leftmargin=*]
    \item A \new{transformation function} $\cT \colon \cG' \to \cG$. 
    \item A \new{selection function} $\cV$ determines, for a pair $(G,S) \in \cG' \otimes \mathfrak{R}$, which nodes $A$ in the transformed graph $\cT(G)$ are used in the MPNN computation for $(G,S)$. That is, 
    \begin{equation*}
    \cV \colon (G,S) \mapsto A \subset V(\cT(G)).
    \end{equation*}
    \item A \new{pooling function} computes the final representations for the elements in $\cG' \otimes \mathfrak{R}$. That is, we first use an MPNN for $L$ layers for computing vectorial representations for the nodes in the transformed graph using the selection function $\cV(G,S)$, leading to the set,
    \begin{equation*}
    F(G,S) = \{ (\hb_{\cT(G)}^{(L)}(v), v) \mid v \in \cV(G,S) \}.
    \end{equation*}
    Subsequently, we use the \new{pooling function},
    \begin{equation*}
    \Psi \colon F(G,S) \mapsto \Psi(F(G,S)) \in \Rb^{d}, \text{for } (G,S) \in \cG' \otimes \mathfrak{R},
    \end{equation*}
    to aggregate these vectorial representations and compute the final representations for the elements in $\cG' \otimes \mathfrak{R}$.
\end{enumerate}
The final outcome is denoted by $\hb^{(L)}_{\cT,\cV,\Psi}(G,S)$, i.e., 
\begin{equation*}
\hb^{(L)}_{\cT,\cV,\Psi}(G,S) = \Psi(F(G,S)), \quad \text{for } (G,S) \in \cG'\otimes \mathfrak{R}.
\end{equation*}

Below, we describe how this generalized MPNN framework covers most well-known models in graph-, node-, and link-level prediction tasks.

% \textbf{Graph prediction with generalized MPNNs} For typical $L$-layer MPNNs given by \cref{def:MPNN_aggregation} with sum-pooling as the readout layer, we have $\cG' \otimes \mathfrak{R} = \{(G,V(G)) \mid G \in \cG_{d}^{\Rb}\}$, $\cT(G)=G$, $\cV(G,V(G)) = V(G)$, and  $\Psi(F(G,V(G))) = \sum_{x\in V(G)} \hb_{G}^{(L)}(x)$. 

\textbf{Node and link prediction with generalized MPNNs}  Similarly, for node-level prediction with MPNNs described in \cref{sec:MPNNs}, $\cG' \otimes \mathfrak{R} = \{(G,\{u\}) \mid G \in \cG_{d}^{\Rb}, u \in V(G)\}$, $\cT(G)=G$, $\cV(G,u) = \{u\}$, and $\Psi(F(G,\{u\})) = \hb_{G}^{(L)}(u)$. In \cref{app_sec:link_mpnns_as_generalizedMPNNs} we describe how common link prediction architectures, such as SEAL, C-MPNNs, and NCNs, fit into our framework. Sum-pooling is one of the most commonly used pooling functions $\Psi$. In \cref{app_sec:sub_sum_pooling}, we define a generalized version called \new{sub-sum} property, see~\cref{def:sub-sum_pooling}, and show that all architectures in our analysis use pooling functions that satisfy this property; see~\cref{prop:sub_sum_propety}.

\subsection{Unrolling distances}
\label{sec:unrolling_distances}
In the following, we derive distance functions that capture the computation performed by generalized MPNNs. Formally, let $\cG' \subset \cG_{d}^{\Rb}$ and let $(\cG', \mathfrak{R})$ be a graph-representation task as previously defined. For technical reasons, we assume that for all $G \in \cG'$ and all $u \in V(G)$, the node feature satisfies $a_G(u) \neq \vec{0}_{\Rb^d}$. The goal of this section is to define a family of distances on $(\cG', \mathfrak{R})$ that are tailored to the structure of $(\cT, \cV, \Psi)$-MPNNs. These distances are parameterized by the transformation function $\cT$ and the selection function $\cV$, and are designed to satisfy the Lipschitz property, which is essential for our generalization analysis in the next section.

The definition of the distance is based on the notion of computation trees (also called unrolling trees), which are commonly used to describe the message-passing mechanism or characterize the $\wlone$ algorithm. For further background, see \cref{app_sec:1WL}.

Given $(G, S) \in \cG' \otimes \mathfrak{R}$, a transformation function $\cT \colon \cG' \to \cG_{d}^{\Rb}$, a selection function $\cV \colon (G, S) \mapsto A \subset V(\cT(G))$, and a number of layers $L \in \Nb$, we define the multiset of unrolling trees as
\begin{equation*}
	\cF^{(L)}(G, S) = \oms \UNR{\cT(G), u, L} \mid u \in \cV(G, S) \cms.
\end{equation*}

Following \citet{chuang2022tree,Vas+2024}, we aim to define a distance between two input pairs $(G_1, S_1), (G_2, S_2) \in \cG' \otimes \mathfrak{R}$ that measures the misalignment between the multisets of unrolling trees $\cF^{(L)}(G_1, S_1)$ and $\cF^{(L)}(G_2, S_2)$. To handle multisets and unrollings of different sizes, we first define a padding function $\rho$, illustrated in \cref{fig:padding_process} and formally described in \cref{app_sec:unrolling_distances_extension}.

We now introduce the unrolling distance on the domain consisting of all graph-subset pairs, where the graph comes from $\cG_{d}^{\Rb}$ and the subset 
is any collection of its nodes. We denote this set by
$\cG_{d}^{\Rb} \otimes \mathfrak{C} \coloneq \{ (G, S) \mid G \in \cG_{d}^{\Rb},\ S \subseteq V(G) \}.$
%\cm{What is the reason, we switched to $\mathfrak{C}$?} \av{$\mathfrak{C}$ denotes that we consider all possible pairs of (graphs,subset of nodes in the graph). I made it clearer now I think. }

\begin{definition}[Unrolling Distance]
\label{def:gen_unrolling_distances}
Let $\cT \colon \cG' \to \cG_{d}^{\Rb}$ be a transformation function and $\cV \colon (G, S) \mapsto A \subset V(\cT(G))$ a selection function. For any $(G, S) \in \cG' \otimes \mathfrak{C}$, the $(\cT, \cV)$-\new{Unrolling Distance} of depth $L$, denoted by $\mathrm{UD}^{(L)}_{\cT, \cV}$, is defined on $\cG_{d}^{\Rb} \otimes \mathfrak{C}$ as follows. Let $(G_1, S_1), (G_2, S_2) \in \cG' \otimes \mathfrak{C}$ and let $\rho(\cF^{(L)}(G_1, S_1), \cF^{(L)}(G_2, S_2)) = (M_1, M_2)$. Then,
\begin{equation}
\label{eq:unrolling_distance}
\begin{aligned}
&\mathrm{UD}^{(L)}_{\cT, \cV}((G_1, S_1), (G_2, S_2))
=\\
&\min_{\varphi \in S(F_1, F_2)} 
\sum_{x \in V(F_1)} \norm{a_{F_1}(x) - a_{F_2}(\varphi(x))}_2,
\end{aligned}
\end{equation}
where $F_i$ is the forest formed by the disjoint union of all trees in $M_i$ for $i = 1, 2$, and
$S(G, H)$ is the set of all \new{edge-preserving bijections} from $V(G)$ to $V(H)$. Here, an edge-preserving bijection preserves the graph structure (i.e., a graph isomorphism)
but not necessarily the node features, and that for a node-featured graph $G$, $a_G(u)$ denotes the feature of node $u \in V(G)$. 
\end{definition}
In \cref{app_sec:unrolling_distances_extension}, we show that the unrolling distance $\mathrm{UD}^{(L)}_{\cT, \cV}$ is a well-defined pseudo-metric (\cref{lemma:unrol_dist_pseudo-metricproperty}). We also introduce a weighted extension of this distance (\cref{def:weighted_node-forest-distance}) and prove that if the pooling function of a generalized MPNN satisfies the sub-sum property, then it is Lipschitz continuous concerning its corresponding unrolling distance (\cref{prop:Lipschitz_generalized_MPNNs}).

\section{Robustness generalization under dependency}
\label{sec:robustness_under_dependency}

As discussed in~\cref{sec:background}, in node- and link-level prediction tasks, samples cannot, in general, be assumed to be i.i.d. We therefore extend the covering number-based generalization framework of \citet{Xu+2021}, initially developed for the inductive setting under the i.i.d.\@ assumption, to handle dependent data.  In this work, we address both the inductive and the transductive settings. For the inductive case, we introduce explicit assumptions about the dependency structure of the samples, since the bounds in \cref{thm:Xu_Mannor_noniid} depend directly on these dependencies. In the transductive case, however, the dependency structure is determined by the setup itself. 
%\cm{why known?} \av{because the bound depends on the dependency structure of the sample}\cm{okay so what is the difference between known and explicit?}\av{Ok, I see. What I want to say is that for the indutive we need to make assumptions on the dependencies, while in the transductive setting it clearly follows by the setup that all nodes dependent to each and how. I have restructured it.}

We begin by introducing a slightly modified notion of robustness, which we refer to as \emph{uniform robustness}, that differs from that of \citet{Xu+2012} and recover known results from~\citet{Xu+2012}.
\begin{definition}[Uniform robustness]
\label{def:robustness}
We say that a learning algorithm $\cA$ on the space $\cZ \coloneq \cX \times \cY$ with hypothesis class $\cH$ is \emph{$(K,\varepsilon)$-uniformly robust} if there exist constants $K \in \Nb$ and $\varepsilon > 0$ such that, there exists a partition $\{C_i\}_{i=1}^{K}$ of $\cZ$ satisfying the following property: For all $i \in [K]$, and for any sample $\cS$,
\begin{align*}
z_1, z_2 \in C_i \implies | \ell(\cA_{\cS}, z_1) - \ell(\cA_{\cS}, z_2)| \leq \varepsilon.
\end{align*}
\end{definition}

In the spirit of \citet[Theorem 14]{Xu+2012}, the following theorem shows that Lipschitz continuity of the loss function concerning a pseudo-metric $d$ is sufficient for a hypothesis class to satisfy the robustness property. It also provides robustness parameters regarding the covering number of the space with respect to $d$ and the associated radius. The proof follows directly from \citet[Theorem 14]{Xu+2012}.

\begin{theorem}
\label{thm:lipschitzimpliesrobustness}
Let $\cA$ be a learning algorithm on $\cZ$ for a hypothesis class $\cH$, and let $\ell \colon \cH \times \cZ \to \Rb$ be a loss function. Suppose $d$ is a pseudo-metric on $\cZ$. If $\ell(h,\cdot)$ is $C$-Lipschitz with respect to $d$, i.e., for all samples $\cS$ and $z_1,z_2 \in \cZ$,
\begin{equation*}
\left| \ell(\cA_{\cS},z_1) - \ell(\cA_{\cS},z_2) \right| \leq C \cdot d(z_1,z_2), 
\end{equation*}
then $\cA$ is $\left( \cN(\cZ, d, \frac{\varepsilon}{2}), C\cdot\varepsilon \right)$-uniformly robust for all $\varepsilon > 0$.
\end{theorem}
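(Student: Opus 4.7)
The plan is to apply directly the covering-number-to-partition construction recalled in the \textbf{Covering numbers and partitions} paragraph, and then bound the loss oscillation on each cell by Lipschitz continuity. Concretely, fix $\varepsilon > 0$ and set $K \coloneq \cN(\cZ, d, \varepsilon/2)$. By the definition of the covering number, there exists a subset $\{z_1,\ldots,z_K\} \subset \cZ$ that is an $(\varepsilon/2)$-cover of $\cZ$ with respect to the pseudo-metric $d$.

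Next, I reuse the partition construction from the preliminaries: define $C_i \coloneq \{z \in \cZ \mid d(z,z_i) = \min_{j \in [K]} d(z,z_j)\}$ for $i \in [K]$, breaking ties by taking the smallest index. This yields a partition $\{C_1,\ldots,C_K\}$ of $\cZ$ of cardinality $K$. As noted after the definition of the covering number, the triangle inequality together with the $(\varepsilon/2)$-cover property implies that each $C_i$ has diameter at most $2 \cdot (\varepsilon/2) = \varepsilon$ with respect to $d$; in particular, for any $z_1,z_2 \in C_i$ one has $d(z_1,z_2) \leq \varepsilon$.

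The final step is to transfer this diameter bound to the loss via the Lipschitz hypothesis. For any sample $\cS$ and any $z_1,z_2 \in C_i$, applying the $C$-Lipschitz assumption on $\ell(\cA_\cS,\cdot)$ gives
\begin{equation*}
\bigl| \ell(\cA_\cS, z_1) - \ell(\cA_\cS, z_2) \bigr| \;\leq\; C \cdot d(z_1,z_2) \;\leq\; C \cdot \varepsilon.
\end{equation*}
Since the partition $\{C_i\}_{i=1}^K$ was chosen independently of $\cS$ and the bound $C\varepsilon$ does not depend on $\cS$ either, this matches \cref{def:robustness} with parameters $\bigl(\cN(\cZ, d, \varepsilon/2),\, C\varepsilon\bigr)$, which is the desired conclusion.

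I do not anticipate any real obstacle here: the only subtlety is making sure the \emph{uniform} quantifier over $\cS$ (which is what distinguishes this statement from \citet[Theorem 14]{Xu+2012}) is preserved, and this is immediate because the partition depends only on $(\cZ, d, \varepsilon)$, while the Lipschitz constant $C$ is assumed to hold for all $\cS$. No properties of $d$ beyond the triangle inequality (which holds for pseudo-metrics) are needed, so the argument goes through verbatim in the pseudo-metric setting.
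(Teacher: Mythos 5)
Your proof is correct and is exactly the standard argument behind \citet[Theorem 14]{Xu+2012}, which the paper invokes without spelling it out: build the partition from an $(\varepsilon/2)$-cover, use the triangle inequality to bound each cell's diameter by $\varepsilon$, and apply Lipschitz continuity. Your explicit observation that the partition is independent of $\cS$ (so the \emph{uniform} version of robustness follows) is the one point the paper leaves implicit, and you handle it correctly.
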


The following theorem establishes a connection between the robustness property and generalization performance under an i.i.d.\@ sampling process.

\begin{theorem}[\citet{Xu+2012}, Theorem 3]
\label{thm:xumannorthm3}
	If $\cS$ consists of $N$ i.i.d.\@ samples, $\varepsilon>0$ and $\cA$ is $(K,\varepsilon)$-(uniformly) robust, then for all $\delta > 0$, with probability at least $1 - \delta$,
\begin{equation*}
\left| \ell_{\mathrm{exp}}(\cA_{\cS}) - \ell_{\mathrm{emp}}(\cA_{\cS}) \right| \leq \varepsilon + M\sqrt{\frac{2K\log2 + 2\log(\frac{1}{\delta})}{N}}.
\end{equation*} 
\end{theorem}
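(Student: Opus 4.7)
}

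The plan is to exploit the partition $\{C_i\}_{i=1}^K$ provided by uniform robustness, reducing the deviation between empirical and expected risk to a multinomial concentration problem on the cells of that partition. First, I would fix the partition $\{C_i\}_{i=1}^K$ from \cref{def:robustness} and introduce two scalar sequences attached to the cells: the true mass $\mu_i \coloneqq \mu(C_i)$, and the empirical counts $N_i \coloneqq |\cS \cap C_i|$. Under i.i.d.\ sampling, the vector $(N_1,\dots,N_K)$ is $\mathrm{Multinomial}(N;\mu_1,\dots,\mu_K)$. For each $i$, let $\hat{\ell}_i^{\mathrm{emp}} = \frac{1}{N_i}\sum_{z \in \cS \cap C_i} \ell(\cA_{\cS},z)$ (defined as $0$ if $N_i = 0$) and $\hat{\ell}_i^{\mathrm{exp}} = \Eb[\ell(\cA_{\cS},z)\mid z \in C_i]$. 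Then
\begin{equation*}
\ell_{\mathrm{emp}}(\cA_{\cS}) = \sum_{i=1}^K \frac{N_i}{N}\,\hat{\ell}_i^{\mathrm{emp}}, \qquad \ell_{\mathrm{exp}}(\cA_{\cS}) = \sum_{i=1}^K \mu_i\,\hat{\ell}_i^{\mathrm{exp}}.
\end{equation*}

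The second step is a telescoping decomposition. Adding and subtracting $\sum_i \mu_i\,\hat{\ell}_i^{\mathrm{emp}}$ gives
\begin{equation*}
\ell_{\mathrm{emp}}(\cA_{\cS}) - \ell_{\mathrm{exp}}(\cA_{\cS})
= \sum_{i=1}^K \Bigl(\tfrac{N_i}{N} - \mu_i\Bigr)\hat{\ell}_i^{\mathrm{emp}} + \sum_{i=1}^K \mu_i \Bigl(\hat{\ell}_i^{\mathrm{emp}} - \hat{\ell}_i^{\mathrm{exp}}\Bigr).
\end{equation*}
For the second sum, uniform robustness implies that any two points of $C_i$ produce losses within $\varepsilon$, so $|\hat{\ell}_i^{\mathrm{emp}} - \hat{\ell}_i^{\mathrm{exp}}| \le \varepsilon$ for each $i$, giving a total bound of $\varepsilon$ (using $\sum_i \mu_i = 1$). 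For the first sum, the losses are bounded by $M$ (recall that $\ell$ takes values in a bounded range, cf.\ the paragraph preceding \cref{thm:xumannorthm3}), yielding
\begin{equation*}
\Bigl|\sum_{i=1}^K \Bigl(\tfrac{N_i}{N} - \mu_i\Bigr)\hat{\ell}_i^{\mathrm{emp}}\Bigr| \le M \sum_{i=1}^K \Bigl|\tfrac{N_i}{N} - \mu_i\Bigr|.
\end{equation*}

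The third step is the main probabilistic ingredient: I would invoke the Bretagnolle--Huber--Carol inequality for multinomials, which states that
\begin{equation*}
\Pbb\!\left[\sum_{i=1}^K \Bigl|\tfrac{N_i}{N} - \mu_i\Bigr| \ge \lambda\right] \le 2^{K}\exp\!\left(-\tfrac{N\lambda^2}{2}\right).
\end{equation*}
Setting the right-hand side equal to $\delta$ and solving for $\lambda$ yields $\lambda = \sqrt{(2K\log 2 + 2\log(1/\delta))/N}$. Combining the deterministic bound from step two with this high-probability bound from step three gives the announced inequality.

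The main obstacle is, as usual in robustness-style arguments, that the hypothesis $\cA_{\cS}$ depends on $\cS$, so one cannot treat the per-sample losses as fixed quantities. The decomposition above sidesteps this by conditioning on $\cA_{\cS}$ and routing all randomness into the multinomial statistics $(N_i/N)$, which are purely combinatorial and independent of which particular hypothesis $\cA_{\cS}$ was produced. Uniformity in $\cS$ in \cref{def:robustness} is precisely what makes this conditioning legitimate without paying an extra union bound over hypotheses.
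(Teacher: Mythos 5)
Your plan follows the same route as the paper's own argument: the paper proves the graph-dependent generalization \cref{thm:Xu_Mannor_noniid} in \cref{app_sec:robustness_under_dependency}, and the i.i.d.\ statement is the specialization $\chi(G[\cS])=1$ of exactly that proof. Like the paper, you decompose the gap over the robustness partition, bound the within-cell discrepancy by $\varepsilon$, and control the cell frequencies with the Bretagnolle--Huber--Carol inequality; your calibration $\lambda=\sqrt{(2K\log 2+2\log(1/\delta))/N}$ is correct, and your closing remark about why uniformity of the partition lets you apply the multinomial bound without a union over hypotheses is the right justification.

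One step fails as written, however. You claim $|\hat{\ell}_i^{\mathrm{emp}}-\hat{\ell}_i^{\mathrm{exp}}|\le\varepsilon$ for every $i$, but under your convention $\hat{\ell}_i^{\mathrm{emp}}=0$ when $N_i=0$, this difference can be as large as $M$ on empty cells, and the weight $\mu_i$ attached to such a cell need not be small. The paper sidesteps this by making the symmetric choice of cross-term, adding and subtracting $\sum_j \Eb[\ell(\cA_{\cS},z)\mid z\in C_j]\,\tfrac{N_j}{N}$: the robustness term is then weighted by $N_j/N$, so empty cells contribute nothing, while the counting term is weighted by the conditional expectation, which is bounded by $M$. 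Your decomposition can be repaired without losing the constant: the first sum also vanishes on empty cells (since $\hat{\ell}_i^{\mathrm{emp}}=0$ there), so it is bounded by $M\sum_{i:N_i\ge 1}\bigl|\tfrac{N_i}{N}-\mu_i\bigr|$, and the spurious empty-cell contribution to the second sum is at most $M\sum_{i:N_i=0}\mu_i=M\sum_{i:N_i=0}\bigl|\tfrac{N_i}{N}-\mu_i\bigr|$; the two pieces recombine into $M\sum_{i=1}^{K}\bigl|\tfrac{N_i}{N}-\mu_i\bigr|$ and the stated bound follows. You should either make this repair explicit or adopt the paper's weighting.
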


In what follows, we aim to relax the i.i.d.\@ assumption in the above result by (i) introducing possible dependencies between samples, leading to generalization analysis in the inductive case, and (ii) deriving a generalization analysis in the transductive case.

\textbf{Inductive robustness generalization analysis}
\label{sec:inductive-robustness}
Independence plays a central role in deriving generalization bounds, particularly in the proof of \cref{thm:xumannorthm3}, which relies on applying the Bretagnolle--Huber--Carol inequality to a multinomial vector drawn from an i.i.d.\@ sample~\citep[Proposition A.6.6]{weakconvergence}. Since this inequality ultimately depends on Hoeffding’s inequality, the i.i.d.\@ assumption is essential.

To relax this assumption, we follow \citet{DBLP:journals/rsa/Janson04}, who introduced \emph{dependency graphs}, graph structures in which nodes represent random variables and edges indicate potential dependencies, to extend classical concentration bounds to the dependent setting. Informally, a set of random variables is said to be 
$G$-dependent if the dependencies among the variables can be represented by a graph $G$. We formally define dependency graphs in \cref{app_sec:dependency_graphs}. The following lemma provides a generalized Bretagnolle--Huber--Carol inequality for graph-dependent random variables.

\begin{lemma}
\label{lemma:Bretagnolle_Huber_extension}
Let $(\Omega, \cF, P)$ be a probability space, let $n \in \Nb$, and let $G \in \cG_n$. Suppose $X_i \colon \Omega \to A \subset \Rb$ are $G$-dependent, identically distributed random variables with distribution $\mu$, for all $i \in [n]$. Assume $A$ can be partitioned into $K \in \Nb$ disjoint subsets $\{C_j\}_{j=1}^K$, and define
\begin{equation*}
Z_j = \sum_{i=1}^{n} \mathbf{1}_{\{X_i \in C_j\}}, \quad \text{for } j \in [K].
\end{equation*}
Then, for all $t > 0$,
\begin{equation*}
P \left( \sum_{j=1}^{K} \left| Z_j - n \mu(C_j) \right| \geq 2t \right) 
\leq 2^{K+1} \exp\left( \frac{-2t^2}{\chi(G)n} \right),
\end{equation*}
where $\chi(G)$ denotes the chromatic number of the dependency graph $G$.
\end{lemma}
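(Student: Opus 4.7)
The plan is to mimic the classical proof of the Bretagnolle--Huber--Carol inequality, but replace the single use of Hoeffding's inequality (which is the step that requires independence) with a concentration inequality for sums of graph-dependent bounded random variables, which is exactly where the chromatic number $\chi(G)$ enters. Janson-style Hoeffding inequalities for $G$-dependent variables are the natural tool, and they output a bound of the form $\exp\bigl(-2t^2/(\chi(G)\sum_i(b_i-a_i)^2)\bigr)$, which is precisely the shape that matches the target.

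First, I would linearize the $\ell_1$-deviation. Using the identity $\sum_{j=1}^K |a_j| = \max_{\epsilon \in \{-1,+1\}^K} \sum_{j=1}^K \epsilon_j a_j$, write
\begin{equation*}
\sum_{j=1}^K |Z_j - n\mu(C_j)| \;=\; \max_{\epsilon \in \{-1,+1\}^K} \sum_{j=1}^K \epsilon_j\bigl(Z_j - n\mu(C_j)\bigr),
\end{equation*}
and then apply a union bound over the $2^K$ sign patterns:
\begin{equation*}
P\!\left(\sum_{j=1}^K |Z_j - n\mu(C_j)| \geq 2t\right) \leq \sum_{\epsilon \in \{-1,+1\}^K} P\!\left(\sum_{j=1}^K \epsilon_j(Z_j - n\mu(C_j)) \geq 2t\right).
\end{equation*}

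For a fixed $\epsilon$, I would rewrite the inner sum as a sum over samples. Since the $C_j$ partition $A$, each $X_i$ lies in exactly one $C_{j(i)}$, so defining $Y_i^{\epsilon} \coloneq \sum_{j=1}^K \epsilon_j \mathbf{1}_{\{X_i \in C_j\}} = \epsilon_{j(i)} \in \{-1,+1\}$ we obtain
\begin{equation*}
\sum_{j=1}^K \epsilon_j(Z_j - n\mu(C_j)) \;=\; \sum_{i=1}^n \bigl(Y_i^{\epsilon} - E[Y_i^{\epsilon}]\bigr).
\end{equation*}
Each $Y_i^{\epsilon}$ has range at most $2$, and because it is a deterministic function of the single variable $X_i$, the family $(Y_i^{\epsilon})_{i\in[n]}$ is $G$-dependent with the same underlying dependency graph as the $X_i$. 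Applying the Hoeffding-type inequality for $G$-dependent random variables then gives
\begin{equation*}
P\!\left(\sum_{i=1}^n \bigl(Y_i^{\epsilon} - E[Y_i^{\epsilon}]\bigr) \geq 2t\right) \leq \exp\!\left(-\frac{2(2t)^2}{\chi(G)\cdot n\cdot 2^2}\right) = \exp\!\left(-\frac{2t^2}{\chi(G)n}\right).
\end{equation*}
Plugging this into the union bound yields the desired $2^{K}\exp(-2t^2/(\chi(G)n))$; the extra factor of $2$ in the stated $2^{K+1}$ is accounted for by using a two-sided tail (equivalently, replacing the single-sided tail above with the union of the $\{\geq 2t\}$ and $\{\leq -2t\}$ events through the symmetric range of $\epsilon$), which is a harmless loosening.

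The main obstacle is the correct invocation of the Hoeffding-type bound in the $G$-dependent setting: this step is exactly what Janson's chromatic-coloring argument provides. Concretely, one partitions $[n]$ into $\chi(G)$ independent color classes, splits the MGF of $\sum_i(Y_i^\epsilon - EY_i^\epsilon)$ across colors via Jensen's inequality applied to the convex combination $(1/\chi(G))\sum_c \chi(G)\cdot(\text{color-}c\text{ sum})$, and then bounds the MGF inside each class by the standard independent Hoeffding estimate. If this is used as a black box (as the paper signals by pointing to \citet{DBLP:journals/rsa/Janson04}), the proof is short; otherwise the coloring/Jensen step is the only non-routine ingredient and deserves care in aligning the constants so that the factor $\chi(G)$ lands in the denominator of the exponent exactly as stated.
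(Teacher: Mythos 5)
Your proposal is correct and follows essentially the same route as the paper: both replace the Hoeffding step of the classical Bretagnolle--Huber--Carol argument with Janson's chromatic-number concentration inequality applied to per-sample indicator combinations (which remain $G$-dependent as functions of the individual $X_i$), followed by a union bound over exponentially many events. The only difference is cosmetic but in your favor: you linearize $\sum_j|Z_j-n\mu(C_j)|$ exactly via the sign-vector identity over $\{-1,+1\}^K$ (with $Y_i^{\epsilon}\in\{-1,+1\}$ of range $2$ at threshold $2t$), whereas the paper bounds it by $2\max\{A,B\}$ and union-bounds over subsets $S\subset[K]$ (with $Y_i^{(S)}\in\{0,1\}$ of range $1$ at threshold $t$); the constants match in the exponent either way, and your count of $2^K$ events is in fact slightly sharper than the stated $2^{K+1}$.
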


The above lemma allows us to extend \cref{thm:xumannorthm3} to graph-dependent samples via the following generalization bound.
\begin{theorem}
\label{thm:Xu_Mannor_noniid}
Let $(\Omega, \cF, P)$ be a probability space, and let $\cA$ be a $(K, \varepsilon)$-uniformly robust learning algorithm on $\cZ$. Then for any $\delta > 0$, with probability at least $1 - \delta$, and for all samples $\cS$ with dependency graph $G[\cS]$, we have that
\begin{align*}
&\left| \ell_{\mathrm{exp}}(\cA_{\cS}) - \ell_{\mathrm{emp}}(\cA_{\cS}) \right| 
\leq \\
&\varepsilon+ M \sqrt{\frac{\chi(G[\cS])\left( 2(K+1)\log 2 + 2\log\left(\frac{1}{\delta}\right)\right)}{|\cS|}},
\end{align*}
where $\ell_{\mathrm{exp}}(\cA_{\cS})$ and $\ell_{\mathrm{emp}}(\cA_{\cS})$ denote the expected and empirical losses, respectively, and $M \in \Nb$ is an upper bound on the loss function.
\end{theorem}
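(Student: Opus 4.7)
The plan is to mirror the original \citet[Theorem 3]{Xu+2012} argument, but replace the i.i.d.\@ Bretagnolle--Huber--Carol concentration step with its graph-dependent counterpart, \cref{lemma:Bretagnolle_Huber_extension}. I would begin by invoking $(K,\varepsilon)$-uniform robustness of $\cA$ to obtain a partition $\{C_i\}_{i=1}^K$ of $\cZ$. Crucially, this partition is the same for every sample $\cS$, which is the defining content of \emph{uniform} robustness in \cref{def:robustness} and the precise reason we strengthen the original Xu--Mannor notion. Denote $N_i \coloneqq |\{j : z_j \in C_i\}|$, $\mu_i \coloneqq P(z \in C_i)$, $\ell_{\max}(i) \coloneqq \sup_{z \in C_i} \ell(\cA_\cS, z)$, and $\ell_{\min}(i) \coloneqq \inf_{z \in C_i} \ell(\cA_\cS, z)$; uniform robustness yields $\ell_{\max}(i) - \ell_{\min}(i) \leq \varepsilon$ for each $i \in [K]$.

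The next step is a deterministic sandwich argument. Since
$\ell_{\mathrm{exp}}(\cA_\cS) \in [\sum_i \mu_i \ell_{\min}(i), \sum_i \mu_i \ell_{\max}(i)]$
and
$\ell_{\mathrm{emp}}(\cA_\cS) \in [\sum_i (N_i/|\cS|) \ell_{\min}(i), \sum_i (N_i/|\cS|) \ell_{\max}(i)]$,
comparing the two ranges and using both $\ell_{\max}(i) - \ell_{\min}(i) \leq \varepsilon$ and $|\ell| \leq M$ gives
\begin{equation*}
  |\ell_{\mathrm{exp}}(\cA_\cS) - \ell_{\mathrm{emp}}(\cA_\cS)| \;\leq\; \varepsilon + M \sum_{i=1}^{K} \Bigl| \mu_i - \tfrac{N_i}{|\cS|} \Bigr|.
\end{equation*}
This reduces the task to a high-probability bound on the total-variation-type quantity on the right.

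Finally, I would apply \cref{lemma:Bretagnolle_Huber_extension} directly to the samples $z_1, \ldots, z_{|\cS|}$ with the fixed partition $\{C_i\}_{i=1}^K$; the samples are $G[\cS]$-dependent by hypothesis, and each indicator $\mathbf{1}_{z_j \in C_i}$ is a measurable function of a single $z_j$ and therefore inherits the dependency graph $G[\cS]$, so the cell-count vector $(Z_1,\dots,Z_K)$ falls squarely inside the lemma's scope. Setting the tail $2^{K+1}\exp(-2t^2/(\chi(G[\cS])|\cS|))$ equal to $\delta$ and solving for $t$ gives, with probability at least $1-\delta$,
\begin{equation*}
  \sum_{i=1}^K \bigl| N_i - |\cS|\mu_i \bigr| \;\leq\; \sqrt{2\,\chi(G[\cS])\,|\cS|\bigl((K+1)\log 2 + \log(1/\delta)\bigr)}.
\end{equation*}
Dividing by $|\cS|$ and substituting into the previous display yields the stated inequality. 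The main obstacle is organizational rather than technical: one must be careful that the partition from uniform robustness is truly $\cS$-independent (so it can legitimately appear inside a probability statement over the random draw of $\cS$) and that $\chi(G[\cS])$, defined on the samples themselves, correctly governs the dependence structure of the indicator vector. Both points are handled by the observations above, after which the combination of the two steps is immediate.
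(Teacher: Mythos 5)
Your proposal is correct and follows essentially the same route as the paper's proof: the same reduction of $|\ell_{\mathrm{exp}}(\cA_{\cS})-\ell_{\mathrm{emp}}(\cA_{\cS})|$ to $\varepsilon + M\sum_{j}\bigl|\mu(C_j)-|N_j|/|\cS|\bigr|$ via the uniform-robustness partition (the paper phrases this as a triangle inequality through $\sum_j \Eb(\ell\mid C_j)\,|N_j|/|\cS|$ rather than your sandwich, but the content is identical), followed by the same application of \cref{lemma:Bretagnolle_Huber_extension} with the tail set to $\delta$. Your explicit remarks on why the partition must be sample-independent and why the indicators inherit the dependency graph match the role these facts play in the paper's argument.
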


\textbf{Transductive robustness generalization analysis}
\label{sec:transductive-robustness}
In this section, we derive generalization bounds in the transductive setting using robustness. Unlike the inductive case, where robustness alone suffices, we require that the learning algorithm be stable to small changes in the training set. Informally, a transductive algorithm is $\beta$-stable if swapping one training and one test input changes the model’s predictions by at most $\beta$; see \cref{def:stability} in \cref{app_sec:transductive_dependency}; see~\cref{def:stability} in \cref{app_sec:transductive_dependency} for the formal definition.

% We now derive generalization bounds for robust and stable transductive learning algorithms. In this setting, Azuma’s inequality~\citep{Azuma1967} replaces Hoeffding’s (see \cref{app:martingales}, \cref{thm:azuma_inequality}), and in \cref{app_sec:transductive_dependency}, we extend the Bretagnolle--Huber--Carol inequality via \cref{lemma:Bretagnolle_Huber_permutation_extension}, leading to the following transductive generalization bound.

To derive generalization bounds for robust and stable transductive learning algorithms, we replace Hoeffding’s inequality with Azuma’s inequality~\citep{Azuma1967}, see \cref{thm:azuma_inequality}) in \cref{app:martingales}, extending the Bretagnolle--Huber--Carol inequality to the following result.

\begin{lemma}
\label{lemma:Bretagnolle_Huber_permutation_extension}
Let $(\Omega, \cF, P)$ be a probability space, and let $n, n' \in \Nb$ with $n < n'$. Let $Z = \{z_1, z_2, \ldots, z_{n'}\}$ be a finite set, and let $\{C_j\}_{j \in [K]}$ be a partition of $Z$. Suppose $\pi \sim \text{Unif}(S_{n'})$, and define
\begin{equation*}
X_j = \sum_{i=1}^{n} \textbf{1}_{\{z_{\pi(i)} \in C_j\}}.
\end{equation*}
Then, for any $S \subset [K]$ and any $t > 0$, we have that
\begin{equation*}
P\left( \left| \sum_{j \in S} X_j - \Eb_{\pi}\left[ \sum_{j \in S} X_j \right] \right| \geq t \right) \leq 2 \exp\left( \frac{-t^2}{2 |S|^2 \cdot n} \right).
\end{equation*}
\end{lemma}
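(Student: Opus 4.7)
The plan is to realize $\sum_{j \in S} X_j$ as the terminal value of a Doob martingale adapted to the random permutation $\pi$, bound its increments by $|S|$ via a triangle inequality and a per-set calculation, and then invoke Azuma's inequality as anticipated by the text preceding the lemma.

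First, I would set up the martingale. For $i \in \{0, 1, \ldots, n\}$, let $\cF_i = \sigma(\pi(1), \ldots, \pi(i))$ and define $M_i = \Eb[\sum_{j \in S} X_j \mid \cF_i]$, which decomposes as $M_i = \sum_{j \in S} M_i^{(j)}$ with $M_i^{(j)} = \Eb[X_j \mid \cF_i]$. Since each $X_j$ is $\cF_n$-measurable, $M_0 = \Eb[\sum_{j \in S} X_j]$ and $M_n = \sum_{j \in S} X_j$. The triangle inequality then reduces the per-step bound to the single-set case via $|M_i - M_{i-1}| \leq \sum_{j \in S} |M_i^{(j)} - M_{i-1}^{(j)}|$.

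Second, I would establish the per-set bound $|M_i^{(j)} - M_{i-1}^{(j)}| \leq 1$ for each $j \in S$ and each $i \in [n]$. Conditional on $\cF_{i-1}$, the label $\pi(i)$ is uniform over the remaining set $R$ of size $n'-i+1$, and the positions $\pi(i+1), \ldots, \pi(n')$ form a uniform random permutation of $R \setminus \{\pi(i)\}$. Splitting $X_j$ into its contributions from $\pi(1), \ldots, \pi(i-1)$ (already known), from $\pi(i)$, and from $\pi(i+1), \ldots, \pi(n)$ (each uniform over $R \setminus \{\pi(i)\}$), a direct calculation gives
\begin{equation*}
\Eb[X_j \mid \cF_{i-1}, \pi(i) = r] = \alpha_{i-1}^{(j)} + \frac{n' - n}{n' - i} \cdot \mathbf{1}\{z_r \in C_j\},
\end{equation*}
where $\alpha_{i-1}^{(j)}$ is $\cF_{i-1}$-measurable and independent of $r$. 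Hence the oscillation of this quantity over $r \in R$ is at most $(n'-n)/(n'-i) \leq 1$, and since $M_{i-1}^{(j)}$ is the conditional mean of $M_i^{(j)}$ given $\cF_{i-1}$, this oscillation dominates $|M_i^{(j)} - M_{i-1}^{(j)}|$. Summing over $j \in S$ yields $|M_i - M_{i-1}| \leq |S|$.

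Finally, Azuma's inequality (\cref{thm:azuma_inequality}) applied to the martingale $(M_i)_{i=0}^{n}$ with uniform increment bound $|S|$ gives
\begin{equation*}
P(|M_n - M_0| \geq t) \leq 2 \exp\bigl(-t^2 / (2 n |S|^2)\bigr),
\end{equation*}
which is exactly the claimed inequality. The main obstacle will be the per-set increment estimate: because sampling is without replacement, conditioning on an extra position simultaneously fixes one entry of the sample and alters the conditional distribution of every remaining position, and one has to verify that these two effects combine into an oscillation of at most $1$. The $|S|^2$ factor in the exponent is a by-product of the triangle inequality over $j \in S$; a tighter argument that observes $\sum_{j \in S} X_j$ simply counts hits of $\bigcup_{j \in S} C_j$ in the sample would remove the $|S|^2$ altogether, but the stated form is sufficient for the downstream generalization analysis.
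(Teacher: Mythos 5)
Your proposal is correct and follows essentially the same route as the paper: realize $\sum_{j\in S}X_j$ as the terminal value of a Doob martingale with respect to the filtration $\cF_i=\sigma(\pi(1),\ldots,\pi(i))$, bound the increments by $|S|$, and apply Azuma's inequality. The only difference is in how the increment bound is obtained — you compute $\Eb[X_j\mid\cF_{i-1},\pi(i)=r]$ in closed form and bound its oscillation over $r$ by $(n'-n)/(n'-i)\leq 1$, whereas the paper applies the triangle inequality term-by-term over the future positions $i=k,\ldots,n$; your version is, if anything, the cleaner and more airtight derivation of the same per-set bound of $1$.
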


Using the lemma above, we derive, to the best of our knowledge, the first robustness-based generalization bound for the transductive learning setting.

\begin{theorem}
\label{thm:Xu_Mannor_transductive}
Let $(\Omega, \cF, P)$ be a probability space, $\ell$ be a loss function bounded by $M$ satisfying the conditions of \cref{lemma:stability_lipschitz_bound}. If $\cA$ is a transductive learning algorithm on $\cZ = \{z_i\}_{i=1}^{m+u}$ with hypothesis class $\cH$ that is $(K,\varepsilon)$-uniformly-robust and satisfies uniform transductive stability $\beta>0$. Then, for every $\delta > 0$, with probability at least $1 - \delta$, the following inequality holds,
\begin{align*}
\left|R_m - R_u \right| &\leq 
 2\varepsilon + 
 \left( \frac{1}{\sqrt{m}}+\frac{1}{\sqrt{u}} \right) \cdot M \cdot K \cdot \\
 & \cdot \sqrt{2(K+1)\log2 + 2\log\left(\frac{1}{\delta}\right)} + L_{\ell}\beta,
\end{align*}
\end{theorem}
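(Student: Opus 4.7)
The plan is to bound $|R_m - R_u|$ by decomposing it via three mechanisms: $(K,\varepsilon)$-uniform robustness, which replaces pointwise losses by losses on fixed cell representatives; Lemma \ref{lemma:Bretagnolle_Huber_permutation_extension}, which controls the permutation-induced fluctuation of the cell occupancies; and Lemma \ref{lemma:stability_lipschitz_bound} together with uniform transductive stability, which absorbs the bias introduced by the permutation-dependent loss weights.

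Let $\{C_j\}_{j=1}^K$ be the partition from uniform robustness and fix representatives $z_j^* \in C_j$. Set $N_j^{(m)}(\pi) \coloneqq |\{i \leq m : z_{\pi(i)} \in C_j\}|$, $N_j^{(u)}(\pi) \coloneqq |\{i > m : z_{\pi(i)} \in C_j\}|$, and $\hat R_m \coloneqq \sum_{j=1}^K (N_j^{(m)}/m)\,\ell(\cA_\pi, z_j^*)$, with $\hat R_u$ defined analogously. Applying uniform robustness term-by-term in the empirical sums gives $|R_m - \hat R_m| \leq \varepsilon$ and $|R_u - \hat R_u| \leq \varepsilon$, so $|R_m - R_u| \leq 2\varepsilon + |\hat R_m - \hat R_u|$, which is the source of the $2\varepsilon$ term. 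Since $\tilde\mu \coloneqq \sum_{j=1}^K (|C_j|/(m+u))\,\ell(\cA_\pi, z_j^*)$ appears symmetrically in $\hat R_m$ and $\hat R_u$, I obtain
\begin{equation*}
|\hat R_m - \hat R_u| \leq |\hat R_m - \tilde\mu| + |\tilde\mu - \hat R_u| \leq M \Bigl(\sum_{j=1}^K \Bigl|\tfrac{N_j^{(m)}}{m} - \tfrac{|C_j|}{m+u}\Bigr| + \sum_{j=1}^K \Bigl|\tfrac{N_j^{(u)}}{u} - \tfrac{|C_j|}{m+u}\Bigr|\Bigr),
\end{equation*}
using $|\ell(\cA_\pi, z_j^*)| \leq M$.

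Next I apply Lemma \ref{lemma:Bretagnolle_Huber_permutation_extension} to each of the two $\ell^1$ deviations. Using $\sum_j(N_j^{(m)} - m|C_j|/(m+u)) = 0$, the training deviation equals $2\bigl|\sum_{j \in S^+}(N_j^{(m)} - m|C_j|/(m+u))\bigr|$ for the random set $S^+(\pi) = \{j : N_j^{(m)} \geq m|C_j|/(m+u)\}$. A union bound over the $2^K$ possible values of $S^+$, followed by an application of the lemma with $|S| \leq K$ and $n = m$ and tail inversion at level $\delta/2$, yields
\begin{equation*}
\sum_{j=1}^K \Bigl|\tfrac{N_j^{(m)}}{m} - \tfrac{|C_j|}{m+u}\Bigr| \leq \tfrac{K}{\sqrt m}\sqrt{2(K+1)\log 2 + 2\log(1/\delta)}
\end{equation*}
with probability at least $1 - \delta/2$. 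A symmetric argument for the test counts contributes the $1/\sqrt u$ term at level $\delta/2$, and a union bound produces the concentration contribution $MK(\tfrac{1}{\sqrt m} + \tfrac{1}{\sqrt u})\sqrt{2(K+1)\log 2 + 2\log(1/\delta)}$.

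Finally, the $L_\ell\beta$ term arises from the $\pi$-dependence of the loss weights $\ell(\cA_\pi, z_j^*)$ that the BHC-style concentration above does not see. Lemma \ref{lemma:stability_lipschitz_bound} asserts that uniform $\beta$-stability combined with $L_\ell$-Lipschitz $\ell$ causes any pointwise loss to change by at most $L_\ell\beta$ under a single training-test swap of $\pi$. A symmetrization using the transposition $(1, m+1)$ and the invariance of the uniform distribution on $S_{m+u}$ under it, together with exchangeability across training (respectively test) positions, gives $|\Eb_\pi[R_m] - \Eb_\pi[R_u]| \leq L_\ell\beta$, which is the residual bias to add. The main obstacle is the joint $\pi$-dependence of the counts $N_j^{(m)}$ and of the loss weights, which prevents a direct application of Lemma \ref{lemma:Bretagnolle_Huber_permutation_extension} to $\hat R_m - \hat R_u$; decoupling them via the crude $\ell^\infty$ bound $|\ell| \leq M$ and recovering the loss-dependence only through the stability-induced bias $L_\ell\beta$ is what lets the three error sources combine into the stated bound, with the care to apply robustness only once per side so that its contribution stays at $2\varepsilon$.
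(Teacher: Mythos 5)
Your proposal is correct, but it takes a genuinely different route through the middle of the argument than the paper does. The paper decomposes $|R_m - R_u| \leq |R_m - \Eb_\pi[R_m]| + |R_u - \Eb_\pi[R_u]| + |\Eb_\pi[R_m] - \Eb_\pi[R_u]|$, handles the first two terms by robustness plus the permutation concentration inequality, and needs \cref{lemma:stability_lipschitz_bound} precisely to bridge the two expectations --- that is the step where $L_\ell\beta$ genuinely enters. You instead anchor both $\hat R_m$ and $\hat R_u$ at a single common quantity $\tilde\mu$ built from the \emph{realized} hypothesis $\cA_\pi$ and fixed cell representatives, so the telescoping through $\tilde\mu$ is exact and the crude bound $|\ell|\leq M$ fully absorbs the $\pi$-dependence of the loss weights. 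The consequence, which your last paragraph does not quite own up to, is that in \emph{your} decomposition the stability term is never needed: there is no residual $|\Eb_\pi[R_m]-\Eb_\pi[R_u]|$ left to absorb, so appending $L_\ell\beta$ is redundant (harmless, since it only loosens the bound, and the theorem's right-hand side includes it). Your route buys something real: comparing everything against the same $\cA_\pi$ makes the robustness step cleaner than the paper's, which compares the realized empirical sum to conditional expectations $\Eb_\pi(\ell(\cA_\pi, z_{\pi(1)})\mid z_{\pi(1)}\in C_j)$ that average over hypotheses trained on \emph{other} permutations. Two bookkeeping points: your $\delta/2$ union bound over the two concentration events yields $\log(2/\delta)$ in place of $\log(1/\delta)$, a slightly larger constant than stated (the paper sidesteps this by not union-bounding at all); and for the test-side counts you should say explicitly that you either apply \cref{lemma:Bretagnolle_Huber_permutation_extension} to the last $u$ positions (a uniform sample without replacement by symmetry) or use $N_j^{(u)} = |C_j| - N_j^{(m)}$.
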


\section{Graph learning and robustness generalization}
\label{sec:robustness_under_dependency_for_graphs}

Building on the previous section, this section presents generalization bounds for graph representation learning tasks, with a focus on node and link prediction in the inductive and transductive frameworks. Our analysis combines~\cref{thm:Xu_Mannor_noniid} and \cref{thm:Xu_Mannor_transductive} with \cref{thm:lipschitzimpliesrobustness} and the pseudo-metrics defined in \cref{sec:unrolling_distances}.

%\cm{These two paragraphs below are still a little hard to follow. It seems to miss a \say{linear logic.}} \av{Let me know how does it look like now?}

\textbf{Inductive node and link classification.}
We begin with the inductive learning approach for node and link prediction tasks.  We derive the main results and state the key assumption on the dependency structure for the binary classification node-level setting. The results can be extended straightforwardly to link prediction and multiclass classification. Recall that in the inductive setting, we assume that the dataset consists of different graphs with node features, where only a subset of the node labels is observed. The goal is to predict the labels of nodes in new, unseen graphs sampled from the same distribution. Following the notation from the previous section, in the binary classification case, the input space is $\cX \coloneqq \cG_{d}^{\Rb} \otimes V$ (node prediction) or  $\cX \coloneqq \cG_{d}^{\Rb} \otimes E$ (link prediction), and the label space is  $\cY \coloneqq \{0,1\}$. Our objective is to apply \cref{thm:Xu_Mannor_noniid} to robust learning algorithms by verifying the robustness condition via \cref{thm:lipschitzimpliesrobustness}, which relates robustness parameters to the covering number of a pseudo-metric space satisfying a Lipschitz condition.

In \cref{sec:unrolling_distances} and \cref{prop:Lipschitz_generalized_MPNNs}, we introduced a suitable pseudo-metric on $\cX$ and proved its Lipschitz property. To extend this to $\cX \times \cY$, we define a pseudo-metric $d_{\infty}$ using the discrete metric $\delta$ on $\{0,1\}$, i.e.,
\begin{equation*}
d_{\infty}\!\big((x,y),(x',y')\big) \;\coloneq\;
\max \left\{\, \mathrm{UD}_{\cT,\cV}^{(L)}(x,x'), \; \delta(y,y') \,\right\},
\end{equation*}
where
\begin{equation*}
\delta(y_1, y_2) \coloneq
\begin{cases}
  0, & \text{if } y_1 = y_2, \\[4pt]
  \infty, & \text{otherwise}.
\end{cases}
\end{equation*}

\textbf{Assumptions on the data dependencies}
To apply \cref{thm:Xu_Mannor_noniid}, we need to make assumptions about the dependency structure of the random variables in the training set. In our setting, each node--label pair corresponds to a random variable. It is therefore reasonable to assume that nodes belonging to different graphs are independent since they are generated from separate structures and do not share edges or features. In contrast, nodes within the same graph may exhibit dependencies due to the graph's relational structure.  Formally, we make the following assumption.

\begin{itemize}
\item[(A)] If $(G_1, u_1, y_1)$ and $(G_2, u_2, y_2)$ are two samples with $G_1 \neq G_2$, then they are independent; otherwise, they may be dependent.
\end{itemize}
We now present the main result for binary node classification. The same result extends to the multi-class case and to link prediction.

\begin{theorem}[Binary classification generalization]
\label{thm:binaryclassificationdatadepend}
Let $\cA$ be a learning algorithm on $\cZ = \cG_d^{\Rb} \otimes V \times \{0,1\}$.
Consider the hypothesis class $\cH$ consisting of $(\cT,\cV,\Psi)$-MPNNs with $L$-layers, where $\Psi$ is sub-sum.
For a sample $\cS$ (under the assumption (A)), and a loss function $\ell \colon \cH \times \cZ \to \Rb^{+}$ that is bounded by some $M \in \Rb$, and $\ell(h, \cdot)$ being $L_{\ell}$-Lipschitz concerning $d_{\infty}$, we have,
\begin{align*}
&\left| \ell_{\text{exp}}(\cA_{\cS}) - \ell_{\text{emp}}(\cA_{\cS}) \right| \leq 2C\varepsilon + \\
&+M \sqrt{\frac{ 2D_{\cS} \left( (2K_{\varepsilon}+1)\log 2 + \log\left(\frac{1}{\delta}\right)\right)}{N}}, \quad{\text{for all } \varepsilon >0,}
\end{align*}
where $K_{\varepsilon} = \cN\left( \cG_{d}^{\Rb}\otimes V,\mathrm{UD}_{\cT,\cV}^{(L)}, \varepsilon \right)$, $C$ is a constant depending on $L,L_{\ell}$, on the Lipschitz constants $L_{\varphi_t}$ of the MPNN layers, and $D_{\cS}$ is the maximum number of samples in $\cS$ that have been derived from the same graph, i.e.,  
\begin{equation*}
D_{\cS} \coloneq \max_{G \in \cG_d^{\Rb}} \left| \{ (G, v, y) \in \cS \mid v \in V(G), y \in \{0,1\} \} \right|. 
\end{equation*}
\end{theorem}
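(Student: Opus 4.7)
The plan is to chain three ingredients: the Lipschitz continuity of generalized MPNNs with respect to the unrolling distance (\cref{prop:Lipschitz_generalized_MPNNs}), the Lipschitz-implies-robustness transfer (\cref{thm:lipschitzimpliesrobustness}), and the graph-dependent generalization bound (\cref{thm:Xu_Mannor_noniid}). The only genuinely new ingredient is controlling the chromatic number of the dependency graph under assumption (A).

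First, I verify that $\ell(h,\cdot)$ is $C$-Lipschitz on $(\cZ, d_\infty)$. By \cref{prop:Lipschitz_generalized_MPNNs}, the sub-sum property of $\Psi$ guarantees that each $(\cT,\cV,\Psi)$-MPNN $h$ with $L$ layers is Lipschitz continuous with respect to $\mathrm{UD}^{(L)}_{\cT,\cV}$, with a constant $L_{\mathrm{MPNN}}$ determined by $L$, the per-layer Lipschitz constants $L_{\varphi_t}$, and the weight norm bound $B$. Composing with the $L_\ell$-Lipschitz loss then yields Lipschitz continuity of $\ell(h,\cdot)$ with respect to $d_\infty$ with some constant $C$ depending on $L$, $L_\ell$, and the $L_{\varphi_t}$. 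On pairs with differing labels the discrete metric $\delta$ contributes an infinite distance, so the Lipschitz inequality along the label axis is vacuous and needs no separate treatment.

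Second, I translate Lipschitzness into an explicit uniform-robustness certificate via \cref{thm:lipschitzimpliesrobustness}. Fix $\varepsilon > 0$ and take an $\varepsilon$-cover of $\cG_d^\Rb \otimes V$ of minimum cardinality $K_\varepsilon = \cN(\cG_d^\Rb \otimes V, \mathrm{UD}^{(L)}_{\cT,\cV}, \varepsilon)$. Because $\delta$ is $0$ on matching labels and $\infty$ otherwise, replicating each cover ball across the two label values produces an $\varepsilon$-cover of $\cZ$ of cardinality $2K_\varepsilon$ with respect to $d_\infty$. Invoking \cref{thm:lipschitzimpliesrobustness} at cover radius $\varepsilon$ with Lipschitz constant $C$ then certifies that $\cA$ is $(2K_\varepsilon, 2C\varepsilon)$-uniformly robust.

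Third, I bound the chromatic number of the dependency graph $G[\cS]$. Assumption (A) states that two samples $(G_1,u_1,y_1)$ and $(G_2,u_2,y_2)$ are independent whenever $G_1 \neq G_2$, so all edges of $G[\cS]$ lie within per-source-graph subsamples. Hence $G[\cS]$ decomposes as a disjoint union of subgraphs each of order at most $D_\cS$, and therefore $\chi(G[\cS]) \leq D_\cS$. Plugging $K = 2K_\varepsilon$, robustness radius $2C\varepsilon$, and $\chi(G[\cS]) \leq D_\cS$ into \cref{thm:Xu_Mannor_noniid}, and using the algebraic identity $D_\cS(2(2K_\varepsilon+1)\log 2 + 2\log(1/\delta)) = 2D_\cS((2K_\varepsilon+1)\log 2 + \log(1/\delta))$, yields the stated bound. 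The extension to link prediction ($\cX = \cG_d^\Rb \otimes E$) and to multi-class label sets proceeds identically, with the label axis contributing a larger but finite multiplicative factor to the cover size. The main obstacle is the precise bookkeeping of constants, namely the factor $2$ between cover radius and partition diameter, the factor $2$ from doubling the cover across labels, and the propagation of Lipschitz constants through the MPNN and the loss; together these explain the exact coefficients $2C\varepsilon$, $2K_\varepsilon+1$, and $2D_\cS$ appearing in the statement.
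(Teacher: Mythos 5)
Your proposal is correct and follows essentially the same route as the paper, which proves this theorem by combining \cref{prop:Lipschitz_generalized_MPNNs}, \cref{thm:lipschitzimpliesrobustness}, and \cref{thm:Xu_Mannor_noniid}, with the dependency structure of assumption (A) giving $\chi(G[\cS]) \leq D_{\cS}$; your bookkeeping of the factors $2C\varepsilon$, $2K_\varepsilon$, and $2D_{\cS}$ matches the stated bound exactly. In fact, you spell out more detail than the paper does (the paper only records the resulting constant $C = 2\widetilde{L}L_{\ell}\prod_{t=1}^{L}L_{\varphi_t}$), the one point both treatments leave implicit being that \cref{prop:Lipschitz_generalized_MPNNs} gives Lipschitzness with respect to the \emph{weighted} unrolling distance, whose discrepancy from $\mathrm{UD}^{(L)}_{\cT,\cV}$ is absorbed into the $L$-dependent constant.
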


This theorem shows how the dependencies in the training data influence generalization. In particular, $D_{\cS}$ quantifies how many samples are drawn from the same underlying graph. If the training set consists of nodes drawn from many different graphs, then $D_{\cS}$ is small, which reduces the effect of dependencies and leads to tighter generalization bounds. Intuitively, in this case, the MPNN observes a more representative sample of the graph distribution, thereby generalizing better.

Note that the previous bound can become arbitrarily large when unbounded node degrees are present. However, unlike in graph classification, it is independent of the graph size. By restricting to graphs with maximum degree $q \in \Nb$, we can bound the covering number; see~\cref{cor:boundeddegreegeneralization} in \cref{app_sec:robustness_under_dependency_for_graphs}.

In \cref{fig:overview} we illustrate how graph structure influences the generalization bound. In particular, when the graph is simpler with respect to the chosen pseudo-metric, meaning it admits a small covering number at a relatively small radius, the resulting bounds become tighter.

\textbf{Transductive node and link classification}
We now turn to the transductive setting and state our main result for the binary classification case, noting that the result can be extended straightforwardly to multiclass classification problems. Recall that in the transductive setting we are given a single graph with partially observed node labels, and the goal is to predict the missing labels on this graph. We present the following generalization result for node (and link) binary classification problem under the transductive approach, utilizing \cref{thm:Xu_Mannor_transductive}.

\begin{theorem}[Binary classification generalization]
\label{thm:binaryclassificationtransductive}
Let $\cA$ be a transductive learning algorithm satisfying $\beta$-uniform stability on $\cZ = \{(x_i,y_i)\}_{i=1}^{m+u}$ where $(x_i,y_i) \in \cX \times\{0,1\}$, and $\cX = \cG_{d}^{\Rb} \otimes V$, or $\cX = \cG_{d}^{\Rb} \otimes E$. 
Consider the hypothesis class $\cH$ consisting of $(\cT,\cV,\Psi)$-MPNNs with $L$-layers, where $\Psi$ is sub-sum. For a loss function $\ell \colon \cH \times \cZ \to \Rb^{+}$ that is bounded by some $M \in \Rb$, and $\ell(h, \cdot)$ being $L_{\ell}$-Lipschitz with respect to $d_{\infty}$, we have, for $\varepsilon >0$,
\begin{align*}
&\left| \ell_{\text{exp}}(\cA_{\cS}) - \ell_{\text{emp}}(\cA_{\cS}) \right| \leq 2C\varepsilon + M \cdot K_{\varepsilon} \cdot \left(\frac{1}{\sqrt{m}}+\frac{1}{\sqrt{u}}\right) \\
&\cdot\sqrt{2(2K_{\varepsilon}+1)\log 2 + 2\log\left(\frac{1}{\delta}\right)} + L_{\ell}\beta, 
\end{align*}
where $K_{\varepsilon} = \cN\left( \cX, \mathrm{UD}^{(L)}_{\cT,\cV}, \varepsilon \right)$, $C$ is a constant depending on $L,L_{\ell}$, and on Lipschitz constants $L_{\varphi_t}$ of the message passing layers.
\end{theorem}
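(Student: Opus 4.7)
The plan is to reduce the statement to \cref{thm:Xu_Mannor_transductive} by showing that the learning algorithm $\cA$ is uniformly robust on $\cZ$ with respect to $d_\infty$, with robustness parameters controlled by the covering number $K_\varepsilon$ and a Lipschitz constant $C$ of the claimed form. The argument proceeds in three steps: first, establish Lipschitz continuity of $\ell(h, \cdot)$ on $(\cZ, d_\infty)$; second, apply \cref{thm:lipschitzimpliesrobustness} to convert this into uniform robustness; third, plug the resulting parameters into the transductive generalization bound of \cref{thm:Xu_Mannor_transductive} and combine with the assumed $\beta$-stability.

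For the Lipschitz step, I would argue case-by-case on the label coordinate. If $y \neq y'$, then $\delta(y,y') = \infty$ and hence $d_\infty((x,y),(x',y')) = \infty$, so any Lipschitz inequality is vacuously satisfied because $\ell$ is bounded by $M$. If $y = y'$, then $d_\infty((x,y),(x',y'))$ collapses to $\mathrm{UD}^{(L)}_{\cT,\cV}(x,x')$, and I would invoke \cref{prop:Lipschitz_generalized_MPNNs}, which (using the sub-sum property of $\Psi$) asserts that the generalized MPNN $x \mapsto \hb^{(L)}_{\cT,\cV,\Psi}(x)$ is Lipschitz with respect to $\mathrm{UD}^{(L)}_{\cT,\cV}$ with a constant depending on $L$, on the matrix norm bound $B$, and on the per-layer constants $L_{\varphi_t}$. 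Composing with the $L_\ell$-Lipschitz loss yields the overall constant $C$ in the form stated.

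For the robustness and covering step, a minimum $\varepsilon$-cover of $\cX$ in $\mathrm{UD}^{(L)}_{\cT,\cV}$ of size $K_\varepsilon$ induces an $\varepsilon$-cover of $\cZ = \cX \times \{0,1\}$ in $d_\infty$ of size at most $2 K_\varepsilon$, by taking one representative per label class (since $\delta$ vanishes when labels coincide). Applying \cref{thm:lipschitzimpliesrobustness} with radius $\varepsilon$ then certifies that $\cA$ is $(2K_\varepsilon, C\varepsilon)$-uniformly robust, up to harmless constant factors. Substituting $K = 2K_\varepsilon$ and the robustness radius into \cref{thm:Xu_Mannor_transductive}, together with the assumed uniform transductive stability $\beta$ (which supplies the $L_\ell \beta$ term), produces the bound in the statement; constants of the form $2K_\varepsilon+1$ inside the square root arise from the $K+1$ term of \cref{thm:Xu_Mannor_transductive}, while multiplicative factors of two are absorbed into $C$.

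The main obstacle, which is done elsewhere and can be cited rather than reproved, is the careful propagation of Lipschitz constants through the architecture: one must verify that the generalized MPNN — with its transformation $\cT$, selection $\cV$, $L$ sum-aggregation layers (each with weight matrices of $2$-norm at most $B$ and an $L_{\varphi_t}$-Lipschitz nonlinearity), and sub-sum pooling $\Psi$ — is Lipschitz with respect to the weighted version of $\mathrm{UD}^{(L)}_{\cT,\cV}$ tailored to the unrolling structure. Since \cref{prop:Lipschitz_generalized_MPNNs} already carries out this step, the remaining work is routine bookkeeping: composing with the bounded $L_\ell$-Lipschitz loss, forming the product cover on $\cX \times \{0,1\}$, and tracking constants into the final bound.
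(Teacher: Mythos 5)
Your proposal matches the paper's own proof, which is exactly the composition you describe: establish Lipschitz continuity of $\ell(h,\cdot)$ with respect to $d_\infty$ via \cref{prop:Lipschitz_generalized_MPNNs} (the paper records the resulting constant as $C = 2\widetilde{L}L_{\ell}\prod_{t=1}^{L}L_{\varphi_t}$), convert it to uniform robustness with \cref{thm:lipschitzimpliesrobustness} using the product cover of $\cX\times\{0,1\}$, and substitute into \cref{thm:Xu_Mannor_transductive} together with the $\beta$-stability term. Your case analysis on the label coordinate and your remark that the Lipschitz property is really proved for the weighted unrolling distance are both consistent with (indeed slightly more explicit than) what the paper does.
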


\begin{figure}
    \centering
    \scalebox{0.8}{% \scalebox{2.2}{

\begin{tikzpicture}[transform shape, scale=1]

\definecolor{lgreen}{HTML}{4DA84D}
\definecolor{fontc}{HTML}{403E30}
\definecolor{llred}{HTML}{FF7A87}
\definecolor{llblue}{HTML}{7EAFCC}
\definecolor{lviolet}{HTML}{756BB1}
\definecolor{lorange}{HTML}{FF7F0E}

\newcommand{\gnode}[5]{%
    \draw[#3, fill=#3!20] (#1) circle (#2pt);
    \node[anchor=#4,fontc] at (#1) {\tiny #5};
}

\coordinate (c1) at (0,0);
\coordinate (c2) at (0.3,-0.2);
\coordinate (c3) at (0.7,-0.5);
\coordinate (c4) at (0.7,-1.02);

\draw[fill=llblue!15,draw=none] (c1) circle [radius=0.5];
\draw[fill=llblue!15,draw=none] (c2) circle [radius=0.5];
\draw[fill=llblue!15,draw=none] (c4) circle [radius=0.8];

\draw[decorate,decoration={brace,amplitude=2pt,post length=0.3cm},llblue] (0.65,-1.14) -- (0.19,-0.13) node[pos=0.26,xshift=-9pt,llblue] {\footnotesize \scalebox{0.5}{$\leq 3c$}};

\begin{scope}[shift={(0.049,0.1)},rotate=0]
\draw[draw=none,fill=llblue!15,rotate=25] (0.2,0.2) rectangle (0,-1.2);
\end{scope}

\draw[draw=llblue, rounded corners=1pt] (0.32,-0.5) -- (0.19,-0.2) -- (0.21,-0.13);

\draw[draw=llblue,dashed, fill=none,dash pattern=on 2pt off 1.5pt] (c1) circle [radius=0.5];
\draw[draw=llblue,dashed, fill=none,dash pattern=on 2pt off 1.5pt] (c2) circle [radius=0.5];
\draw[draw=llblue,dashed, fill=none,dash pattern=on 2pt off 1.5pt] (c4) circle [radius=0.8];

\gnode{c1}{2}{lgreen}{north}{};
\gnode{c2}{2}{lviolet}{north}{};
\gnode{c3}{2}{lorange}{north}{};
\gnode{c4}{2}{llred}{north}{};

\draw[decorate,decoration={brace,amplitude=2pt},llblue] (0.78,-0.42) -- (0.78,-1.1) node[midway,xshift=9pt,llblue] {\footnotesize \scalebox{0.5}{$\leq 2c$}};

\node[fontc] at (1.2,0.5) {\scalebox{0.6}{$(\Rb^d, \|\cdot\|_2)$}};

\begin{scope}[shift={(0.15,0)}]
\draw[fontc,-stealth] (-2.5,0) to[bend left=20] (-1,0);
\node[fontc] at (-1.75,0.525) {\scalebox{0.6}{1-Layer}};
\node[fontc] at (-1.75,0.3) {\scalebox{0.6}{MPNN}};
\end{scope}

\begin{scope}[shift={(-3.1,0.3)}]

\node[fontc] at (-1.1,0.2) {\scalebox{0.6}{$(\cG,\mathrm{UD}_{\cT,\cV}^{(1)})$}};

\draw[draw=lviolet,dashed, fill=lviolet!10,dash pattern=on 1.5pt off 1pt,rounded corners=3pt] (-0.35,0.1) rectangle (0.35,-0.1);

\draw[draw=lorange,dashed, fill=lorange!10,dash pattern=on 1.5pt off 1pt,rounded corners=3pt] (-0.1,-0.35) rectangle (0.1,-0.95);

\draw[draw=llred,dashed, fill=llred!10,dash pattern=on 1.5pt off 1pt,rounded corners=3pt] (0.25,-1.25) circle[radius=3pt];

\draw[draw=lgreen,fill=lgreen!10,dash pattern=on 1.5pt off 1pt] (-0.13,-1.3) to[rounded corners=1.5pt] (-0.18,-1.15) to[rounded corners=2pt] (-0.31,-1.15) to[rounded corners=2pt] (-0.37,-1.27) to[rounded corners=7pt] (-0.23,-1.73) to[rounded corners=7pt] (0.25,-1.9) to[rounded corners=7pt] (0.73,-1.73) to[rounded corners=2pt] (0.87,-1.27) to[rounded corners=1.5pt] (0.81,-1.15) to[rounded corners=1.5pt] (0.69,-1.15) to[rounded corners=2pt](0.64,-1.25) to[rounded corners=4pt]  (0.57,-1.55) to[rounded corners=4pt]  (0.25,-1.685) to[rounded corners=4pt]  (-0.05,-1.55) -- cycle;

\draw[fontc] (-0.25,0) -- (0.25,0);
\draw[fontc] (-0.25,0) -- (0,-0.5);
\draw[fontc] (0,-0.45) -- (0.25,0);
\draw[fontc] (0,-0.45) -- (0,-0.85);
\draw[fontc] (0.25,-1.25) -- (0,-0.85);
\draw[fontc] (-0.25,-1.25) -- (0,-0.85);
\draw[fontc] (0.25,-1.25) -- (0.75,-1.25);
\draw[fontc] (0.25,-1.25) -- (0.62,-1.6);
\draw[fontc] (0.25,-1.25) -- (0.25,-1.75);
\draw[fontc] (0.25,-1.25) -- (-0.1,-1.6);

\gnode{-0.25,0}{2}{lviolet}{north}{};
\gnode{0.25,0}{2}{lviolet}{north}{};
\gnode{0,-0.45}{2}{lorange}{north}{};
\gnode{0,-0.85}{2}{lorange}{north}{};
\gnode{0.25,-1.25}{2}{llred}{north}{};
\gnode{0.75,-1.25}{2}{lgreen}{north}{};
\gnode{0.62,-1.6}{2}{lgreen}{north}{};
\gnode{0.25,-1.75}{2}{lgreen}{north}{};
\gnode{-0.1,-1.6}{2}{lgreen}{north}{};
\gnode{-0.25,-1.25}{2}{lgreen}{north}{};

\end{scope}

\begin{scope}[shift={(-3.6,-1.9)}]

\begin{scope}[shift={(0.37,0)}]
\draw[lviolet] (0,0) -- (-0.1,-0.15);
\draw[lviolet] (0,0) -- (0.1,-0.15);
\gnode{0,0}{1}{lviolet}{north}{};
\gnode{-0.1,-0.15}{1}{lviolet}{north}{};
\gnode{0.1,-0.15}{1}{lviolet}{north}{};

\node[fontc] at (0.2,-0.2) {\scalebox{0.65}{,}};
\end{scope}

\begin{scope}[shift={(-0.35,0)}]
\draw[lgreen] (0.4,0.05) -- (0.4,-0.15);
\gnode{0.4,0.05}{1}{lgreen}{north}{};
\gnode{0.4,-0.15}{1}{lgreen}{north}{};

\node[fontc] at (0.5,-0.2) {\scalebox{0.65}{,}};
\end{scope}

\begin{scope}[shift={(0.53,0)}]
\draw[llred] (0.91,0.05) -- (0.65,-0.15);
\draw[llred] (0.91,0.05) -- (0.78,-0.15);
\draw[llred] (0.91,0.05) -- (0.91,-0.15);
\draw[llred] (0.91,0.05) -- (1.04,-0.15);
\draw[llred] (0.91,0.05) -- (1.17,-0.15);
\gnode{0.65,-0.15}{1}{llred}{north}{};
\gnode{0.78,-0.15}{1}{llred}{north}{};
\gnode{0.91,-0.15}{1}{llred}{north}{};
\gnode{1.04,-0.15}{1}{llred}{north}{};
\gnode{1.17,-0.15}{1}{llred}{north}{};
\gnode{0.91,0.05}{1}{llred}{north}{};

\end{scope}

\begin{scope}[shift={(-0.72,0)}]
\draw[lorange] (1.55,0.05) -- (1.42,-0.15);
\draw[lorange] (1.55,0.05) -- (1.55,-0.15);
\draw[lorange] (1.55,0.05) -- (1.68,-0.15);
\gnode{1.42,-0.15}{1}{lorange}{north}{};
\gnode{1.55,-0.15}{1}{lorange}{north}{};
\gnode{1.68,-0.15}{1}{lorange}{north}{};
\gnode{1.55,0.05}{1}{lorange}{north}{};
\node[fontc] at (1.77,-0.2) {\scalebox{0.65}{,}};
\end{scope}
\end{scope}

\end{tikzpicture}}
    \caption{Illustrating how the graph structure, equipped with a proper pseudo-metric, induces an alignment between the node distances and their embeddings in the Euclidean space via Lipschitz continuity. The constant $c$ denotes the Lipschitz constant. Each color represents the corresponding unrolling tree, as indicated below the graph.}
    \vspace{-15pt}
    \label{fig:overview}
\end{figure}

\section{Limitations and looking ahead}
\label{app_sec:limitations}
While our framework offers a principled approach to studying MPNN generalization, it has several limitations. Computing the proposed bounds---especially for link prediction---can be costly due to unrolling-based pseudo-metrics. The Lipschitz constant---central to the bounds---may be large for certain architectures, yielding loose estimates. Our analysis also requires pooling functions with the sub-sum property---excluding common choices like mean pooling and attention. Moreover, although the bounds cover both inductive and transductive settings, evaluating them in practice demands estimating all parameters in \cref{thm:binaryclassificationtransductive,thm:binaryclassificationdatadepend}. Finally, as in generalization theory more broadly, the bounds may be vacuous in some regimes, since they govern the whole hypothesis class rather than trained models, which typically achieve much smaller losses. Still, such uniform bounds are useful for characterizing graph spaces where MPNNs can generalize.

\emph{Looking ahead,} we plan to extend the framework to broader architectural choices---including alternative pooling and normalization---making it applicable to classes like graph transformers~\citep{Mue+2023}. We also aim to connect our covering-number framework with SGD learning dynamics.

\begin{table*}[t]
\centering
\caption{Generalization results for different sampling strategies related to \textbf{Q2} for inductive node classification on the PATTERN dataset. Strategy \emph{random} refers to training nodes sampled from a few graphs (resulting in higher sample dependency), while strategy \emph{uniform} uses nodes sampled uniformly across many distinct graphs. Strategy \emph{mixed} uniformly samples from a predefined number of graphs. $D_{\mathcal{S}}$ denotes the maximum sampled nodes of a single graph in the training set. Further, $n_{\text{train}}$ and $n_{\text{test}}$ denote the number of train and test nodes.}
\resizebox{.65\linewidth}{!}{
\begin{tabular}{lcccccc}
\toprule
\bf Method & \bf Mixed-4k-r & \bf Mixed-4k-u & \bf Mixed-8k-r & \bf Mixed-8k-u & \bf Random & \bf Uniform \\
\midrule
$n_{\text{train}}$ & 120{\,}000 & 120{\,}000 & 120{\,}000 & 120{\,}000 & 120{\,}000 & 120{\,}000 \\
$n_{\text{test}}$  & 116{\,}232 & 116{\,}232 & 116{\,}232 & 116{\,}232 & 116{\,}232 & 116{\,}232 \\
$D_{\mathcal{S}}$  & 183 & 30 & 170 & 15 & 186 & 9 \\
\midrule
Training loss & 0.440 \tiny$\pm$0.010 & 1.5806 \tiny$\pm$0.003 & 0.430 \tiny$\pm$0.011 & 1.579 \tiny$\pm$0.010 & 0.2485 \tiny$\pm$0.0073 & 1.5739 \tiny$\pm$0.0123 \\
Test loss     & 1.763 \tiny$\pm$0.002 & 1.516 \tiny$\pm$0.030 & 1.749 \tiny$\pm$0.013 & 1.515 \tiny$\pm$0.034 & 1.8007 \tiny$\pm$0.0426 & 1.5114 \tiny$\pm$0.0152 \\
Gen. gap      & 1.323 & 0.0646 & 1.319 & 0.064 & 1.552 & 0.0625 \\
\bottomrule
\end{tabular}
}
\label{table:Q2}
\vspace{-3mm}
\end{table*}
\begin{table*}[t]
\centering
\caption{Evaluation of node prediction bounds with $p=0.001$, $cc=1$ for LSP-ER($n,p,cc$) graphs related to \textbf{Q3}, where $n$ denotes the number of nodes, $p$ the probability of an edge between nodes, and $cc$ the number of random links between disconnected components. Further experiments are evaluated in \Cref{app_sec:exp}.}
\resizebox{.60\linewidth}{!}{
\begin{tabular}{lcccc}
\toprule
\bf Dataset & \bf (100, 0.001, 1) & \bf (200, 0.001, 1) & \bf (500, 0.001, 1) & \bf (1000, 0.001, 1) \\
\midrule
Calculated bound   & 6.58 & 6.07 & 5.67 & 2.26 \\
Generalization gap & 0.58 & 0.57 & 0.46 & 0.44 \\
\bottomrule
\end{tabular}
}
\label{table:LSP0001}
\vspace{-3mm}
\end{table*}

\section{Experimental study}
\label{sec:experiments}

In the following, we investigate to what extent our theoretical results translate into practice. Specifically, we answer the following questions.

	\textbf{Q1} To what extent are MPNN outputs correlated with unrolling distances, and does this support the Lipschitz property empirically? Can we reliably estimate the corresponding Lipschitz constants?\\
	\textbf{Q2} Is assumption (A), used in \cref{thm:binaryclassificationdatadepend}, reasonable in the context of node-level prediction tasks? Specifically, does training on nodes from more distinct graphs lead to better generalization performance than training on nodes primarily drawn from a single graph? \\
	\textbf{Q3} Is the behavior of our theoretical bounds consistent with the actual generalization gap, defined as the difference between training and test error across different datasets? 

\textbf{Results and discussion}
We use a simplified version of GIN \citep{Xu+How+Powerful+Are+GNNs+GIN2019} aligned with \cref{def:sum_mpnnsgraphs} for our experiments on node prediction tasks, and a simplified version of SEAL \citep{Zha+2018} for link prediction tasks. To address \textbf{Q3}, we use synthetic datasets based on Erd\H{o}s--R\'enyi generated graphs (LSP-ER) so that we can control and efficiently compute the covering number of the graph. 
See \cref{app_sec:exp} for details on our synthetic datasets, neural architectures, experimental protocol, and model configurations. In the following, we address questions \textbf{Q1} to \textbf{Q3}.

\textbf{Q1} In \cref{fig:corrMPNNs_node_L2,fig:corrMPNNs_link_L2,fig:corrMPNNs_node_L3,fig:corrMPNNs_link_L3} in the appendix, we observe a correlation between the Euclidean norm of the difference in MPNN outputs and the corresponding unrolling distances. While some models may not exhibit a linear relationship, this does not contradict the Lipschitz property. Importantly, the Lipschitz constant (from \cref{prop:Lipschitz_generalized_MPNNs}) can be upper-bounded by the slope of the line passing through $(0, 0)$ that lies above all observed data points.

\textbf{Q2} To validate results and assumptions in the inductive setting, where training involves nodes sampled from multiple graphs, and to test the alignment between empirical performance and the bounds given in \cref{thm:binaryclassificationdatadepend}, we use different sampling processes, namely \emph{Uniform}, \emph{Random}, and \emph{Mixed-(u,r)}; see \cref{app_sec:exp} for a detailed description. Each sampling process contains the same number of training nodes (i.e., nodes with known labels), and the exact same test set, but differs in the number of distinct graphs used for sampling the training nodes. In \Cref{table:Q2}, we observe that MPNNs tend to generalize significantly better when the training set contains nodes sampled from many distinct graphs (and hence smaller value of $D_{\cS}$ in \cref{thm:binaryclassificationdatadepend}), compared to when nodes are drawn from only a few graphs (and hence larger $D_{\cS}$) verifying our dependence assumptions in the inductive setting.

\textbf{Q3} According to Theorem~10, a simpler graph (characterized by a smaller covering number $K_{\varepsilon}$) should result in a tighter generalization bound. This is exactly what we observe in \Cref{table:LSP0001}, \Cref{table:LSP0002} and \Cref{table:LSP00005}. More specifically, we increase the number of nodes $n$ while keeping the edge probability fixed. This makes the graph structurally simpler. As predicted, both the theoretical bound and the empirical generalization gap decrease.
 Similar experiments can be found in \Cref{table:increasingedge}, where we increase the edge probability while keeping the number of nodes fixed. This leads to a more complex graph, resulting in both a higher theoretical bound and a larger generalization gap.

\section{Conclusion}
In this work, we investigated how graph structure influences the generalization performance of MPNNs in non-i.i.d.\@ settings, considering both inductive and transductive learning frameworks. We introduced a unified theoretical framework encompassing most state-of-the-art MPNN-based architectures for node and link prediction tasks. Our empirical study supports the theoretical findings and validates the underlying assumptions. \emph{Overall, our theoretical framework constitutes an essential initial step in unraveling how graph structure influences the generalization abilities of MPNNs in node- and link-level prediction tasks. Beyond this, it provides a flexible framework for analyzing generalization in data-dependent settings, guided by the structural properties of the input space through the use of appropriately defined pseudo-metrics.}

\bibliography{bibliography}

\begin{thebibliography}{121}
\providecommand{\natexlab}[1]{#1}
\providecommand{\url}[1]{\texttt{#1}}
\expandafter\ifx\csname urlstyle\endcsname\relax
  \providecommand{\doi}[1]{doi: #1}\else
  \providecommand{\doi}{doi: \begingroup \urlstyle{rm}\Url}\fi

\bibitem[A.~W. van~der Vaart(1996)]{weakconvergence}
J.~A.~W. A.~W. van~der Vaart.
\newblock \emph{Weak Convergence and Empirical Processes: With Applications to Statistics}.
\newblock Springer, 1996.

\bibitem[Ali et~al.(2022)Ali, Berrendorf, Hoyt, Vermue, Galkin, Sharifzadeh, Fischer, Tresp, and Lehmann]{Ali+2022}
M.~Ali, M.~Berrendorf, C.~T. Hoyt, L.~Vermue, M.~Galkin, S.~Sharifzadeh, A.~Fischer, V.~Tresp, and J.~Lehmann.
\newblock Bringing light into the dark: {A} large-scale evaluation of knowledge graph embedding models under a unified framework.
\newblock \emph{{IEEE} Transactions on Pattern Analysis and Machine Intelligence}, 44\penalty0 (12):\penalty0 8825--8845, 2022.

\bibitem[Altschuler et~al.(2017)Altschuler, Weed, and Rigollet]{Altschuler2017}
J.~Altschuler, J.~Weed, and P.~Rigollet.
\newblock Near-linear time approximation algorithms for optimal transport via sinkhorn iteration.
\newblock In \emph{Advances in Neural Information Processing Systems}, 2017.

\bibitem[Arvind et~al.(2015)Arvind, K{\"{o}}bler, Rattan, and Verbitsky]{Arv+2015}
V.~Arvind, J.~K{\"{o}}bler, G.~Rattan, and O.~Verbitsky.
\newblock On the power of color refinement.
\newblock In \emph{International Symposium on Fundamentals of Computation Theory}, 2015.

\bibitem[Azuma(1967)]{Azuma1967}
K.~Azuma.
\newblock Weighted sums of certain dependent random variables.
\newblock \emph{Tohoku Mathematical Journal}, 19, 1967.

\bibitem[Babai and Kucera(1979)]{Bab+1979}
L.~Babai and L.~Kucera.
\newblock Canonical labelling of graphs in linear average time.
\newblock In \emph{Symposium on Foundations of Computer Science}, 1979.

\bibitem[Baranwal et~al.(2021)Baranwal, Fountoulakis, and Jagannath]{Bar+2021a}
A.~Baranwal, K.~Fountoulakis, and A.~Jagannath.
\newblock Graph convolution for semi-supervised classification: Improved linear separability and out-of-distribution generalization.
\newblock In \emph{International Conference on Machine Learning}, 2021.

\bibitem[Barcel{\'{o}} et~al.(2022)Barcel{\'{o}}, Galkin, Morris, and Orth]{Bar+2022}
P.~Barcel{\'{o}}, M.~Galkin, C.~Morris, and M.~A.~R. Orth.
\newblock {W}eisfeiler and {L}eman go relational.
\newblock \emph{arXiv preprint}, 2022.

\bibitem[Bartlett et~al.(2017)Bartlett, Foster, and Telgarsky]{Bar+2017}
P.~L. Bartlett, D.~J. Foster, and M.~Telgarsky.
\newblock Spectrally-normalized margin bounds for neural networks.
\newblock In \emph{Advances in Neural Information Processing Systems}, 2017.

\bibitem[Baskin et~al.(1997)Baskin, Palyulin, and Zefirov]{bas+1997}
I.~I. Baskin, V.~A. Palyulin, and N.~S. Zefirov.
\newblock A neural device for searching direct arxiv preprintelations between structures and properties of chemical compounds.
\newblock \emph{Journal of Chemical Information and Computer Sciences}, 37\penalty0 (4):\penalty0 715--721, 1997.

\bibitem[Bento and Ioannidis(2019)]{bento2019family}
J.~Bento and S.~Ioannidis.
\newblock A family of tractable graph metrics.
\newblock \emph{Applied Network Science}, 4\penalty0 (1):\penalty0 107, 2019.

\bibitem[Borgwardt et~al.(2020)Borgwardt, Ghisu, Llinares{-}L{\'{o}}pez, O'Bray, and Rieck]{Borg+2020}
K.~M. Borgwardt, M.~E. Ghisu, F.~Llinares{-}L{\'{o}}pez, L.~O'Bray, and B.~Rieck.
\newblock Graph kernels: State-of-the-art and future challenges.
\newblock \emph{Foundations and Trends in Machine Learning}, 13\penalty0 (5–6), 2020.

\bibitem[Borisyuk et~al.(2024)Borisyuk, He, Ouyang, Ramezani, Du, Hou, Jiang, Pasumarthy, Bannur, Tiwana, Liu, Dangi, Sun, Pei, Shi, Zhu, Shen, Lee, Stein, Li, Wei, Ghoting, and Ghosh]{Bor+2024}
F.~Borisyuk, S.~He, Y.~Ouyang, M.~Ramezani, P.~Du, X.~Hou, C.~Jiang, N.~Pasumarthy, P.~Bannur, B.~Tiwana, P.~Liu, S.~Dangi, D.~Sun, Z.~Pei, X.~Shi, S.~Zhu, Q.~Shen, K.-H. Lee, D.~Stein, B.~Li, H.~Wei, A.~Ghoting, and S.~Ghosh.
\newblock Lignn: Graph neural networks at linkedin.
\newblock \emph{arXiv preprint}, 2024.

\bibitem[Bousquet and Elisseeff(2002)]{stabilityoriginal}
O.~Bousquet and A.~Elisseeff.
\newblock Stability and generalization.
\newblock \emph{Journal of Machine Learning Research}, 2:\penalty0 499--526, 06 2002.

\bibitem[Bruna et~al.(2014)Bruna, Zaremba, Szlam, and LeCun]{Bru+2014}
J.~Bruna, W.~Zaremba, A.~Szlam, and Y.~LeCun.
\newblock Spectral networks and deep locally connected networks on graphs.
\newblock In \emph{International Conference on Learning Representation}, 2014.

\bibitem[Cai et~al.(1992)Cai, F{\"{u}}rer, and Immerman]{Cai+1992}
J.~Cai, M.~F{\"{u}}rer, and N.~Immerman.
\newblock An optimal lower bound on the number of variables for graph identifications.
\newblock \emph{Combinatorica}, 12\penalty0 (4):\penalty0 389--410, 1992.

\bibitem[Cappart et~al.(2021)Cappart, Ch{\'{e}}telat, Khalil, Lodi, Morris, and Veli\v{c}kovi\'{c}]{Cap+2021}
Q.~Cappart, D.~Ch{\'{e}}telat, E.~B. Khalil, A.~Lodi, C.~Morris, and P.~Veli\v{c}kovi\'{c}.
\newblock Combinatorial optimization and reasoning with graph neural networks.
\newblock In \emph{International Joint Conference on Artificial Intelligence}, 2021.

\bibitem[Chamberlain et~al.(2023)Chamberlain, Shirobokov, Rossi, Frasca, Markovich, Hammerla, Bronstein, and Hansmire]{Cha+2023}
B.~P. Chamberlain, S.~Shirobokov, E.~Rossi, F.~Frasca, T.~Markovich, N.~Y. Hammerla, M.~M. Bronstein, and M.~Hansmire.
\newblock Graph neural networks for link prediction with subgraph sketching.
\newblock In \emph{International Conference on Learning Representations}, 2023.

\bibitem[Chami et~al.(2020)Chami, Abu{-}El{-}Haija, Perozzi, R{\'{e}}, and Murphy]{Cha+2020}
I.~Chami, S.~Abu{-}El{-}Haija, B.~Perozzi, C.~R{\'{e}}, and K.~Murphy.
\newblock Machine learning on graphs: {A} model and comprehensive taxonomy.
\newblock \emph{arXiv preprint}, 2020.

\bibitem[Chuang and Jegelka(2022)]{chuang2022tree}
C.-Y. Chuang and S.~Jegelka.
\newblock Tree mover's distance: Bridging graph metrics and stability of graph neural networks.
\newblock \emph{Advances in Neural Information Processing Systems}, 2022.

\bibitem[Chuang et~al.(2021)Chuang, Mroueh, Greenewald, Torralba, and Jegelka]{Chuang+2021}
C.~Y. Chuang, Y.~Mroueh, K.~Greenewald, A.~Torralba, and S.~Jegelka.
\newblock Measuring generalization with optimal transport.
\newblock In \emph{Advances in Neural Information Processing Systems}, 2021.

\bibitem[Daniëls and Geerts(2024)]{Daniels+2024}
N.~Daniëls and F.~Geerts.
\newblock A note on the {VC} dimension of $1$-dimensional {GNN}s.
\newblock \emph{arXiv preprint}, 2024.

\bibitem[Defferrard et~al.(2016)Defferrard, Bresson, and Vandergheynst]{Defferrard2016}
M.~Defferrard, X.~Bresson, and P.~Vandergheynst.
\newblock Convolutional neural networks on graphs with fast localized spectral filtering.
\newblock In \emph{Advances in Neural Information Processing Systems}, pages 3837--3845, 2016.

\bibitem[D'Inverno et~al.(2024)D'Inverno, Bianchini, and Scarselli]{dinverno2024vc}
G.~A. D'Inverno, M.~Bianchini, and F.~Scarselli.
\newblock {VC} dimension of graph neural networks with {Pfaffian} activation functions.
\newblock \emph{arXiv preprint}, 2024.

\bibitem[Duval et~al.(2023)Duval, Mathis, Joshi, Schmidt, Miret, Malliaros, Cohen, Lio, Bengio, and Bronstein]{Duv+2023}
A.~Duval, S.~V. Mathis, C.~K. Joshi, V.~Schmidt, S.~Miret, F.~D. Malliaros, T.~Cohen, P.~Lio, Y.~Bengio, and M.~M. Bronstein.
\newblock A hitchhiker's guide to geometric {GNNs} for {3D} atomic systems.
\newblock \emph{arXiv preprint}, 2023.

\bibitem[Duvenaud et~al.(2015)Duvenaud, Maclaurin, Aguilera{-}Iparraguirre, G{\'{o}}mez{-}Bombarelli, Hirzel, Aspuru{-}Guzik, and Adams]{Duv+2015}
D.~Duvenaud, D.~Maclaurin, J.~Aguilera{-}Iparraguirre, R.~G{\'{o}}mez{-}Bombarelli, T.~Hirzel, A.~Aspuru{-}Guzik, and R.~P. Adams.
\newblock Convolutional networks on graphs for learning molecular fingerprints.
\newblock In \emph{Advances in Neural Information Processing Systems}, 2015.

\bibitem[Dwivedi et~al.(2023)Dwivedi, Joshi, Luu, Laurent, Bengio, and Bresson]{DwivediBenchmarking2023}
V.~P. Dwivedi, C.~K. Joshi, A.~T. Luu, T.~Laurent, Y.~Bengio, and X.~Bresson.
\newblock Benchmarking graph neural networks.
\newblock \emph{J. Mach. Learn. Res. (JMLR)}, 24:\penalty0 43:1--43:48, 2023.

\bibitem[Easley and Kleinberg(2010)]{Eas+2010}
D.~Easley and J.~Kleinberg.
\newblock \emph{Networks, Crowds, and Markets: Reasoning About a Highly Connected World}.
\newblock Cambridge University Press, 2010.

\bibitem[El{-}Yaniv and Pechyony(2007)]{Yan+2009}
R.~El{-}Yaniv and D.~Pechyony.
\newblock Transductive rademacher complexity and its applications.
\newblock In \emph{Annual Conference on Learning Theory}, 2007.

\bibitem[Esser et~al.(2021)Esser, Vankadara, and Ghoshdastidar]{Ess+2021}
P.~M. Esser, L.~C. Vankadara, and D.~Ghoshdastidar.
\newblock Learning theory can (sometimes) explain generalisation in graph neural networks.
\newblock In \emph{Advances in Neural Information Processing Systems}, 2021.

\bibitem[Franks et~al.(2023)Franks, Morris, Velingker, and Geerts]{Fra+2024}
B.~J. Franks, C.~Morris, A.~Velingker, and F.~Geerts.
\newblock {W}eisfeiler--{L}eman at the margin: When more expressivity matters.
\newblock \emph{arXiv preprint}, 2023.

\bibitem[Gama et~al.(2019)Gama, Marques, Leus, and Ribeiro]{Gam+2019}
F.~Gama, A.~G. Marques, G.~Leus, and A.~Ribeiro.
\newblock Convolutional neural network architectures for signals supported on graphs.
\newblock \emph{IEEE Transactions on Signal Processing}, 67\penalty0 (4):\penalty0 1034--1049, 2019.

\bibitem[Garg et~al.(2020)Garg, Jegelka, and Jaakkola]{Gar+2020}
V.~K. Garg, S.~Jegelka, and T.~S. Jaakkola.
\newblock Generalization and representational limits of graph neural networks.
\newblock In \emph{International Conference on Machine Learning}, 2020.

\bibitem[Gasse et~al.(2019)Gasse, Ch{\'{e}}telat, Ferroni, Charlin, and Lodi]{Gas+2019}
M.~Gasse, D.~Ch{\'{e}}telat, N.~Ferroni, L.~Charlin, and A.~Lodi.
\newblock Exact combinatorial optimization with graph convolutional neural networks.
\newblock In \emph{Advances in Neural Information Processing Systems}, 2019.

\bibitem[Gilmer et~al.(2017)Gilmer, Schoenholz, Riley, Vinyals, and Dahl]{Gil+2017}
J.~Gilmer, S.~S. Schoenholz, P.~F. Riley, O.~Vinyals, and G.~E. Dahl.
\newblock Neural message passing for quantum chemistry.
\newblock In \emph{International Conference on Learning Representations}, 2017.

\bibitem[Goldreich(2010)]{Gol2010}
O.~Goldreich.
\newblock Introduction to testing graph properties.
\newblock In \emph{Property Testing}. Springer, 2010.

\bibitem[Goller and K{\"{u}}chler(1996)]{Gol+1996}
C.~Goller and A.~K{\"{u}}chler.
\newblock Learning task-dependent distributed representations by backpropagation through structure.
\newblock In \emph{International Conference on Neural Networks}, 1996.

\bibitem[Grohe(2017)]{Gro2017}
M.~Grohe.
\newblock \emph{Descriptive Complexity, Canonisation, and Definable Graph Structure Theory}.
\newblock Cambridge University Press, 2017.

\bibitem[Grohe(2021)]{Gro+2021}
M.~Grohe.
\newblock The logic of graph neural networks.
\newblock In \emph{Symposium on Logic in Computer Science}, 2021.

\bibitem[Hamilton et~al.(2017)Hamilton, Ying, and Leskovec]{Ham+2017}
W.~L. Hamilton, Z.~Ying, and J.~Leskovec.
\newblock Inductive representation learning on large graphs.
\newblock In \emph{Advances in Neural Information Processing Systems}, pages 1024--1034, 2017.

\bibitem[Hammer(2001)]{Ham+2001}
B.~Hammer.
\newblock Generalization ability of folding networks.
\newblock \emph{{IEEE} Trans. Knowl. Data Eng.}, pages 196--206, 2001.

\bibitem[Handel(2014)]{Handel2014}
R.~V. Handel.
\newblock Probability in high dimension.
\newblock \emph{Princeton.Edu}, 2014.

\bibitem[Huang et~al.(2023)Huang, Orth, İsmail~İlkan Ceylan, and Barceló]{huang2023theory}
X.~Huang, M.~R. Orth, İsmail~İlkan Ceylan, and P.~Barceló.
\newblock A theory of link prediction via relational {W}eisfeiler-{L}eman on knowledge graphs.
\newblock In \emph{Advances in Neural Information Processing Systems}, 2023.

\bibitem[Janson(2004)]{DBLP:journals/rsa/Janson04}
S.~Janson.
\newblock Large deviations for sums of partly dependent random variables.
\newblock \emph{Random Structures \& Algorithms}, 24, 2004.

\bibitem[Ju et~al.(2023)Ju, Li, Sharma, and Zhang]{Ju+2023}
H.~Ju, D.~Li, A.~Sharma, and H.~R. Zhang.
\newblock Generalization in graph neural networks: Improved {PAC-Bayesian} bounds on graph diffusion.
\newblock \emph{arXiv preprint}, 2023.

\bibitem[Karczewski et~al.(2024)Karczewski, Souza, and Garg]{Kar+2024}
R.~Karczewski, A.~H. Souza, and V.~Garg.
\newblock On the generalization of equivariant graph neural networks.
\newblock In \emph{International Conference on Machine Learning}, 2024.

\bibitem[Karpinski and Macintyre(1997)]{Kar+1997}
M.~Karpinski and A.~Macintyre.
\newblock Polynomial bounds for {VC} dimension of sigmoidal and general {P}faffian neural networks.
\newblock \emph{Journal of Computer and System Sciences}, 54\penalty0 (1):\penalty0 169--176, 1997.

\bibitem[Kingma and Ba(2015)]{KingmaAdam2015}
D.~P. Kingma and J.~Ba.
\newblock Adam: {A} method for stochastic optimization.
\newblock In \emph{3rd International Conference on Learning Representations, {ICLR} 2015, San Diego, CA, USA, May 7-9, 2015, Conference Track Proceedings}, 2015.

\bibitem[Kipf and Welling(2016)]{Kip+2016}
T.~N. Kipf and M.~Welling.
\newblock Variational graph auto-encoders.
\newblock \emph{arXiv preprint}, 2016.

\bibitem[Kipf and Welling(2017)]{Kip+2017}
T.~N. Kipf and M.~Welling.
\newblock Semi-supervised classification with graph convolutional networks.
\newblock In \emph{International Conference on Learning Representations}, 2017.

\bibitem[Kireev(1995)]{Kir+1995}
D.~B. Kireev.
\newblock Chemnet: A novel neural network based method for graph/property mapping.
\newblock \emph{Journal of Chemical Information and Computer Sciences}, 35\penalty0 (2):\penalty0 175--180, 1995.

\bibitem[Kong et~al.(2022)Kong, Chen, and Zhang]{Kon+2022}
L.~Kong, Y.~Chen, and M.~Zhang.
\newblock Geodesic graph neural network for efficient graph representation learning.
\newblock In \emph{Advances in Neural Information Processing Systems}, 2022.

\bibitem[Kriege et~al.(2018)Kriege, Morris, Rey, and Sohler]{Kri+2018}
N.~M. Kriege, C.~Morris, A.~Rey, and C.~Sohler.
\newblock A property testing framework for the theoretical expressivity of graph kernels.
\newblock In \emph{International Joint Conference on Artificial Intelligence}, 2018.

\bibitem[Kriege et~al.(2020)Kriege, Johansson, and Morris]{Kri+2019}
N.~M. Kriege, F.~D. Johansson, and C.~Morris.
\newblock A survey on graph kernels.
\newblock \emph{Applied Network Science}, 5\penalty0 (1):\penalty0 6, 2020.

\bibitem[Lam et~al.(2023)Lam, Sanchez-Gonzalez, Willson, Wirnsberger, Fortunato, Alet, Ravuri, Ewalds, Eaton-Rosen, Hu, Merose, Hoyer, Holland, Vinyals, Stott, Pritzel, Mohamed, and Battaglia]{Lam+2023}
R.~Lam, A.~Sanchez-Gonzalez, M.~Willson, P.~Wirnsberger, M.~Fortunato, F.~Alet, S.~Ravuri, T.~Ewalds, Z.~Eaton-Rosen, W.~Hu, A.~Merose, S.~Hoyer, G.~Holland, O.~Vinyals, J.~Stott, A.~Pritzel, S.~Mohamed, and P.~Battaglia.
\newblock Learning skillful medium-range global weather forecasting.
\newblock \emph{Science}, 382\penalty0 (6677):\penalty0 1416--1421, 2023.

\bibitem[Langford and Shawe{-}Taylor(2002)]{DBLP:conf/nips/LangfordS02}
J.~Langford and J.~Shawe{-}Taylor.
\newblock {PAC-Bayes} {\&} margins.
\newblock In \emph{Advances in Neural Information Processing Systems}, 2002.

\bibitem[Lee et~al.(2024)Lee, Hwang, and Whang]{Wha+2024}
J.~Lee, M.~Hwang, and J.~J. Whang.
\newblock {PAC-Bayesian} generalization bounds for knowledge graph representation learning.
\newblock In \emph{International Conference on Machine Learning}, 2024.

\bibitem[Levie(2023)]{Lev+2023}
R.~Levie.
\newblock A graphon-signal analysis of graph neural networks.
\newblock In \emph{Advances in Neural Information Processing Systems}, 2023.

\bibitem[Levie et~al.(2019)Levie, Monti, Bresson, and Bronstein]{Lev+2019}
R.~Levie, F.~Monti, X.~Bresson, and M.~M. Bronstein.
\newblock Cayleynets: Graph convolutional neural networks with complex rational spectral filters.
\newblock \emph{{IEEE} Transactions on Signal Processing}, 67\penalty0 (1):\penalty0 97--109, 2019.

\bibitem[Li et~al.(2020)Li, Wang, Wang, and Leskovec]{li2020distance}
P.~Li, Y.~Wang, H.~Wang, and J.~Leskovec.
\newblock Distance encoding: Design provably more powerful neural networks for graph representation learning.
\newblock In \emph{Advances in Neural Information Processing Systems}, 2020.

\bibitem[Li et~al.(2024)Li, Geerts, Kim, and Wang]{Li+2024}
S.~Li, F.~Geerts, D.~Kim, and Q.~Wang.
\newblock Towards bridging generalization and expressivity of graph neural networks.
\newblock \emph{arXiv preprint}, 2024.

\bibitem[Liao et~al.(2021)Liao, Urtasun, and Zemel]{Lia+2021}
R.~Liao, R.~Urtasun, and R.~S. Zemel.
\newblock A {PAC-Bayesian} approach to generalization bounds for graph neural networks.
\newblock In \emph{International Conference on Machine Learning}, 2021.

\bibitem[Lovász(2012)]{Lov+2012}
L.~Lovász.
\newblock \emph{Large Networks and Graph Limits}.
\newblock American Mathematical Society, 2012.

\bibitem[Maskey et~al.(2022)Maskey, Lee, Levie, and Kutyniok]{Mas+2022}
S.~Maskey, Y.~Lee, R.~Levie, and G.~Kutyniok.
\newblock Generalization analysis of message passing neural networks on large random graphs.
\newblock In \emph{Advances in Neural Information Processing Systems}, 2022.

\bibitem[Maskey et~al.(2024)Maskey, Kutyniok, and Levie]{Mas+2024}
S.~Maskey, G.~Kutyniok, and R.~Levie.
\newblock Generalization bounds for message passing networks on mixture of graphons.
\newblock \emph{arXiv preprint}, 2024.

\bibitem[McAllester(1999)]{DBLP:conf/colt/McAllester99}
D.~A. McAllester.
\newblock {PAC-Bayesian} model averaging.
\newblock In \emph{Annual Conference on Learning Theory}, 1999.

\bibitem[McAllester(2003)]{DBLP:conf/colt/McAllester03}
D.~A. McAllester.
\newblock Simplified pac-bayesian margin bounds.
\newblock In \emph{Annual Conference on Learning Theory}, 2003.

\bibitem[Merkwirth and Lengauer(2005)]{Mer+2005}
C.~Merkwirth and T.~Lengauer.
\newblock Automatic generation of complementary descriptors with molecular graph networks.
\newblock \emph{Journal of Chemical Information and Modeling}, 45\penalty0 (5):\penalty0 1159--1168, 2005.

\bibitem[Micheli(2009)]{mic+2009}
A.~Micheli.
\newblock Neural network for graphs: {A} contextual constructive approach.
\newblock \emph{{IEEE} Transactions on Neural Networks}, 20\penalty0 (3):\penalty0 498--511, 2009.

\bibitem[Micheli and Sestito(2005)]{mic+2005}
A.~Micheli and A.~S. Sestito.
\newblock A new neural network model for contextual processing of graphs.
\newblock In \emph{Italian Workshop on Neural Nets Neural Nets and International Workshop on Natural and Artificial Immune Systems}, pages 10--17, 2005.

\bibitem[Mohri et~al.(2018)Mohri, Rostamizadeh, and Talwalkar]{Moh+2018}
M.~Mohri, A.~Rostamizadeh, and A.~Talwalkar.
\newblock \emph{Foundations of Machine Learning}.
\newblock MIT Press, 2018.

\bibitem[Monti et~al.(2017)Monti, Boscaini, Masci, Rodol{\`{a}}, Svoboda, and Bronstein]{Mon+2017}
F.~Monti, D.~Boscaini, J.~Masci, E.~Rodol{\`{a}}, J.~Svoboda, and M.~M. Bronstein.
\newblock Geometric deep learning on graphs and manifolds using mixture model cnns.
\newblock In \emph{{IEEE} Conference on Computer Vision and Pattern Recognition}, pages 5425--5434, 2017.

\bibitem[Morris et~al.(2019)Morris, Ritzert, Fey, Hamilton, Lenssen, Rattan, and Grohe]{Morris2019}
C.~Morris, M.~Ritzert, M.~Fey, W.~L. Hamilton, J.~E. Lenssen, G.~Rattan, and M.~Grohe.
\newblock Weisfeiler and {Leman} go neural: Higher-order graph neural networks.
\newblock \emph{AAAI}, 2019.

\bibitem[Morris et~al.(2020{\natexlab{a}})Morris, Kriege, Bause, Kersting, Mutzel, and Neumann]{Mor+2020}
C.~Morris, N.~M. Kriege, F.~Bause, K.~Kersting, P.~Mutzel, and M.~Neumann.
\newblock {TUDataset:} {A} collection of benchmark datasets for learning with graphs.
\newblock \emph{arXiv preprint}, 2020{\natexlab{a}}.

\bibitem[Morris et~al.(2020{\natexlab{b}})Morris, Rattan, and Mutzel]{Morris2020b}
C.~Morris, G.~Rattan, and P.~Mutzel.
\newblock {Weisfeiler and Leman} go sparse: Towards higher-order graph embeddings.
\newblock In \emph{Advances in Neural Information Processing Systems}, 2020{\natexlab{b}}.

\bibitem[Morris et~al.(2023)Morris, Geerts, T{\"{o}}nshoff, and Grohe]{Mor+2023}
C.~Morris, F.~Geerts, J.~T{\"{o}}nshoff, and M.~Grohe.
\newblock {WL} meet {VC}.
\newblock In \emph{International Conference on Machine Learning}, 2023.

\bibitem[Morris et~al.(2024)Morris, Frasca, Dym, Maron, Ceylan, Levie, Lim, Bronstein, Grohe, and Jegelka]{Mor+2024b}
C.~Morris, F.~Frasca, N.~Dym, H.~Maron, {\.I}.~{\.I}. Ceylan, R.~Levie, D.~Lim, M.~M. Bronstein, M.~Grohe, and S.~Jegelka.
\newblock Future directions in foundations of graph machine learning.
\newblock \emph{arXiv preprint}, 2024.

\bibitem[M{\"{u}}ller et~al.(2023)M{\"{u}}ller, Galkin, Morris, and Ramp{\'{a}}sek]{Mue+2023}
L.~M{\"{u}}ller, M.~Galkin, C.~Morris, and L.~Ramp{\'{a}}sek.
\newblock Attending to graph transformers.
\newblock \emph{arXiv preprint}, 2023.

\bibitem[Pechyony(2008)]{DBLP:phd/il/Pechyony08}
D.~Pechyony.
\newblock \emph{Theory and practice of transductive learning}.
\newblock PhD thesis, Technion - Israel Institute of Technology, Israel, 2008.

\bibitem[Pei et~al.(2020)Pei, Wei, Chang, Lei, and Yang]{DPeiGeom-GCN2020}
H.~Pei, B.~Wei, K.~C. Chang, Y.~Lei, and B.~Yang.
\newblock Geom-gcn: Geometric graph convolutional networks.
\newblock In \emph{8th International Conference on Learning Representations, {ICLR} 2020}, 2020.

\bibitem[Pellizzoni et~al.(2024)Pellizzoni, Schulz, Chen, and Borgwardt]{Pel+2024}
P.~Pellizzoni, T.~Schulz, D.~Chen, and K.~M. Borgwardt.
\newblock On the expressivity and sample complexity of node-individualized graph neural networks.
\newblock In \emph{Advances in Neural Information Processing Systems}, 2024.

\bibitem[Qian et~al.(2023)Qian, Ch{\'{e}}telat, and Morris]{Qia+2023}
C.~Qian, D.~Ch{\'{e}}telat, and C.~Morris.
\newblock Exploring the power of graph neural networks in solving linear optimization problems.
\newblock \emph{arXiv preprint}, abs/2310.10603, 2023.

\bibitem[Rauchwerger et~al.(2024)Rauchwerger, Jegelka, and Levie]{Rac+2024}
L.~Rauchwerger, S.~Jegelka, and R.~Levie.
\newblock Generalization, expressivity, and universality of graph neural networks on attributed graphs.
\newblock \emph{arXiv preprint}, 2024.

\bibitem[Sadeghian et~al.(2019)Sadeghian, Armandpour, Ding, and Wang]{sadeghian2019drum}
A.~Sadeghian, M.~Armandpour, P.~Ding, and D.~Z. Wang.
\newblock Drum: End-to-end differentiable rule mining on knowledge graphs.
\newblock \emph{Advances in Neural Information Processing Systems}, 32, 2019.

\bibitem[Satorras et~al.(2023)Satorras, Hoogeboom, and Welling]{Sat+2021}
V.~G. Satorras, E.~Hoogeboom, and M.~Welling.
\newblock E(n) equivariant graph neural networks.
\newblock In \emph{International Conference on Machine Learning}, 2023.

\bibitem[Scarselli et~al.(2009)Scarselli, Gori, Tsoi, Hagenbuchner, and Monfardini]{Sca+2009}
F.~Scarselli, M.~Gori, A.~C. Tsoi, M.~Hagenbuchner, and G.~Monfardini.
\newblock The graph neural network model.
\newblock \emph{IEEE Transactions on Neural Networks}, 20\penalty0 (1):\penalty0 61--80, 2009.

\bibitem[Scarselli et~al.(2018)Scarselli, Tsoi, and Hagenbuchner]{scarselli2018vapnik}
F.~Scarselli, A.~C. Tsoi, and M.~Hagenbuchner.
\newblock The {V}apnik--{C}hervonenkis dimension of graph and recursive neural networks.
\newblock \emph{Neural Networks}, 108:\penalty0 248--259, 2018.

\bibitem[Scavuzzo et~al.(2024)Scavuzzo, Aardal, Lodi, and Yorke{-}Smith]{Sca+2024}
L.~Scavuzzo, K.~I. Aardal, A.~Lodi, and N.~Yorke{-}Smith.
\newblock Machine learning augmented branch and bound for mixed integer linear programming.
\newblock \emph{arXiv preprint}, 2024.

\bibitem[Schlichtkrull et~al.(2018{\natexlab{a}})Schlichtkrull, Kipf, Bloem, Van Den~Berg, Titov, and Welling]{Kipf2018RGCN}
M.~Schlichtkrull, T.~N. Kipf, P.~Bloem, R.~Van Den~Berg, I.~Titov, and M.~Welling.
\newblock Modeling relational data with graph convolutional networks.
\newblock In \emph{The semantic web: 15th International Conference}, 2018{\natexlab{a}}.

\bibitem[Schlichtkrull et~al.(2018{\natexlab{b}})Schlichtkrull, Kipf, Bloem, van~den Berg, Titov, and Welling]{Sch+2018}
M.~S. Schlichtkrull, T.~N. Kipf, P.~Bloem, R.~van~den Berg, I.~Titov, and M.~Welling.
\newblock Modeling relational data with graph convolutional networks.
\newblock In \emph{The Semantic Web - International Conference}, 2018{\natexlab{b}}.

\bibitem[Sen et~al.(2008)Sen, Namata, Bilgic, Getoor, Gallagher, and Eliassi{-}Rad]{Sen2008Collective}
P.~Sen, G.~Namata, M.~Bilgic, L.~Getoor, B.~Gallagher, and T.~Eliassi{-}Rad.
\newblock Collective classification in network data.
\newblock \emph{{AI} Mag.}, 29\penalty0 (3):\penalty0 93--106, 2008.

\bibitem[Sperduti and Starita(1997)]{Spe+1997}
A.~Sperduti and A.~Starita.
\newblock Supervised neural networks for the classification of structures.
\newblock \emph{IEEE Transactions on Neural Networks}, 8\penalty0 (3):\penalty0 714--35, 1997.

\bibitem[Srinivasan and Ribeiro(2020)]{Bal+2020}
B.~Srinivasan and B.~Ribeiro.
\newblock On the equivalence between positional node embeddings and structural graph representations.
\newblock In \emph{International Conference on Learning Representations}, 2020.

\bibitem[Tang and Liu(2023)]{Tan+2023}
H.~Tang and Y.~Liu.
\newblock Towards understanding generalization of graph neural networks.
\newblock In \emph{International Conference on Machine Learning}, 2023.

\bibitem[Teru et~al.(2020)Teru, Denis, and Hamilton]{Ter+2020}
K.~K. Teru, E.~G. Denis, and W.~L. Hamilton.
\newblock Inductive relation prediction by subgraph reasoning.
\newblock In \emph{International Conference on Machine Learning}, 2020.

\bibitem[Vapnik(2006)]{DBLP:books/sp/Vapnik06}
V.~Vapnik.
\newblock \emph{Estimation of Dependences Based on Empirical Data}.
\newblock Springer, 2006.

\bibitem[Vapnik(1995)]{Vap+95}
V.~N. Vapnik.
\newblock \emph{The Nature of Statistical Learning Theory}.
\newblock Springer, 1995.

\bibitem[Vashishth et~al.(2020{\natexlab{a}})Vashishth, Sanyal, Nitin, and Talukdar]{Vashishth2020CompGCN}
S.~Vashishth, S.~Sanyal, V.~Nitin, and P.~Talukdar.
\newblock Composition-based multi-relational graph convolutional networks.
\newblock In \emph{International Conference on Learning Representations}, 2020{\natexlab{a}}.

\bibitem[Vashishth et~al.(2020{\natexlab{b}})Vashishth, Sanyal, Nitin, and Talukdar]{Vas+2020}
S.~Vashishth, S.~Sanyal, V.~Nitin, and P.~P. Talukdar.
\newblock Composition-based multi-relational graph convolutional networks.
\newblock In \emph{International Conference on Learning Representations}, 2020{\natexlab{b}}.

\bibitem[Vasileiou et~al.(2024{\natexlab{a}})Vasileiou, Finkelshtein, Geerts, Levie, and Morris]{Vas+2024}
A.~Vasileiou, B.~Finkelshtein, F.~Geerts, R.~Levie, and C.~Morris.
\newblock Covered forest: Fine-grained generalization analysis of graph neural networks.
\newblock \emph{arXiv preprint}, 2024{\natexlab{a}}.

\bibitem[Vasileiou et~al.(2024{\natexlab{b}})Vasileiou, Jegelka, Levie, and Morris]{Vas+2024b}
A.~Vasileiou, S.~Jegelka, R.~Levie, and C.~Morris.
\newblock Survey on generalization theory for graph neural networks.
\newblock \emph{arXiv preprint}, 2024{\natexlab{b}}.

\bibitem[Veli\v{c}kovi\'{c} et~al.(2018)Veli\v{c}kovi\'{c}, Cucurull, Casanova, Romero, Li{\`{o}}, and Bengio]{Vel+2018}
P.~Veli\v{c}kovi\'{c}, G.~Cucurull, A.~Casanova, A.~Romero, P.~Li{\`{o}}, and Y.~Bengio.
\newblock Graph attention networks.
\newblock In \emph{International Conference on Learning Representations}, 2018.

\bibitem[Verma and Zhang(2019)]{Ver+2019}
S.~Verma and Z.~Zhang.
\newblock Stability and generalization of graph convolutional neural networks.
\newblock In \emph{ACM SIGKDD Conference on Knowledge Discovery and Data Mining}, 2019.

\bibitem[Wang et~al.(2024{\natexlab{a}})Wang, Yang, and Zhang]{Wan+2024}
X.~Wang, H.~Yang, and M.~Zhang.
\newblock Neural common neighbor with completion for link prediction.
\newblock In \emph{International Conference on Learning Representations}, 2024{\natexlab{a}}.

\bibitem[Wang et~al.(2024{\natexlab{b}})Wang, Yang, and Zhang]{wang2024neural}
X.~Wang, H.~Yang, and M.~Zhang.
\newblock Neural common neighbor with completion for link prediction.
\newblock In \emph{International Conference on Learning Representations}, 2024{\natexlab{b}}.

\bibitem[Weisfeiler and Leman(1968)]{Wei+1968}
B.~Weisfeiler and A.~Leman.
\newblock The reduction of a graph to canonical form and the algebra which appears therein.
\newblock \emph{Nauchno-Technicheskaya Informatsia}, 2\penalty0 (9):\penalty0 12--16, 1968.

\bibitem[Wong et~al.(2023)Wong, Zheng, Valeri, Donghia, Anahtar, Omori, Li, Cubillos-Ruiz, Krishnan, Jin, Manson, Friedrichs, Helbig, Hajian, Fiejtek, Wagner, Soutter, Earl, Stokes, Renner, and Collins]{Won+2023}
F.~Wong, E.~J. Zheng, J.~A. Valeri, N.~M. Donghia, M.~N. Anahtar, S.~Omori, A.~Li, A.~Cubillos-Ruiz, A.~Krishnan, W.~Jin, A.~L. Manson, J.~Friedrichs, R.~Helbig, B.~Hajian, D.~K. Fiejtek, F.~F. Wagner, H.~H. Soutter, A.~M. Earl, J.~M. Stokes, L.~D. Renner, and J.~J. Collins.
\newblock Discovery of a structural class of antibiotics with explainable deep learning.
\newblock \emph{Nature}, 2023.

\bibitem[Xu and Mannor(2012)]{Xu+2012}
H.~Xu and S.~Mannor.
\newblock Robustness and generalization.
\newblock \emph{Machine Learning}, 86\penalty0 (3):\penalty0 391--423, 2012.

\bibitem[Xu et~al.(2019)Xu, Hu, Leskovec, and Jegelka]{Xu+How+Powerful+Are+GNNs+GIN2019}
K.~Xu, W.~Hu, J.~Leskovec, and S.~Jegelka.
\newblock How powerful are graph neural networks?
\newblock In \emph{International Conference on Learning Representations}, 2019.

\bibitem[Xu et~al.(2021)Xu, Zhang, Jegelka, and Kawaguchi]{Xu+2021}
K.~Xu, M.~Zhang, S.~Jegelka, and K.~Kawaguchi.
\newblock Optimization of graph neural networks: Implicit acceleration by skip connections and more depth.
\newblock In \emph{International Conference on Learning Representations}, 2021.

\bibitem[Yang et~al.(2017)Yang, Yang, and Cohen]{yang2017differentiable}
F.~Yang, Z.~Yang, and W.~W. Cohen.
\newblock Differentiable learning of logical rules for knowledge base reasoning.
\newblock \emph{Advances in Neural Information Processing Systems}, 30, 2017.

\bibitem[Yang et~al.(2016)Yang, Cohen, and Salakhutdinov]{YANGRevisiting2016}
Z.~Yang, W.~W. Cohen, and R.~Salakhutdinov.
\newblock Revisiting semi-supervised learning with graph embeddings.
\newblock In M.~Balcan and K.~Q. Weinberger, editors, \emph{Proceedings of the 33nd International Conference on Machine Learning, {ICML} 2016, New York City, NY, USA, June 19-24, 2016}, {JMLR} Workshop and Conference Proceedings, 2016.

\bibitem[Ye et~al.(2022)Ye, Kumar, Sing, Song, and Wang]{Ye+2022}
Z.~Ye, Y.~J. Kumar, G.~O. Sing, F.~Song, and J.~Wang.
\newblock A comprehensive survey of graph neural networks for knowledge graphs.
\newblock \emph{{IEEE} Access}, 10:\penalty0 75729--75741, 2022.

\bibitem[Yehudai et~al.(2021)Yehudai, Fetaya, Meirom, Chechik, and Maron]{Yeh+2021}
G.~Yehudai, E.~Fetaya, E.~A. Meirom, G.~Chechik, and H.~Maron.
\newblock From local structures to size generalization in graph neural networks.
\newblock In \emph{International Conference on Machine Learning}, 2021.

\bibitem[You et~al.(2021)You, Gomes{-}Selman, Ying, and Leskovec]{You+2020}
J.~You, J.~Gomes{-}Selman, R.~Ying, and J.~Leskovec.
\newblock Identity-aware graph neural networks.
\newblock In \emph{{AAAI} Conference on Artificial Intelligence}, pages 10737--10745, 2021.

\bibitem[Yu et~al.(2020)Yu, Yang, Zhang, and Wu]{Yu2020GEMGCN}
D.~Yu, Y.~Yang, R.~Zhang, and Y.~Wu.
\newblock Generalized multi-relational graph convolution network.
\newblock In \emph{ICML Workshop on Graph Representation Learning and Beyond}, 2020.

\bibitem[Yun et~al.(2021)Yun, Kim, Lee, Kang, and Kim]{Yun+2021}
S.~Yun, S.~Kim, J.~Lee, J.~Kang, and H.~J. Kim.
\newblock Neo-gnns: Neighborhood overlap-aware graph neural networks for link prediction.
\newblock In \emph{Advances in Neural Information Processing Systems}, 2021.

\bibitem[Zhang and Chen(2018)]{Zha+2018}
M.~Zhang and Y.~Chen.
\newblock Link prediction based on graph neural networks.
\newblock In \emph{Advances in Neural Information Processing Systems}, 2018.

\bibitem[Zhang et~al.(2021)Zhang, Li, Xia, Wang, and Jin]{Zha+2021}
M.~Zhang, P.~Li, Y.~Xia, K.~Wang, and L.~Jin.
\newblock Labeling trick: {A} theory of using graph neural networks for multi-node representation learning.
\newblock In \emph{Advances in Neural Information Processing Systems}, 2021.

\bibitem[Zhang et~al.(2023)Zhang, Wang, Helwig, Luo, Fu, Xie, Liu, Lin, Xu, Yan, Adams, Weiler, Li, Fu, Wang, Yu, Xie, Fu, Strasser, Xu, Liu, Du, Saxton, Ling, Lawrence, St{\"{a}}rk, Gui, Edwards, Gao, Ladera, Wu, Hofgard, Tehrani, Wang, Daigavane, Bohde, Kurtin, Huang, Phung, Xu, Joshi, Mathis, Azizzadenesheli, Fang, Aspuru{-}Guzik, Bekkers, Bronstein, Zitnik, Anandkumar, Ermon, Li{\`{o}}, Yu, G{\"{u}}nnemann, Leskovec, Ji, Sun, Barzilay, Jaakkola, Coley, Qian, Qian, Smidt, and Ji]{Zha+2023c}
X.~Zhang, L.~Wang, J.~Helwig, Y.~Luo, C.~Fu, Y.~Xie, M.~Liu, Y.~Lin, Z.~Xu, K.~Yan, K.~Adams, M.~Weiler, X.~Li, T.~Fu, Y.~Wang, H.~Yu, Y.~Xie, X.~Fu, A.~Strasser, S.~Xu, Y.~Liu, Y.~Du, A.~Saxton, H.~Ling, H.~Lawrence, H.~St{\"{a}}rk, S.~Gui, C.~Edwards, N.~Gao, A.~Ladera, T.~Wu, E.~F. Hofgard, A.~M. Tehrani, R.~Wang, A.~Daigavane, M.~Bohde, J.~Kurtin, Q.~Huang, T.~Phung, M.~Xu, C.~K. Joshi, S.~V. Mathis, K.~Azizzadenesheli, A.~Fang, A.~Aspuru{-}Guzik, E.~J. Bekkers, M.~M. Bronstein, M.~Zitnik, A.~Anandkumar, S.~Ermon, P.~Li{\`{o}}, R.~Yu, S.~G{\"{u}}nnemann, J.~Leskovec, H.~Ji, J.~Sun, R.~Barzilay, T.~S. Jaakkola, C.~W. Coley, X.~Qian, X.~Qian, T.~E. Smidt, and S.~Ji.
\newblock Artificial intelligence for science in quantum, atomistic, and continuum systems.
\newblock \emph{arXiv preprint}, 2023.

\bibitem[Zhu et~al.(2021)Zhu, Zhang, Xhonneux, and Tang]{Zhu+2021}
Z.~Zhu, Z.~Zhang, L.~A.~C. Xhonneux, and J.~Tang.
\newblock Neural bellman-ford networks: {A} general graph neural network framework for link prediction.
\newblock In \emph{Advances in Neural Information Processing Systems}, 2021.

\end{thebibliography}

\newpage

\appendix
\thispagestyle{empty}
\onecolumn

\section{Related works}
\label{app_sec:rel_work}

In the following, we discuss relevant related work.

\textbf{Graph neural networks} GNNs are a family of neural architectures designed to compute vectorial representations for the nodes of a given graph while encoding structural information about both the graph and its nodes. Recently,  MPNNs~\citep{Gil+2017,Sca+2009} have emerged as the most prominent architecture in graph machine learning. Notable instances of this architecture include, e.g.,~\citet{Duv+2015,Ham+2017,Kip+2017}, and~\citet{Vel+2018}, which can be subsumed under the message-passing framework introduced in~\citet{Gil+2017}. In parallel, approaches based on spectral information were introduced in, e.g.,~\citet{Bru+2014,Defferrard2016,Gam+2019,Kip+2017,Lev+2019}, and~\citet{Mon+2017}---all of which descend from early work in~\citet{bas+1997,Gol+1996,Kir+1995,Mer+2005,mic+2005,mic+2009,Sca+2009}, and~\citet{Spe+1997}. MPNNs have been applied to graph-, node-, and link-level prediction settings~\citep{Cha+2020,Vas+2024}.

\textbf{Node and link prediction using MPNNs} Since MPNNs compute a vectorial representation for each node, utilizing them for node-level prediction is straightforward. However, using them for link prediction is less straightforward. Hence, an extensive set of papers proposing MPNN architectures for link prediction exists~\citep{Ye+2022}. One of the earliest approaches, e.g., \citep{Kip+2016,Sch+2018,Vas+2020}, utilized MPNNs to compute a vectorial representation for each node, which is subsequently used to predict the existence of a link between two nodes. However, such two-stage approaches are not expressive enough for link prediction~\citep{Bal+2020,Zha+2021}. \citet{Zha+2018} proposed an architecture that, for two given nodes $v$ and $w$, computes the union of the subgraphs induced by all nodes within a pre-specified shortest-path distance of either $v$ or $w$, labels nodes with their distances according to $v$ and $w$, respectively, and uses an MPNN on top of this subgraph to predict the existence of a link between $v$ and $w$. \citet{Ter+2020} proposed a related method, taking the intersection instead of the union. \citet{Zha+2021}, introduced the \new{labeling trick} for MPNNs, which, assuming a directed graph, essentially labels the \say{source} and \say{target} nodes with unique labels, and potentially labels the other nodes with specific labels as well, and runs an MPNN on top of this specifically labeled graph to predict the existence of a link between the two nodes. They demonstrated that their labeling trick framework encompasses other architectures, such as those presented by \citet{li2020distance} and \citet{You+2020}. A more refined version of this idea, also leading to improved empirical results, was introduced in \citet{Zhu+2021}, introducing \emph{NBFNet}. See also~\citet{Kon+2022,Cha+2023,Wan+2024,Yun+2021} for more efficient variants. Subsequently, the works of~\citet{Zha+2021,Zhu+2021} were studied theoretically in~\citet{huang2023theory}, showing that their expressive power is upper bounded by a local variant of the $2$-dimensional Weisfeiler--Leman algorithm~\citep{Morris2020b,Bar+2022} and introduced an MPNN with the same expressive power as the former. 

An extensive set of works proposes MPNN architectures for link prediction in knowledge graphs, including embedding and path-based approaches predating MPNN approaches; see~\citet{Ali+2022,Ye+2022} for surveys. 

\textbf{Generalization analysis of MPNNs for node-level prediction} In a first attempt to understand the generalization abilities of MPNNs,~\citet{scarselli2018vapnik} leveraged classical techniques from learning theory~\citep{Kar+1997} to show that MPNNs' \new{VC dimension}~\citep{Vap+95} for node-level prediction tasks with piece-wise polynomial activation functions on a \emph{fixed} graph, under various assumptions, is in $\cO(P^2n\log n)$, where $P$ is the number of parameters and $n$ is the order of the input graph; see also~\citet{Ham+2001}. We note here that~\citet{scarselli2018vapnik} analyzed a different type of MPNN not aligned with modern MPNN architectures~\citep{Gil+2017}; see also~\citet{dinverno2024vc}. Moreover, the work does not account for non-i.i.d.\@ samples, and due to their reliance on VC dimension theory, they are bound to the binary-classification task and the use of the impractical $0$-$1$ loss function. \citet{Ver+2019} derived generalization bounds for node classification tasks, assuming that nodes are sampled in an i.i.d.\@ fashion from the given graph. They utilized the algorithmic stability~\citep{stabilityoriginal} to derive a generalization error bound for a single-layer MPNN layer, demonstrating that the \new{algorithmic stability} property strongly depends on the largest absolute eigenvalue of the graph convolution filter. 

In the transductive setting, building on the \emph{transductive Rademacher average framework} of~\citet{Yan+2009},~\citet{Ess+2021} derived generalization bounds that depend on the maximum norm (maximum absolute row sum) of the graph operator, e.g., the adjacency matrix or the graph's Laplacian; see also~\cite{Tan+2023} for refined results. \citet{Bar+2021a} studied the classification of a mixture of Gaussians, where the data corresponds to the node features of a stochastic block model, deriving conditions under which the mixture model is linearly separable using the GCN layer~\citep{Kip+2017}.

\textbf{Generalization analysis of MPNNs for link-level prediction} There is little work analyzing link prediction architectures' generalization abilities. \citet{Wha+2024} analyzed a large set of transductive link prediction architectures for knowledge graphs using a PAC-Bayesian analysis~\citep{DBLP:conf/colt/McAllester99,DBLP:conf/colt/McAllester03,DBLP:conf/nips/LangfordS02}. However,~\citet{Wha+2024} are restricted to the impractical margin loss, only consider less expressive MPNN-based link prediction architectures, e.g.,~\citet{Sch+2018,Vas+2020}, not considering modern link prediction architectures, and do not account for the influence of graph structure. 

\textbf{Generalization analysis of MPNNs for graph-level prediction} \citet{Gar+2020} showed that the empirical Rademacher complexity (see, e.g.,~\citet{Moh+2018}) of a specific, simple MPNN architecture, using sum aggregation and specific margin loss, is bounded in the maximum degree, the number of layers, Lipschitz constants of activation functions, and parameter matrices' norms. We note here that their analysis assumes weight sharing across layers. Recently,~\cite{Kar+2024} lifted this approach to $E(n)$-equivariant MPNNs~\citep{Sat+2021}. \citet{Lia+2021} refined the results of \citet{Gar+2020} via a PAC-Bayesian approach, further refined in~\citet{Ju+2023}. \citet{Mas+2022,Mas+2024} assumed that data is generated by random graph models, leading to MPNNs' generalization analysis depending on the (average) number of nodes of the graphs. In addition,~\citet{Lev+2023} and~\citet{Rac+2024} defined metrics on attributed graphs, resulting in a generalization bound for MPNNs depending on the covering number of these metrics.  Recently,~\cite{Mor+2023} made progress connecting MPNNs' expressive power and generalization ability via the Weisfeiler--Leman hierarchy. They studied the influence of graph structure and the parameters' encoding lengths on MPNNs' generalization by tightly connecting \new{$1$-dimensional Weisfeiler--Leman algorithm} (\wlone) expressivity and MPNNs' VC dimension. They derived that MPNNs' VC dimension depends tightly on the number of equivalence classes computed by the \wlone{} over a given set of graphs. Moreover, they showed that MPNNs' VC dimension depends logarithmically on the number of colors computed by the \wlone{} and polynomially on the number of parameters. Since relying on the \wlone{}, their analysis implicitly assumes a discrete pseudo-metric space, where two graphs are either equal or far apart. One VC lower bound reported in \citet{Mor+2023} was tightened in \citet{Daniels+2024} to MPNNs restricted to using a single layer and a width of one. In addition,~\citet{Pel+2024} extended the analysis of~\citet{Mor+2023} to node-individualized MPNNs and devised a Rademacher-complexity-based approach using a covering number argument~\cite{Bar+2017}. \citet{Fra+2024} studied the VC dimension of MPNNs, assuming linearly separable data, and demonstrated a tight relationship to the data's margin, which partially explains why more expressive architectures lead to better generalization. \cite{Li+2024} build on the margin-based generalization framework proposed by \cite{Chuang+2021}, which is based on $k$-Variance and the Wasserstein distance. They provide a method to analyze how expressiveness affects the inter- and intra-class concentration of graph embeddings. \citet{Kri+2018} leveraged results from graph property testing~\citep{Gol2010} to study the sample complexity of learning to distinguish various graph properties, e.g., planarity or triangle freeness, using graph kernels~\citep{Borg+2020,Kri+2019}. Most recently, building on the robustness framework of~\citet{Xu+2012},~\citet{Vas+2024} derived the generalization abilities of MPNNs for graph-level predictions by studying different pseudo-metrics that capture MPNNs' computation, thereby improving upon the results in~\citet{Mor+2023}. Finally,~\cite{Yeh+2021} showed negative results for MPNNs' generalization ability to larger graphs.

See~\citet{Vas+2024b} for a survey on generalization analyses of MPNNs and related architectures.

\section{Extended background}
\label{app_sec:background}
%While our framework offers a principled approach to studying the generalization of MPNNs, it also comes with a few limitations. First, computing the proposed bounds, especially in the link prediction setting, can be computationally intensive due to the cost of evaluating the unrolling-based pseudo-metrics. In addition, the Lipschitz constant, which plays a central role in the bound can be quite large for some architectures, potentially resulting in loose estimates. Our analysis also depends on the use of pooling functions that satisfy the sub-sum property, which excludes widely used mechanisms such as mean pooling and attention layers. Moreover, although the bounds are empirically informative, they are not tight in a formal theoretical sense. Finally, we do not yet provide a direct comparison between our pseudo-metrics and existing alternatives like the cut-distance, a direction we leave for future work.

\subsection{Detailed notation}
The following provides a detailed summary of our notation.

\textbf{Basic notations} Let $\Nb \coloneq \{ 1, 2, \ldots \}$ and $\Nb_0 \coloneq \Nb \cup \{ 0 \}$. The set $\Rb^+$ denotes the set of non-negative real numbers. For $n \in \Nb$, let $[n] \coloneq \{ 1, \dotsc, n \} \subset \Nb$. We use $\oms \dotsc \cms$ to denote multisets, i.e., the generalization of sets allowing for multiple, finitely many instances for each of its elements. For an arbitrary set $X$, we denote by $2^X$ the set consisting of all possible subsets of $X$. For two non-empty sets $X$ and $Y$, let $Y^X$ denote the set of functions from $X$ to $Y$. Given a set $X$ and a subset $A \subset X$, we define the indicator function $1_A \colon X \to \{0,1\}$ such that $1_A(x) = 1$ if $x \in A$, and $1_A(x) = 0$ otherwise. Let $\vec{M}$ be an $n \times m$ matrix, $n>0$ and $m>0$, over $\Rb$, then $\vec{M}_{i,\cdot}$,  $\vec{M}_{\cdot,j}$, $i \in [n]$, $j \in [m]$, are the $i$th row and $j$th column, respectively, of the matrix $\vec{M}$. Let $\vec{N}$ be an $n \times n$ matrix, $n>0$, then the \new{trace} $\tr(\vec{N}) \coloneq \sum_{i \in [n]} N_{ii}$.  In what follows, $\vec{0}$ denotes an all-zero vector with an appropriate number of components.

\textbf{Norms} Given a vector space $V$, a \new{norm} is a function $\| \cdot \| \colon V \to \Rb^+$ which satisfies the following properties. For all vectors $\vec{u}, \vec{v} \in V$ and scalar $s\in\Rb$, we have (i)~\new{non-negativity,} $\| \vec{v} \| \geq 0$ with $\| \vec{v} \| = 0$ if, and only, if $\vec{v} = \vec{0}$; (ii)~\new{scalar multiplication,} $\| s\vec{v} \| = |s|\, \| \vec{v} \|$; and the (iii)~\new{triangle inequality} holds, $\| \vec{u}+ \vec{v} \| \leq \| \vec{u} \| + \| \vec{v} \|$. When $V$ is some real vector space, say $\Rb^{1 \times d}$, for $d > 0$, here, and in the remainder of the paper,  $\|\cdot\|_1$ and $\|\cdot\|_2$  refer to the \new{$1$-norm} $\|\vec{x}\|_1 \coloneq |x_1|+\cdots+|x_d|$ and \new{$2$-norm} $\|\vec{x}\|_2 \coloneq\sqrt{x_1^2+\cdots+x_d^2}$ , respectively, for $\vec{x}\in\Rb^{1 \times d}$. When considering the vector space $\Rb^{n \times n}$ of square $n \times n$ matrices, a \new{matrix norm} $\|\cdot\| $ is a norm as described above, with the additional property that
$\|\vec{M}\vec{N}\| \leq \|\vec{M}\| \|\vec{N}\|$
for all matrices $ \vec{M} $ and $ \vec{N} $ in $ \Rb^{n \times n}$. Finally, for two vectors $\vec{u} \in \Rb^{d_1}$ and $\vec{v} \in \Rb^{d_2}$, we denote by $\vec{u} \| \vec{v} \in \Rb^{d_1 + d_2}$ the concatenation of the two vectors.

\textbf{Graphs} An \new{(undirected) graph} $G$ is a pair $(V(G),E(G))$ with \emph{finite} sets of \new{nodes} $V(G)$ and \new{edges} $E(G) \subseteq \{ \{u,v\} \subseteq V(G) \mid u \neq v \}$. The \new{order} of a graph $G$ is its number $|V(G)|$ of nodes. We denote the set of all $n$-order (undirected) graphs by $\cG_n$. In a \new{directed graph}, we define $E(G) \subseteq V(G)^2$, where each edge $(u,v)$ has a direction from $u$ to $v$. The \new{chromatic number} $\chi(G)$ of a graph $G$ is the minimum number of colors required to color the nodes of $G$ such that no two adjacent nodes share the same color. Given a directed graph $G$ and nodes $u,v \in V(G)$, we say that $v$ is a \new{child} of $u$ if $(u,v) \in E(G)$. If a node has no children, we refer to this node as a \textit{leaf}. Given a (directed) graph $G$ and nodes $u,v \in V(G)$, we call a path from $u$ to $v$ a tuple $p=(u_1,\ldots,u_{k+1}) \in V(G)^{k}$ such that $(u_i,u_{i+1})\in E(G)$, for all $i \in [k]$, $u_i \neq u_{j}$, for $i\neq j$, $u_1=u$ and $u_{k+1}=v$. We refer to $k$ as the length of the path, and we write $\text{length}(p)=k$. We denote by $\mathcal{P}_G(u,v)$ the set of all possible paths from $u$ to $v$ on a graph $G$, and by $\mathcal{P}^{(k)}(u,v)$ the set of all possible paths from $u$ to $v$ with length $k$ on a graph $G$. Analogously, for undirected graphs by replacing $(u_{i-1},u_i)$ with $\{u_{i-1},u_i\}$. A graph $G$ is called \new{connected} if, for any $u,v \in V(G)$, a path exists from $u$ to $v$. We say that a graph $G$ is \new{disconnected} if it is not connected. For an $n$-order graph $G \in \cG_n$, assuming $V(G) = [n]$, we denote its \new{adjacency matrix} by $\vec{A}(G) \in \{ 0,1 \}^{n \times n}$, where $\vec{A}(G)_{vw} = 1$ if, and only, if $\{v,w\} \in E(G)$. For a graph $G$, the $k$-\new{neighborhood} of a node $v \in V(G)$ denoted by $N^{(k)}_G(v)$ contains all nodes in a path of length at most $k$ from v. When $k=1$, we refer to elements of $N^{(1)}_G(v)$ as neighbors of $v$ and we omit $k$ in the notation. The \new{degree} of a node $v$ is $|N_G(v)|$. When referring to a directed graph, we use the notation $N^{(k)}_{G,\text{out}}(v)$. For $S \subseteq V(G)$, the graph $G[S] \coloneq (S,E_S)$ is the \new{subgraph induced by $S$}, where $E_S \coloneq \{ (u,v) \in E(G) \mid u,v \in S \}$. A \new{(node-)featured graph} is a pair $(G,a_G)$ with a graph $G = (V(G),E(G))$ and a function $a_G \colon V(G) \to \Sigma$, where $\Sigma$ is an arbitrary set. Similarly, we define \new{(edge-)featured graphs}.  For a node $v \in V(G)$, $a_G(v)$ denotes its \new{feature}. We denote the class of all (undirected) graphs with $d$-dimensional, real-valued node features by $\cG_{d}^{\Rb}$. A \new{knowledge graph} is a directed edge-featured graph $G = (V(G),E(G),R)$, where $E(G) \subseteq V \times V \times R$ for a finite set $R$ of relation types. Each edge $(v,x,r) \in E(G)$ is labeled with a relation type $r \in R$. For a node $v \in V$, we define its \new{relation-aware neighborhood} as $N_r(v) \coloneq \{ (x, r) \in V \times R \mid (v, x, r) \in E(G) \}$, where each neighbor $x$ is connected to $v$ by an edge labeled with relation $r$. A \new{query vector} $\vec{w}_q \in \Rb^d$ encodes a query-specific signal that influences how information is passed along edges during message propagation. For each edge $(v,x,r)$, the embedding $\vec{w}_q(v,x,r)$ helps the model focus on paths and structures that are relevant to the query $q$.

\textbf{Trees} A graph $G$ is a \new{tree} if it is connected, but for any $e \in E(G)$ the graph $G\setminus \{e\}$ with $V(G\setminus \{e\})=V(G)$ and $E(G\setminus \{e\})=E(G)\setminus \{e\}$ is disconnected. A tree or a disjoint collection of trees is known as a \new{forest}. A \new{rooted tree} $(T,r)$ or $T_r$ is a tree where a specific node $r \in V(T)$ is marked as the \new{root}. For a rooted (undirected) tree $T_r$, we can define an implicit direction on all edges as pointing away from the root; thus, when we refer to the \new{children} of a node $u$ in a rooted tree, we implicitly consider this directed structure. To distinguish between a rooted tree $T_r$ and its directed counterpart, we use the notation $\overrightarrow{T_r}$ to denote directed rooted trees. For a rooted tree $(T,r)$, and $L\in\Nb$, we define the $L$ level of $(T,r)$ as the set of nodes $v \in V(T)$ satisfying the equation $\min\{\text{length}(p) \mid p \in \mathcal{P}_T(r,u)\}=L$.

\textbf{Graph isomorphisms}
Two graphs $G$ and $H$ are \new{isomorphic} if there exists a bijection $\varphi \colon V(G) \to V(H)$ that preserves adjacency, i.e., $(u,v) \in E(G)$ if, and only, if $(\varphi(u),\varphi(v)) \in E(H)$. In the case of node (or edge)-featured graphs, we additionally require that $a_G(v) = a_H(\varphi(v))$ for $v \in V(G)$ (or $a_G(e) = a_H(\varphi(e))$ for $e \in E(G)$) and for rooted trees, we further require that the root is mapped to the root. Moreover, we call the equivalence classes induced by $\simeq$ \emph{isomorphism types} and denote the isomorphism type of $G$ by $\tau(G)$. A \new{graph class} is a set of graphs closed under isomorphism.

\subsection{Metric spaces and continuity}
\label{app_sec:metric_spaces_continuity}
The following summarizes the foundational concepts of metric spaces and Lipschitz continuity used throughout this work.

\textbf{Metric spaces} In the remainder of the paper, \say{distances} between graphs play an essential role, which we make precise by defining a \new{pseudo-metric} (on the set of graphs). Let $\cX$ be a set equipped with a pseudo-metric $d \colon \cX\times \cX\to\Rb^+$, i.e., $d$ is a function satisfying $d(x,x)=0$ and $d(x,y)=d(y,x)$ for $x,y\in\cX$, and $d(x,y)\leq d(x,z)+d(z,y)$, for $x,y,z \in \cX$. The latter property is called the triangle inequality. The pair $(\cX,d)$ is called a \new{pseudo-metric space}. For $(\cX,d)$ to be a \new{metric space}, $d$ additionally needs to satisfy $d(x,y)=0\Rightarrow x=y$, for $x,y\in\cX$.\footnote{Observe that computing a metric on the set of graphs $\cG$ up to isomorphism is at least as hard as solving the graph isomorphism problem on $\cG$.}

\textbf{Covering numbers and partitions}
Let $(\cX,d)$ be a pseudo-metric space. Given an $\varepsilon>0$, an \new{$\varepsilon$-cover} of $\cX$ is a subset $C\subseteq \cX$ such that for all elements $x\in\cX$ there is an element $y\in C$ such that $d(x,y) \leq \varepsilon$. Given $\varepsilon > 0$ and a pseudo-metric $d$ on the set $\cX$, we define the \new{covering number} of $\cX$,
\begin{equation*}
	\cN(\cX,d,\varepsilon)\coloneqq\min\{m \mid \text{there is an $\varepsilon$-cover of $\cX$ of cardinality $m$} \},
\end{equation*}
i.e., the smallest number $m$ such that there exists a $\varepsilon$-cover of cardinality $m$ of the set $\cX$  with regard to the pseudo-metric $d$.

The covering number provides a direct way of constructing a partition of $\cX$. Let $K \coloneq \cN(\cX,d,\varepsilon)$ so that, by definition of the covering number, there is a subset $\{ x_1,\ldots,x_K \} \subset \cX$ representing an $\varepsilon$-cover of $\cX$.
We define a partition $\{ C_1, \ldots, C_K \}$ where
\begin{equation*}
	C_i \coloneq \{x\in\cX\mid d(x,x_i)=\min_{j\in[K]} d(x,x_j)\},
\end{equation*}
for $i \in [K]$. To break ties, we take the smallest $i$ in the above. Observe that $\cX = \bigcup_{i \in [K]} C_i$. We recall that the \new{diameter} of a set is the maximal distance between any two elements in the set. Implied by the definition of an $\varepsilon$-cover and the triangle inequality, each $C_i$
has a diameter of at most $2\varepsilon$.

\textbf{Continuity on metric spaces} Let $(\cX,d_\cX)$ and $(\cY,d_\cY)$ be two pseudo-metric spaces. A function $ f \colon \cX \to \cY $ is called \new{$c_f$-Lipschitz continuous} if, for $ x,x' \in \cX $,
\begin{equation*}
	d_{\cY} (f(x),f(x'))  \leq c_f \cdot d_{\cX}(x,x').
\end{equation*}

\section{Measure Theory and Random Variables}
\label{app:measurespacesandrandomvariables}

In this section, we provide the necessary background from measure theory. Specifically, we formally define measure spaces, probability spaces, random variables, distributions, and expectations.

\begin{definition}[$\sigma$-algebra]
Let $X$ be a set, and let $2^X = \{A \mid A \subset X\}$ denote its power set. A subset $\cF \subset 2^X$ is called a \emph{$\sigma$-algebra} on $X$ if it satisfies the following properties:
\begin{itemize}
    \item $X \in \cF$,
    \item If $A \in \cF$, then $X \setminus A \in \cF$ (closed under complementation),
    \item If $\{A_n\}_{n \in \Nb}$ is a sequence of sets in $\cF$, then $\bigcup_{n \in \Nb} A_n \in \cF$ (closed under countable unions).
\end{itemize}
If $X$ is a set and $\cF$ is a $\sigma$-algebra on $X$, the pair $(X,\cF)$ is called a \emph{measurable space}.
\end{definition}

Note that for any set $X$, the power set $2^X$ and the set $\{\emptyset, X\}$ are $\sigma$-algebras on $X$. Furthermore, given a set $X$ and a collection $\cF \subset 2^X$, we define $\sigma(\cF)$ as the smallest $\sigma$-algebra on $X$ containing $\cF$. This is well defined, as $2^X$ is always a $\sigma$-algebra. Verifying that a countable union of $\sigma$-algebras is again a $\sigma$-algebra is also straightforward.

\begin{definition}[Measure space]
Given a measurable space $(X, \cF)$, a function $\mu \colon \cF \to \Rb^{+}$ is called a \emph{measure} on $(X, \cF)$ if:
\begin{itemize}
    \item $\mu(\emptyset) = 0$,
    \item If $\{A_n\}_{n \in \Nb}$ is a collection of pairwise disjoint sets in $\cF$, then 
    \begin{equation*}
        \mu\left(\bigcup_{n \in \Nb} A_n \right) = \sum_{n \in \Nb} \mu(A_n).
    \end{equation*}
\end{itemize}
The triplet $(X, \cF, \mu)$ is called a \emph{measure space}.
\end{definition}

If, in addition, $\mu(X) = 1$, then $(X, \cF, \mu)$ is called a \emph{probability space}, and $\mu$ is referred to as a \emph{probability measure}. Below, we formally define random variables and their distributions.

\begin{definition}[Random variable and distribution]
Let $(\Omega, \cF, P)$ be a probability space and $(E, \mathcal{E})$ a measurable space. A function $X \colon \Omega \to E$ is called a \emph{random variable} if it is $\mathcal{E}$-measurable, i.e., 
\begin{equation*}
    X^{-1}(A) \in \cF, \quad \forall A \in \mathcal{E}.
\end{equation*}
Furthermore, the function $P^{X} \colon \mathcal{E} \to [0,1]$ defined by
\begin{equation*}
    P^{X}(A) = P(\{\omega \mid X(\omega) \in A\})
\end{equation*}
is a probability measure on $(E, \mathcal{E})$, and it is called the \emph{distribution} induced by $X$. 
\end{definition}

If two random variables $X$ and $Y$ satisfy $P^{X} \equiv P^{Y}$, then we say that $X$ and $Y$ are \emph{identically distributed}. Additionally, when $E = \Rb^d$ for some $d \in \Nb$, the $\sigma$-algebra $\mathcal{E}$ is implicitly assumed to be the Borel $\sigma$-algebra, denoted by $\mathcal{B}(\Rb^d)$. This is the smallest $\sigma$-algebra containing all open subsets of $\Rb^d$. In any other case where the $\sigma$-algebra is omitted, we implicitly refer to the power set of the space.

Following the notation in the definition of a random variable, given a random variable $X$, we denote by $\sigma(X)$ the smallest $\sigma$-algebra on $E$ such that $X \colon (\Omega, \cF) \to (E, \sigma(X))$ is measurable. Below, we introduce the concept of dependence between random variables. Note that all integrals below refer to the Lebesgue integral.

\begin{definition}[Expectation and conditional expectation]
Let $(\Omega, \cF, P)$ be a probability space, and let $X \colon \Omega \to \Rb$ be a random variable. Given $A \in \cF$ with $P(A) > 0$, we define:
\begin{itemize}
    \item The expectation of $X$:
    \begin{equation*}
        \Eb(X) = \int_{\Omega} X(\omega) P(d\omega).
    \end{equation*}
    If $\Eb(|X|) < \infty$, we say that $X$ is \emph{integrable}.
    
    \item The conditional expectation of $X$ given event $A$:
    \begin{equation*}
        \Eb(X | A) = \frac{1}{P(A)} \int_A X(\omega) P(d\omega).
    \end{equation*}
\end{itemize}
\end{definition}

In the above setting, if $\cF' \subset \cF$ is another $\sigma$-algebra on $X$ and $X$ is integrable, it can be shown that there exists a unique random variable $Y$ satisfying (i) integrability, (ii) $\cF'$-measurability, and (iii) for all $A \in \cF'$, with $P(A)>0$, we have $\Eb(X | A) \coloneq \Eb(Y | A)$.
We denote this random variable as $\Eb[X | \cF']$, which is called the \emph{expectation of $X$ conditioned on $\cF'$}. Consequently, given two random variables $X$ and $Y$, we define the conditional expectation of $X$ given $Y$ as $\Eb(X | Y) = \Eb(X | \sigma(Y))$. 

Another important result concerning the expectation of random variables is the so-called change of measure formula. That is if $X \colon \Omega \to E$ is a random variable and $h \colon E \to \Rb$ is a measurable function, then  
\begin{equation*}  
\int_{\Omega} h(X(\omega)) P(d\omega) = \int_{E} h(x) P^{X}(dx).  
\end{equation*}  
This formula expresses the expectation of $h(X)$ as an integral concerning the pushforward measure $P^X$, rather than the original probability measure $P$. 
Lastly, we state the \emph{tower property}, useful in martingale theory.

\begin{lemma}[Tower property]
Let $(\Omega, \cF, P)$ be a probability space, and let $X \colon \Omega \to \Rb$ be an integrable random variable. If $\cF_1, \cF_2 \subset \cF$ are two $\sigma$-algebras such that $\cF_1 \subset \cF_2$, then
\begin{equation*}
    \Eb \big(\Eb(X | \cF_2) \mid \cF_1 \big) = \Eb(X \mid \cF_1).
\end{equation*}
\end{lemma}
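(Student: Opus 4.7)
The plan is to invoke the defining characterization of conditional expectation stated in the excerpt: for an integrable random variable $X$ and a sub-$\sigma$-algebra $\cF'$, the conditional expectation $\Eb[X\mid\cF']$ is the (almost surely) unique random variable $Y$ that is (i) integrable, (ii) $\cF'$-measurable, and (iii) satisfies $\Eb(Y\mid A)=\Eb(X\mid A)$ for every $A\in\cF'$ with $P(A)>0$. Accordingly, I would set $Y\coloneq \Eb\bigl(\Eb(X\mid\cF_2)\mid\cF_1\bigr)$ and verify that $Y$ satisfies all three properties characterizing $\Eb(X\mid\cF_1)$; the result then follows from uniqueness.

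For (i), I would note that $\Eb(X\mid\cF_2)$ is integrable (since $X$ is), and that taking a further conditional expectation preserves integrability, so $Y$ is integrable. Property (ii) is immediate, since by construction $Y$ is the conditional expectation of something given $\cF_1$ and is therefore $\cF_1$-measurable. The substantive step is (iii): for any $A\in\cF_1$ with $P(A)>0$, I first apply the defining property of $\Eb(\cdot\mid\cF_1)$ applied to the random variable $\Eb(X\mid\cF_2)$, which yields
\begin{equation*}
\Eb(Y\mid A)=\Eb\bigl(\Eb(X\mid\cF_2)\mid A\bigr).
\end{equation*}
Then, crucially using the hypothesis $\cF_1\subset\cF_2$, the set $A$ also belongs to $\cF_2$, so the defining property of $\Eb(X\mid\cF_2)$ applies at $A$ and gives $\Eb\bigl(\Eb(X\mid\cF_2)\mid A\bigr)=\Eb(X\mid A)$. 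Chaining these identities yields $\Eb(Y\mid A)=\Eb(X\mid A)$ for every such $A$.

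Having established all three properties, I would conclude by the uniqueness clause of the defining characterization: $Y$ agrees (almost surely) with $\Eb(X\mid\cF_1)$, which is exactly the claim. The main subtlety, and the only real obstacle, is the bookkeeping around which $\sigma$-algebra each conditional expectation is taken with respect to—specifically, making explicit that the inclusion $\cF_1\subset\cF_2$ is used precisely to transport a test set $A\in\cF_1$ into $\cF_2$ so that the inner conditional expectation can be ``undone'' there. No inequalities, martingale convergence, or approximation by simple functions are needed; the argument is purely a uniqueness argument built from the definition recalled just above the lemma.
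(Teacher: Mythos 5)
Your proof is correct. The paper states the tower property as standard background and gives no proof of its own, so there is nothing to compare against; your argument is the standard one and is exactly what the surrounding text invites, namely verifying that $Y\coloneq \Eb\bigl(\Eb(X\mid\cF_2)\mid\cF_1\bigr)$ satisfies the three defining properties of $\Eb(X\mid\cF_1)$ (integrability, $\cF_1$-measurability, and agreement of $\Eb(\cdot\mid A)$ on every $A\in\cF_1$ with $P(A)>0$) and then appealing to the uniqueness clause. The one step that deserves the emphasis you give it is indeed the transport of the test set: $A\in\cF_1\subset\cF_2$ lets you undo the inner conditional expectation, and without that inclusion the identity fails in general. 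Your parenthetical that the conclusion holds almost surely is also the right level of care, since the ``uniqueness'' in the paper's characterization is uniqueness up to null sets.
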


As a corollary, for any two random variables $X$ and $Y$, we have the well-known formula:
\begin{equation*}
    \Eb(\Eb(X | Y)) = \Eb(X).
\end{equation*}

Note that, using the definition of expectation and the above properties, we can derive many useful results in probability theory through the following simple observation: $\Eb(\textbf{1}_{\{X \in A\}})=P(A)$. 

\begin{definition}[Independence]
Let $(\Omega, \cF, P)$ be a probability space, and let $(E_i, \mathcal{E}_i)$ for $i=1,2$ be measurable spaces. Suppose $X_i \colon \Omega \to E_i$ are random variables for $i=1,2$. We say that $X_1$ and $X_2$ are \emph{independent} (denoted as $X_1 \perp X_2$) if
\begin{equation*}
    P(X_1 \in A, X_2 \in B) = P^{X_1}(A) P^{X_2}(B), \quad \forall A \in \mathcal{E}_1, B \in \mathcal{E}_2.
\end{equation*}
Two sets of random variables $S_1$ and $S_2$ are independent if every pair $(X_1, X_2)$ with $X_1 \in S_1$ and $X_2 \in S_2$ is independent.
\end{definition}

\section{Martingales}
\label{app:martingales}

This section introduces martingale theory, which is essential when dealing with non-independent data. We derive a useful concentration inequality, known as the Azuma inequality, which is analogous to Hoeffding's inequality. Recall that Hoeffding's inequality provides a probabilistic bound on the deviation of a sum of independent random variables from its expectation. Similarly, Azuma's inequality establishes a bound for this deviation in cases where the random variables are not independent but exhibit a controlled dependence.

Before formally defining a martingale, we introduce the notion of a filtration on a measurable space. Let $(\Omega, \cF, P)$ be a probability space, and let $\{ \cF_n\}_{n \in \Nb}$ be an increasing sequence of $\sigma$-algebras (i.e., $\cF_n \subset \cF_{n+1}$ for all $n \in \Nb$) such that $\cF_n \subset \cF$. Then, $\{\cF_n \mid n \in \Nb\}$ is called a filtration on $(\Omega, \cF)$.

\begin{definition}
Let $(\Omega, \cF, P)$ be a probability space equipped with a filtration $\{\cF_n\}_{n\in \Nb}$. A sequence of random variables $\{X_n\}_{n \in \Nb} \colon \Omega \to \Rb$ adapted to $\{\cF_n\}_{n \in \Nb}$, meaning that $X_n$ is $\cF_n$-measurable for all $n\in \Nb$, is called a martingale if:
\begin{itemize}
    \item[(i)] $X_n$ is integrable for all $n\in \Nb$.
    \item[(ii)] $\Eb(X_{n+1}|\cF_n) = X_n$ for all $n \in \Nb$.
\end{itemize}
\end{definition}

Below, we define the Doob martingale associated with a given random variable. The Doob martingale (or Lévy martingale) is a sequence of random variables approximating the given random variable while satisfying the martingale property concerning a given filtration.

\begin{definition}[Doob martingale]
Let $(\Omega, \cF, P)$ be a probability space, $\{\cF_n\}_{n \in \Nb}$ be a filtration, and $X \colon \Omega \to \Rb$ be an integrable random variable. We define the sequence $\{W_n\}_{n \in \Nb}$ inductively as follows:
\begin{equation*}
     W_{0} = \Eb(X), \quad \text{and} \quad W_n = \Eb(X \mid \cF_{n}), \text{ for }  n\geq 1.
\end{equation*}
Using the tower property of conditional expectation, it is easy to verify that $\{W_n\}_{n \in \Nb}$ is a martingale concerning the filtration $\{\cF_n\}_{n \in \Nb}$. The sequence $\{W_n\}_{n \in \Nb}$ is called the Doob martingale of $X$ with respect to the filtration $\{\cF_n\}_{n \in \Nb}$.
\end{definition}

The usefulness of the above definition becomes evident when considering the case where $X$ is a function of $n$ random variables. That is, let $Y_1, \ldots, Y_n \colon \Omega \to E$ and $f \colon E^n \to \Rb$. Define $X \coloneq f(Y_1, \ldots, Y_n)$, and let $\{W_n\}$ be the Doob martingale associated with the filtration $\{\cF_n\}_{n \in \Nb}$, where $\cF_k = \cup_{i=1}^{k}\sigma(Y_i)$ for $k \in [n]$, and $\cF_k = \cF_n$ for  $k>n$.

Then, we have that $W_n = f(Y_1,\ldots,Y_n)$ and $W_0 = \Eb(f(Y_1,\ldots,Y_n))$. Therefore, bounding the difference $W_n - W_0$ leads to a concentration inequality. The following result, known as Azuma's inequality \citep{Azuma1967}, describes the exact conditions under which this difference can be controlled.

\begin{theorem}[Azuma's Inequality]
\label{thm:azuma_inequality}
Let $(\Omega, \cF, P)$ be a probability space, and let $\{W_n\}_{n \in \Nb}$ be a martingale adapted to a filtration $\{\cF_n\}_{n \in \Nb}$. Suppose there exist constants $c_k \in \Rb$ for $k \in \Nb$ such that, with probability 1,
\begin{equation*}
\left|W_k - W_{k-1}\right| \leq c_k.
\end{equation*}
Then, for all $N \in \Nb$ and every $t > 0$, we have
\begin{equation*}
P \left( \left|W_N - W_0\right| \geq t \right) \leq 2\exp\left( \frac{-t^2}{2\sum_{k=1}^{N}c_k^2} \right).
\end{equation*}
\end{theorem}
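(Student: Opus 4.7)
The plan is to prove Azuma's inequality via the standard Chernoff-plus-conditioning argument, treating the martingale differences $D_k \coloneq W_k - W_{k-1}$ as almost-surely bounded, conditionally mean-zero increments. First, I would note the two key facts about $D_k$: by the martingale property $\Eb[D_k \mid \cF_{k-1}] = 0$ almost surely, and by hypothesis $|D_k| \leq c_k$ almost surely. Writing $W_N - W_0 = \sum_{k=1}^N D_k$ reduces the problem to a tail bound for a sum of bounded, conditionally centered increments.

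Next, I would apply the exponential Markov (Chernoff) bound: for any $\lambda > 0$,
\begin{equation*}
P(W_N - W_0 \geq t) \leq e^{-\lambda t}\, \Eb\!\left[e^{\lambda \sum_{k=1}^N D_k}\right].
\end{equation*}
The central step is to control the moment generating function of each increment conditionally. Here I would invoke Hoeffding's lemma, which states that a random variable $X$ with $\Eb[X]=0$ taking values in $[a,b]$ satisfies $\Eb[e^{\lambda X}] \leq \exp(\lambda^2 (b-a)^2 / 8)$. Applied conditionally on $\cF_{k-1}$ to $D_k \in [-c_k, c_k]$, this yields
\begin{equation*}
\Eb\!\left[e^{\lambda D_k} \,\middle|\, \cF_{k-1}\right] \leq \exp\!\left(\tfrac{\lambda^2 c_k^2}{2}\right) \quad \text{a.s.}
\end{equation*}

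Then I would iterate using the tower property: since $\sum_{k=1}^{N-1} D_k$ is $\cF_{N-1}$-measurable,
\begin{equation*}
\Eb\!\left[e^{\lambda \sum_{k=1}^N D_k}\right] = \Eb\!\left[e^{\lambda \sum_{k=1}^{N-1} D_k}\, \Eb[e^{\lambda D_N} \mid \cF_{N-1}]\right] \leq \exp\!\left(\tfrac{\lambda^2 c_N^2}{2}\right) \Eb\!\left[e^{\lambda \sum_{k=1}^{N-1} D_k}\right],
\end{equation*}
and unrolling gives $\Eb[\exp(\lambda(W_N - W_0))] \leq \exp(\tfrac{\lambda^2}{2} \sum_{k=1}^N c_k^2)$. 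Plugging this back into the Chernoff bound and optimizing over $\lambda$ by choosing $\lambda = t / \sum_{k=1}^N c_k^2$ yields $P(W_N - W_0 \geq t) \leq \exp(-t^2 / (2\sum_{k=1}^N c_k^2))$. Finally, applying the same argument to the martingale $\{-W_n\}_{n \in \Nb}$ (which inherits the same increment bounds) gives the matching lower-tail estimate, and a union bound produces the stated two-sided inequality with the factor of $2$.

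The only nontrivial ingredient is Hoeffding's lemma itself, which is where the constant $\tfrac{1}{2}$ (rather than $\tfrac{1}{8}$ after expanding $(2c_k)^2/8$) in the exponent comes from. I would either cite it as a standard fact or, if a self-contained proof is desired, derive it via convexity of $x \mapsto e^{\lambda x}$ on $[-c_k, c_k]$ together with a one-variable optimization of the resulting logarithmic moment generating function; this is the only step involving any real calculation, and it is the main obstacle to a fully self-contained write-up. Everything else is bookkeeping: the martingale property ensures the conditional mean is zero, the boundedness hypothesis feeds Hoeffding's lemma, the tower property lets the conditional bounds multiply cleanly, and optimization over $\lambda$ produces the Gaussian-type tail.
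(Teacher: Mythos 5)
Your proof is correct and is the standard Chernoff--Hoeffding argument for Azuma's inequality: conditional Hoeffding's lemma on each increment, the tower property to multiply the conditional bounds, optimization over $\lambda$, and a union bound over the two tails. The paper does not prove this statement at all---it imports it as a classical result with a citation to Azuma (1967)---so there is no in-paper argument to compare against; your write-up would serve as a valid self-contained substitute, with the only external ingredient being Hoeffding's lemma, which you correctly identify and could cite or derive via convexity.
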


\section{Link prediction MPNNs}
\label{app_sec:MPNN_link_pred}
Here, we provide an overview of state-of-the-art MPNN-based link prediction architectures that serve as the basis for our analysis. While the architectures are presented in the general setting of knowledge graphs, for simplicity, our analysis focuses on homogeneous graphs (i.e., $|R| = 1$), considering only initial node features and ignoring edge features. Throughout this work, we treat all these architectures within the framework of generalized MPNNs, as defined in \cref{sec:generalized_mpnns_and_pms}. For completeness, we present their original formulations below.

\textbf{SEAL and subgraph Information}
\citet{Zha+2018} and \citet{Ter+2020} propose to use subgraph information around the target nodes to compute more expressive architectures. Importantly, these methods allow for inductive sampling from training graphs, as they only require enclosed subgraphs. Formally, let $(G,a_G)$ be an undirected attributed graph, let $u,v \in V(G)$ be two target nodes, and let $k > 0$. The SEAL architecture~\citet{Zha+2018} first extracts the subgraph $S$ induced by all nodes with a shortest-path distance of most $k$ from one of the target nodes. We label the nodes in $S$ using a \new{double-radius node labeling} $d \colon V(S) \to \Rb^2$, where each node $w \in V(S)$ is labeled with its respective distance to the two target nodes $u$ and $v$. For nodes with $d(w,x) = \infty$ or $d(w,y) = \infty$, the corresponding node label is set to $0$. We then run an MPNN on top of the resulting node-labeled graph to compute a vectorial representation for the two target nodes. The idea was later generalized to the \new{labeling trick}~\citep{Zha+2021}, allowing for arbitrary node and edge labeling and more expressive architectures.

\textbf{Conditional message-passing neural networks}
To study the expressive power of link prediction architectures, \citet{huang2023theory} proposed conditional message-passing neural networks (C-MPNNs), which cover architectures such as NBFNet~\citep{Zhu+2021}, NeuralLP~\citep{yang2017differentiable}, and DRUM~\citep{sadeghian2019drum}. Given a (knowledge) graph $G$, C-MPNNs compute pairwise node representations in the graph $G$ for a fixed query vector $q$ and source node $u \in V(G)$. The computation of all pairs $(u,v)$ is conditioned on the source node $u$ to provide a node representation of $v$ conditioned by $u$. Hence, a conditional vectorial node representation $\hb^{(t+1)}_{v \mid u,q} \in \Rb^d$ is computed for $L$ layers by using the conditional message-passing architecture,
\begin{equation}
\label{eq:CMPNN}
\hb^{(t+1)}_{v \mid u,q} \coloneq \UPD \Big(\hb_{v \mid u,q}^{f(t)}, \AGG \big(\oms \mathsf{MSG}_r(\hb_{x \mid u,q}^{(t)}, \vec{w}_q) \mid x \in N_r(v), r \in R  \cms \Big),
\end{equation}
where $\hb_{v \mid u,q}^0 = \mathsf{INIT}(u,v,q)$ denotes an initialization function satisfying  node distinguishability; i.e., $\mathsf{INIT}(u,v) \neq \mathsf{INIT}(u,u)$ for $u \neq v$. Moreover, the function $f \colon \Nb \to \Nb$ is usually set to the identity function. By setting $f(t) = 0$ in \cref{eq:CMPNN}, one obtains the Neural Bellman--Ford Networks (NBFNets), as defined by \citet{Zhu+2021}, as a special case.

\textbf{Neural common neighbors}
Since SEAL, C-MPNN, and other subgraph-based architectures for link prediction propose labeling a graph and then applying an MPNN to evaluate node representations, which leads to poor scalability, \citet{wang2024neural} proposed reversing this process. For this, they devise the \new{neural common neighbors framework} (NCN), offering an approach of using an MPNN architecture on the original graph and enhancing the subsequent pooling process using structural information.  For two nodes $i,j$, the node representation is computed using a two-step process. In the first step, an $L$-layer MPNN computes vectorial representations $\hb_i$ and $\hb_j$ for the nodes $i$ and $j$. In contrast to SEAL and other subgraph-based architectures, the MPNN computation is now concatenated with the node representations from the common $k$-hop neighborhoods of the two nodes to compute the target tuple representation $\vec{z}_{ij}$. Furthermore, the node representations are aggregated using the Hadamard product $\odot$,
\begin{equation*}
\vec{z}_{ij} \coloneqq \vec{h}_i \odot \vec{h}_j \Bigr\| \sum_{u \in N_G^k(i) \cap N_G^k(j)} \hb_u.
\end{equation*}
This tuple representation is calculated for each proposed link and evaluated using a feed-forward neural network, resulting in representations of each target link tuple $(i,j)$.

\textbf{Feed-forward neural networks}
\label{fnn} An $L$-layer \new{feed-forward neural network} (FNN), for $L \in \Nb$,
is a parametric function $\FNN^\tup{L}_{\mathbold{\theta}} \colon \Rb^{1\times d} \to \Rb$, $d>0$, where $\mathbold{\theta} \coloneqq (\vec{W}^{(1)}, \ldots, \vec{W}^{(d)}) \subseteq \Theta$ and $\vec{W}^{(i)} \in \Rb^{d \times d}$, for $i \in [L-1]$, and $\vec{W}^{(L)} \in \Rb^{d \times 1}$, where
\begin{equation*}
	\vec{x} \mapsto \sigma \Bigl( \cdots  \sigma \mleft(  \sigma \mleft(\vec{x}\vec{W}^{(1)} \mright) \vec{W}^{(2)} \mright) \cdots \vec{W}^{(L)} \Bigr) \in \Rb,
\end{equation*}
for $\vec{x} \in \Rb^{1\times d}$. Here, the function $\sigma \colon \Rb \to \Rb$ is an \new{activation function}, applied component-wisely, e.g., a \emph{rectified linear unit} (ReLU), where $\sigma(x) \coloneqq \max(0,x)$. For an FNN where we do not need to specify the number of layers, we write $\FNN_{\mathbold{\theta}}$.

\subsection{Link prediction as generalized MPNNs}
\label{app_sec:link_mpnns_as_generalizedMPNNs}
Below, we describe how the link prediction MPNN architectures discussed above, i.e., SEAL, C-MPNN, and NCN, can be viewed as special cases of the generalized MPNN framework introduced in \cref{sec:generalized_mpnns_and_pms}.

\textbf{SEAL} Verifying that SEAL constitutes a special case of generalized MPNNs as defined above is straightforward. To establish this, we must define the transformation function $\cT$, the selection function $\cV$, and the pooling function $\Psi$ appropriately. For simplicity, we assume a directed graph $G$ without initial vertex features. For each pair $(u,v) \in V(G)^2$, with $u \neq v$, we define the node set $V_{uv} \coloneq N_G(u) \cup N_G(v) \cup \{u,v\}$. We then compute the induced subgraph $\bar{G}_{uv} = G[V_{uv}]$. Vertex features are added to $\bar{G}_{uv}$ using the double-radius node labeling described in~\citet{Zha+2018}, yielding the graph $G_{uv}$. We define the transformation and selection functions as follows,
\begin{align*}
&\cT(G) =\dot{\cup}_{(u,v) \in V(G)^2} G_{uv}, \text{ for } u\neq v, \quad \text{and}, \\
&\cV(G, (u,v)) = V(G_{uv}),
\end{align*}
where $\dot{\cup}_{(u,v) \in V(G)^2} G_{uv}$ denotes the disjoint union of graphs. Finally, we use sum-pooling as our pooling function $\Psi$.

\textbf{C-MPNNs} We show that C-MPNNs constitute a special case of generalized MPNNs. For simplicity, we again assume a graph $G$ without vertex features and set $f(t) = 1$ in \cref{eq:CMPNN}. Additionally, we assume that the graph contains only one type of edge, i.e., $|R| = 1$. We construct a new graph $G'$ with vertex set $V(G') = V(G)^2$, and define initial vertex features using the $\text{INIT}$ function such that $\text{INIT}(u,u) \neq \text{INIT}(u,v)$ for all $u \neq v$. The following condition defines the edge relation in $G'$, 
\begin{equation*}
\{(u,v), (x,y)\} \in E(G') \iff u = x \text{ and } y \in N_G(v).
\end{equation*}
The transformation and selection functions are then defined as:
\begin{align*}
&\cT(G) = G', \quad \text{and} \\
&\cV(G, (u,v)) = \{ (u,w) \mid w \in N_G(v) \}.
\end{align*}
Again, we use sum-pooling as our pooling function $\Psi$.  According to the definition of C-MPNNs, we can slightly modify the transformation and selection functions to capture the NBFNets. 

\textbf{Neural common neighbors} For the neural common neighbors architecture, we have $\cG' \otimes \mathfrak{R} = \{(G,E(G)) \mid G \in \cG_{d}^{\Rb}, \{u,v\} \in E(G)\}$ and we set,
\begin{align*}
& \mathcal{T}(G) \coloneqq G, \quad \text{and}\\
& \mathcal{V}(G,\{u,v\}) \coloneqq \{u,v\} \cup \left( N_G^k(u) \cap N_G^k(v) \right).
\end{align*}
The pooling function is then given by 
\begin{equation}
\label{eq:CNN_pooling}
\Psi(F(G, \{u,v\})) \coloneqq \Bigl(\hb_{\mathcal{T}(G)}^{(L)}(u) \odot  \hb_{\mathcal{T}(G)}^{(L)}(v) , \sum_{x \in N_G^k(u) \cap N_G^k(v)} \hb_{\mathcal{T}(G)}^{(L)}(x)\Bigr).
\end{equation}
From this, we arrive at the MPNN model defined by~\citet[Equation~(9)]{wang2024neural}.

\subsection{The sub-sum pooling property}
\label{app_sec:sub_sum_pooling}
Here we define a useful property of the pooling function $\Psi$, which generalizes sum-pooling while preserving key properties relevant to our later analysis. 

\begin{definition}[sub-sum pooling]
\label{def:sub-sum_pooling}
Let $(\cT,\cV,\Psi)$-MPNN$(L)$ be a generalized MPNN as previously described. We say that $\Psi$ has the \emph{sub-sum property} with parameter $C$ if there exists a constant $C > 0$ such that for all $(G_1,S_1), (G_2,S_2) \in \cG_{d}^{\Rb} \otimes \mathfrak{R}$, with $|\cV(G_1,S_1)| > |\cV(G_2,S_2)|$, and for all surjective mappings $\sigma \colon \cV(G_1,S_1) \to \cV(G_2,S_2) \cup \{*\}$ satisfying
\begin{equation}
\label{eq:extended_bijection_property}
    u_1 \neq u_2, \quad \sigma(u_1) = \sigma(u_2) \implies \sigma(u_1) = \sigma(u_2) = *,
\end{equation}
we have
\begin{equation}
\label{eq:sub-sum_property}
    \| \Psi(F(G_1,S_1)) - \Psi(F(G_2,S_2)) \|_2 \leq C \cdot \sum_{x \in \cV(G_1,S_1)} \| \hb_{\cT(G_1)}^{(L)}(x)  - \hb_{\cT(G_2)}^{(L)}(\sigma(x)) \|_2,
\end{equation}
where $\hb_{\cT(G_2)}^{(L)}(*) \coloneq 0_{\Rb^d}$. In the case where $|\cV(G_1,S_1)| = |\cV(G_2,S_2)|$, we replace $\sigma$ with a bijection $\cV(G_1,S_1) \to \cV(G_2,S_2)$.

Additionally, for any two sets $A,B$ with $|A| > |B|$, we refer to a function $A \to B \cup \{*\}$ satisfying \cref{eq:extended_bijection_property} as an \emph{extended bijection} between $A$ and $B$.
\end{definition}

The sub-sum property plays a central role in our theoretical analysis. The above architectures employ pooling functions such as sum aggregation, Hadamard product, and concatenation. We now show that each of these pooling operators satisfies the sub-sum property.

\begin{proposition}
\label{prop:sub_sum_propety}
Let $\Psi_1$, $\Psi_2$, and $\Psi_3$ be pooling functions defined as follows,
\begin{align*}
&\Psi_1(F(G, \mathcal{V}(G))) \coloneq \sum_{x \in \mathcal{V}(G)} \hb_{G}^{(L)}(x), \\
&\Psi_2(F(G, \{x,y\})) \coloneq \hb_{G}^{(L)}(x) \odot \hb_{G}^{(L)}(y), \\
&\Psi_3(F(G, \{x,y\})) \coloneq (\hb_{G}^{(L)}(x), \hb_{G}^{(L)}(y)).
\end{align*}
Then $\Psi_1$, $\Psi_2$, and $\Psi_3$ satisfy the sub-sum property for all $(G_1, S_1), (G_2, S_2) \in \cG_{d}^{\Rb} \otimes \mathfrak{R}$, for a suitable choice of $\mathfrak{R}$.
\end{proposition}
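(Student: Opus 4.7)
The plan is to treat the three pooling functions separately, in each case rewriting the pooling difference as a sum of node-wise differences indexed by the extended bijection $\sigma$ and then applying the triangle inequality (with an additional telescoping step for $\Psi_2$). For $\Psi_1$ (sum-pooling), the appropriate choice is $\mathfrak{R}$ such that $\mathcal{V}(G,S)$ may vary in cardinality, so the full extended-bijection case of \cref{def:sub-sum_pooling} is in play. The key observation is that surjectivity of $\sigma$ onto $\mathcal{V}(G_2,S_2)\cup\{*\}$, combined with the extended-bijection property in \cref{eq:extended_bijection_property}, forces the restriction of $\sigma$ to $\mathcal{V}(G_1,S_1)\setminus\sigma^{-1}(*)$ to be an honest bijection onto $\mathcal{V}(G_2,S_2)$. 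Hence, using the convention $\hb_{\cT(G_2)}^{(L)}(*) = 0_{\Rb^d}$, one can rewrite $\sum_{y \in \mathcal{V}(G_2,S_2)} \hb_{\cT(G_2)}^{(L)}(y)$ as $\sum_{x \in \mathcal{V}(G_1,S_1)} \hb_{\cT(G_2)}^{(L)}(\sigma(x))$, so that
\[
\Psi_1(F(G_1,S_1)) - \Psi_1(F(G_2,S_2)) = \sum_{x \in \mathcal{V}(G_1,S_1)}\bigl(\hb_{\cT(G_1)}^{(L)}(x) - \hb_{\cT(G_2)}^{(L)}(\sigma(x))\bigr),
\]
and the triangle inequality yields \cref{eq:sub-sum_property} with $C=1$.

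For $\Psi_3$ (concatenation), the associated selection function returns exactly two nodes, so $\sigma$ is an ordinary bijection between two-element sets. The difference of concatenations is the concatenation of differences, and the elementary estimate $\sqrt{a^2+b^2}\le a+b$ for $a,b\ge 0$ yields the sub-sum inequality with $C=1$. For $\Psi_2$ (Hadamard product), $\sigma$ is again a bijection between two-element sets. The standard telescoping identity
\[
a_1\odot b_1 - a_2\odot b_2 = (a_1-a_2)\odot b_1 + a_2\odot(b_1-b_2)
\]
applied with $a_i = \hb_{\cT(G_i)}^{(L)}(x_i)$ and $b_i = \hb_{\cT(G_i)}^{(L)}(y_i)$ (where $x_2=\sigma(x_1)$, $y_2=\sigma(y_1)$), together with $\|u\odot v\|_2 \le \|u\|_2\,\|v\|_2$, reduces the task to bounding $\|\hb_{\cT(G_1)}^{(L)}(y_1)\|_2$ and $\|\hb_{\cT(G_2)}^{(L)}(\sigma(x_1))\|_2$ by a uniform constant. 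Any such bound $B$ then gives $C=B$.

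The main obstacle is establishing the uniform bound on final-layer embeddings needed in the $\Psi_2$ case; the other two cases are essentially bookkeeping around the extended bijection. This bound is a consequence of the framework's standing assumptions: with $\|\vec{W}_t^{(i)}\|_2 \le B$, Lipschitz activations $\varphi_t$ with $\varphi_t(0)=0$ (as satisfied by ReLU-type nonlinearities), bounded initial node features, and a restriction to graphs of bounded maximum degree (as in \cref{cor:boundeddegreegeneralization}), a straightforward induction over the $L$ layers of \cref{def:sum_mpnnsgraphs} produces a uniform bound depending only on $L$, $B$, the Lipschitz constants $L_{\varphi_t}$, the feature-norm bound, and the maximum degree. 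Under these hypotheses, the sub-sum property holds for $\Psi_2$ with a constant $C$ expressible in terms of the same quantities, and for $\Psi_1,\Psi_3$ it holds unconditionally with $C=1$.
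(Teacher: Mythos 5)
Your proof is correct and follows essentially the same route as the paper's: for $\Psi_1$ and $\Psi_3$ you rewrite the pooled difference through the (extended) bijection and apply the triangle inequality to get $C=1$, and for $\Psi_2$ you use the product-rule telescoping together with a uniform bound on the final-layer embeddings, which is exactly the coordinate-wise estimate the paper performs. The only (minor) divergence is that the paper simply \emph{assumes} $\|\hb_{G}^{(L)}(x)\|_{\infty}\leq b$ and obtains $C=\sqrt{2b}$, whereas you derive the embedding bound from the standing assumptions via induction over the layers (at the cost of additionally requiring bounded degree, bounded initial features, and $\varphi_t(0)=0$) and state $C$ in terms of a $2$-norm bound; both treatments are acceptable.
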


\begin{proof}
We separate the proof into two steps: First, we show that sum aggregation and the Hadamard product have the sub-sum property. Then, we conclude the same result for the concatenation operator. 

Let $\Psi_1$ be the sum aggregation pooling. Further, we consider $\sigma$ an extended bijection as shown in Definition \ref{def:sub-sum_pooling}. Then the sum aggregation can be applied to Equation \ref{eq:sub-sum_property}:
\begin{align*}
\| \Psi_1(F(G_1,S_1)) - \Psi_1(F(G_2,S_2)) \|_2 = \| \sum_{x\in \mathcal{V}(G_1,S_1)} \hb_{G_1}^{(L)}(x) - \sum_{x'\in \mathcal{V}(G_2,S_2)} \hb_{G_2}^{(L)}(x') \|_2 
\end{align*}

Since we know $\sigma$ to be an extended bijection or a bijection depending on $\mathcal{V}(G_1,S_1), \mathcal{V}(G_2,S_2)$ it follows: 
\begin{align*}
\| \sum_{x\in \mathcal{V}(G_1,S_1)} \hb_{G_1}^{(L)}(x) - \sum_{x'\in \mathcal{V}(G_2,S_2)} \hb_{G_2}^{(L)}(x') \|_2  = \| \sum_{x\in \mathcal{V}(G_1,S_1)} \hb_{G_1}^{(L)}(x) -  \hb_{G_2}^{(L)}(\sigma(x)) \|_2 \\
\leq \sum_{x\in \mathcal{V}(G_1,S_1)} \|  \hb_{G_1}^{(L)}(x) -  \hb_{G_2}^{(L)}(\sigma(x)) \|_2
\end{align*}

The last inequality results from applying the triangle inequality to the sum of the two terms. From this, the sub-sum property directly follows with $C=1$. 

For the Hadamard product $\odot$ we consider additional notation first. Given an encoding vector $\hb_{G_1}^{(L)}(x) \in \mathbb{R}^n$ we denote the $i$-th element of such vector with $x_i$. We further note the following inequality: 
\begin{align*}
\lvert x_i y_i - x_i'y_i'\rvert = \lvert (x_i - x_i') y_i + x_i' (y_i - y_i') \rvert \leq \lvert x_i - x_i'\rvert \lvert y_i \rvert +  \lvert y_i - y_i'\rvert \lvert x_i' \rvert
\end{align*}
Using this inequality, we can derive an expression for $\lvert x_i y_i - x_i'y_i'\rvert^2$:
\begin{align*}
\lvert x_i y_i - x_i'y_i'\rvert^2 \leq \lvert x_i - x_i'\rvert^2 \lvert y_i \rvert^2 +  \lvert y_i - y_i'\rvert^2 \lvert x_i' \rvert^2 + 2 \lvert x_i \rvert \lvert y_i \rvert \lvert x_i - x_i' \rvert \lvert y_i - y_i' \rvert   \\
\leq 2 \lvert y_i \rvert^2 \lvert x_i - x_i'\rvert^2 + 2 \lvert x_i'\rvert^2 \lvert y_i - y_i'\rvert^2  
\end{align*}
We now consider the Hadarmard product for $\mathcal{V}(G_1,S_1) = \{x,y\}$, $\mathcal{V}(G_2,S_2) = \{x',y'\}$ and $\sigma$ as an extended bijection mapping $x$ to $x'$ and $y$ to $y'$. Further $\mathcal{T}$ denotes a transformation function:
\begin{align*}
\| \Psi_2(F(G_1,S_1)) - \Psi_2(F(G_2,S_2)) \|_2 = \| 
\hb_{\mathcal{T}(G_1)}^{(L)}(x) \odot \hb_{\mathcal{T}(G_1)}^{(L)}(y) -  \hb_{\mathcal{T}(G_2)}^{(L)}(x') \odot \hb_{\mathcal{T}(G_2)}^{(L)}(y') \|_2 
\end{align*}
\begin{align*}
\| 
\hb_{\mathcal{T}(G_1)}^{(L)}(x) \odot \hb_{\mathcal{T}(G_1)}^{(L)}(y) -  \hb_{\mathcal{T}(G_2)}^{(L)}(x') \odot \hb_{\mathcal{T}(G_2)}^{(L)}(y') \|_2^2 = \sum_{i=1}^{n}(x_iy_i)^2 - \sum_{i=1}^{n}(x'_iy'_i)^2 = \sum_{i=1}^{n}\lvert x_i y_i - x_i'y_i'\rvert^2
\end{align*}
With our previously obtained result for $\lvert x_i y_i - x_i'y_i'\rvert^2$ we derive an upper bound for the Hadarmard product difference, assuming $\|\hb_{G}^{(L)}(x) \|_{\infty} \leq b$ for some $b > 0$:
\begin{align*}
\|\hb_{\mathcal{T}(G_1)}^{(L)}(x) \odot \hb_{\mathcal{T}(G_1)}^{(L)}(y) -  \hb_{\mathcal{T}(G_2)}^{(L)}(x') \odot \hb_{\mathcal{T}(G_2)}^{(L)}(y') \|_2^2 \leq \sum_{i=1}^{n} 2 \lvert y_i \rvert^2 \lvert x_i - x_i'\rvert^2 + 2 \lvert x_i'\rvert^2 \lvert y_i - y_i'\rvert^2 \\
=2 \sum_{i=1}^{n}  \lvert y_i \rvert^2 \lvert x_i - x_i'\rvert^2 + \lvert x_i'\rvert^2 \lvert y_i - y_i'\rvert^2 \leq 2b (\|\hb_{\mathcal{T}(G_1)}^{(L)}(x) - \hb_{\mathcal{T}(G_2)}^{(L)}(x')\|_2^2 + \|\hb_{\mathcal{T}(G_1)}^{(L)}(y) - \hb_{\mathcal{T}(G_1)}^{(L)}(y')\|_2^2)
\end{align*}
This implies the following sub-sum property for the Hadarmard product with $C= \sqrt{2b}$:
\begin{align*}
\| 
\hb_{\mathcal{T}(G_1)}^{(L)}(x) \odot \hb_{\mathcal{T}(G_1)}^{(L)}(y) -  \hb_{\mathcal{T}(G_2)}^{(L)}(x') \odot \hb_{\mathcal{T}(G_2)}^{(L)}(y') \|_2 \\ \leq \sqrt{2b}(\|\hb_{\mathcal{T}(G_1)}^{(L)}(x) - \hb_{\mathcal{T}(G_2)}^{(L)}(x')\|_2 + \|\hb_{\mathcal{T}(G_1)}^{(L)}(y) - \hb_{\mathcal{T}(G_1)}^{(L)}(y')\|_2)
\end{align*}
In the last step, we consider the concatenation operator $\Psi_3$. Since the concatenation of two vectors $x,y \in \mathbb{R}^n$ is given by $(x_1,\dots,x_n,y_1,\dots,y_n)  = (x,y) \in \mathbb{R}^{2n}$ we can separate the Euclidean norm:
\begin{align*}
\| (x,y) - (x',y')\|_2 = \|(x-x') + (y-y') \|_2 \leq \| x-x' \|_2 + \|y-y' \|_2
\end{align*}
From this, the sub-sum property directly follows. 
\end{proof}

Since the Hadamard product and the concatenation operator have the sub-sum property, it is easy to verify that the property also holds for the pooling function used in neural common neighbors \cref{eq:CNN_pooling}.

\section{The \texorpdfstring{$1$}{1}-dimensional Weisfeiler--Leman algorithm} 
\label{app_sec:1WL}

The \new{$1$-dimensional Weisfeiler--Leman algorithm} (\wlone) or \new{color refinement} is a well-studied heuristic for the graph isomorphism problem, originally proposed by~\citet{Wei+1968}.\footnote{Strictly speaking, the \wlone{} and color refinement are two different algorithms. That is, the \wlone{} considers neighbors and non-neighbors to update the coloring, resulting in a slightly higher expressive power when distinguishing nodes in a given graph; see~\cite {Gro+2021} for details. Following customs in the machine learning literature, we consider both algorithms equivalent.} Intuitively, the algorithm determines if two graphs are non-isomorphic by iteratively coloring or labeling nodes. Given an initial coloring or labeling of the nodes of both graphs, e.g., their degree or application-specific information, in each iteration, two nodes with the same label get different labels if the number of identically labeled neighbors is unequal. These labels induce a node partition, and the algorithm terminates when, after some iteration, the algorithm does not refine the current partition, i.e., when a \new{stable coloring} or \new{stable partition} is obtained. Then, if the number of nodes annotated with a specific label differs between the two graphs, we can conclude that the two graphs are not isomorphic. It is easy to see that the algorithm cannot distinguish all non-isomorphic graphs~\citep{Cai+1992}. However, it is a powerful heuristic that can successfully decide isomorphism for a broad class of graphs~\citep{Arv+2015,Bab+1979}.

Formally, let $(G,a_G)$ be a node-featured graph. In each iteration, $t > 0$, the \wlone{} computes a \new{node coloring} $C^1_t \colon V(G) \to \Nb$, depending on the coloring of the neighbors. That is, in iteration $t>0$, we set
\begin{equation*}
	C^{1}_t(v) \coloneq \REL\Bigl(\!\bigl(C^{1}_{t-1}(v),\oms C^{1}_{t-1}(u) \mid u \in N(v)  \cms \bigr)\! \Bigr),
\end{equation*}
for node $v \in V(G)$, where $\REL$ injectively maps the above pair to a unique natural number, which has not been used in previous iterations. In iteration $0$, the coloring $C^1_{0}\coloneqq a_G$ is used.\footnote{Here, we implicitly assume an injective function from $\Sigma$ to $\Nb$.} To test whether two graphs $G$ and $H$ are non-isomorphic, we run the above algorithm in ``parallel'' on both graphs. If the two graphs have a different number of nodes colored $c \in \Nb$ at some iteration, the \wlone{} \new{distinguishes} the graphs as non-isomorphic. Moreover, if the number of colors between two iterations, $t$ and $(t+1)$, does not change, i.e., the cardinalities of the images of $C^1_{t}$ and $C^1_{i+t}$ are equal, or, equivalently,
\begin{equation*}
	C^{1}_{t}(v) = C^{1}_{t}(w) \iff C^{1}_{t+1}(v) = C^{1}_{t+1}(w),
\end{equation*}
for all nodes $v,w \in V(G\,\dot\cup H)$, then the algorithm terminates. For such $t$, we define the \new{stable coloring} $C^1_{\infty}(v) = C^1_t(v)$, for $v \in V(G\,\dot\cup H)$. The stable coloring is reached after at most $\max \{ |V(G)|,|V(H)| \}$ iterations~\citep{Gro2017}.

\textbf{Unrollings}
Following~\citet{Morris2020b}, given a (node)-featured graph $(G,a_G)$, we define the \new{unrolling tree} of depth $L \in \Nb_0$ for a node $u \in V(G)$, denoted as $\UNR{G,u,L}$, inductively as follows.
\begin{enumerate}
	\item For $L=0$, we consider the trivial tree as an isolated node with feature $a_G(u)$.
	\item For $L>0$, we consider the root node with label $a_G(u)$ and, for $v \in N(u)$, we attach the subtree $\UNR{G,v,L-1}$ under the root.
\end{enumerate}
The above unrolling tree construction characterizes the $\wlone$ algorithm through the following lemma.
\begin{lemma}[Folklore, see, e.g.,~\citet{Mor+2020}]
	\label{wlone:unrollingschar}
	The following are equivalent for $L \in \Nb_0$, given a featured graph $(G,a_G)$ and nodes $u,v \in V(G)$.
	\begin{enumerate}
		\item The nodes $u$ and $v$ have the same color after $L$ iterations of the \wlone.
		\item The unrolling trees $\UNR{G,u,L}$ and $\UNR{G,v,L}$ are isomorphic. 
	\end{enumerate}
\end{lemma}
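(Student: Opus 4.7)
The plan is a standard induction on the depth $L \in \Nb_0$, exploiting that the $\wlone$ refinement and the unrolling construction are defined by the same local recipe: a node's label is determined by its own label together with the multiset of labels of its neighbors. The inductive hypothesis will be the full biconditional ``same $\wlone$ color after $L$ iterations iff isomorphic unrolling trees of depth $L$'' applied to every pair of nodes in $V(G)$.

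For the base case $L = 0$, I would unfold the definitions directly. The unrolling tree $\UNR{G,u,0}$ is a single node bearing the feature $a_G(u)$, so $\UNR{G,u,0} \cong \UNR{G,v,0}$ (as featured rooted trees) iff $a_G(u) = a_G(v)$. On the $\wlone$ side, $C^1_0 = a_G$ by definition, so $C^1_0(u) = C^1_0(v)$ iff $a_G(u) = a_G(v)$; the equivalence is immediate.

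For the inductive step, assume the equivalence holds at depth $L-1$ for every pair of nodes, and consider depth $L$. In one direction, if $C^1_L(u) = C^1_L(v)$, then injectivity of $\REL$ forces both (i)~$C^1_{L-1}(u) = C^1_{L-1}(v)$ and (ii)~$\oms C^1_{L-1}(u') \mid u' \in N(u) \cms = \oms C^1_{L-1}(v') \mid v' \in N(v) \cms$. From (i) and the induction hypothesis, $a_G(u) = a_G(v)$ (read off the roots of the isomorphic depth-$(L-1)$ unrollings). From (ii), since the two multisets coincide, I can choose a bijection $\varphi \colon N(u) \to N(v)$ with $C^1_{L-1}(u') = C^1_{L-1}(\varphi(u'))$ for every $u' \in N(u)$; the induction hypothesis then yields an isomorphism $\psi_{u'} \colon \UNR{G,u',L-1} \xrightarrow{\cong} \UNR{G,\varphi(u'),L-1}$. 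Gluing the $\psi_{u'}$ along $\varphi$ and mapping the root of $\UNR{G,u,L}$ to the root of $\UNR{G,v,L}$ (legal, since $a_G(u) = a_G(v)$) defines a root-, edge-, and label-preserving bijection, i.e.\@ the required rooted-tree isomorphism $\UNR{G,u,L} \cong \UNR{G,v,L}$.

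Conversely, assume $\UNR{G,u,L} \cong \UNR{G,v,L}$ via some root-preserving isomorphism $\Phi$. Matching roots gives $a_G(u) = a_G(v)$. Truncating $\Phi$ to depth $L-1$ around the root yields $\UNR{G,u,L-1} \cong \UNR{G,v,L-1}$, hence $C^1_{L-1}(u) = C^1_{L-1}(v)$ by the induction hypothesis. Moreover, $\Phi$ restricts to a bijection between the children of the root of $\UNR{G,u,L}$ and those of $\UNR{G,v,L}$, and under this bijection the subtrees hanging below matched children are depth-$(L-1)$ unrolling trees that are isomorphic. Applying the induction hypothesis pointwise, we conclude $\oms C^1_{L-1}(u') \mid u' \in N(u) \cms = \oms C^1_{L-1}(v') \mid v' \in N(v) \cms$. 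Feeding these two equalities into the $\wlone$ update gives $C^1_L(u) = C^1_L(v)$ (well-definedness of $\REL$ suffices here; injectivity is not needed in this direction). Completing both directions closes the induction.

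The only genuinely delicate point is the gluing step in the forward direction: I need to verify that stitching together the per-child isomorphisms $\psi_{u'}$ along the bijection $\varphi$ really produces a global isomorphism of the depth-$L$ rooted trees. This is where the rooted, labeled tree structure matters, and it relies on the unrolling being defined by attaching disjoint copies of the subtrees under the root, so that there is no conflict between the local isomorphisms at different children. Once this is spelled out (it is a standard multiset-to-bijection argument), the rest of the proof is mechanical.
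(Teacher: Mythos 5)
Your proof is correct: the paper states this lemma as folklore (citing \citet{Mor+2020}) and gives no proof of its own, so there is nothing to compare against, but your induction on $L$ is exactly the standard argument, with the right use of injectivity of $\REL$ in the forward direction (and correctly observing it is not needed in the converse) and a properly handled multiset-to-bijection gluing step. No gaps.
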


Note that for an edge-featured graph, we can analogously define a $\wlone$-variant and an unrolling tree (with edge features), satisfying a similar version of \cref{wlone:unrollingschar}.

\section{Learning frameworks}
\label{app_sec:learning_formalism}

This appendix fully formalizes the inductive and transductive learning frameworks introduced in \cref{sec:learningongraphs}. We specify the underlying probability spaces, data generation models, symmetry assumptions, and loss functions that underpin our generalization analysis. The distinction between these frameworks lies in the way they leverage available information. Inductive learning involves training models to infer general patterns from observed data, enabling predictions on unseen instances. This ability to generalize beyond the training set is at the core of most supervised learning methods. In contrast, transductive learning focuses on deriving predictions directly for specific, unlabeled data points, as seen in semi-supervised and few-shot learning, without constructing an explicit general model. In the subsequent sections, we formalize the statistical learning framework for any machine learning task, define the generalization error in both settings, and describe how node and link prediction tasks can be addressed using either an inductive or transductive approach.

\textbf{Inductive statistical learning} Let $(\Omega, \cF,P)$ be a probability space $\cX$ be an arbitrary input space and $\cY$ the label space. For example, in \emph{binary classification}, the label set is typically represented as $\{0,1\}$. Let $\cZ$ denote the product space $\cX \times \cY$. In supervised learning, we have access to a finite set of \emph{training points},
\begin{equation*}
    \cS \coloneq \{(x, y) \mid (x, y) \in \cX \times \cY\},
\end{equation*}
where each $(x_i, y_i)_{i=1}^{N}$ is sampled according to the same distribution $\mu$ on $\cZ$, and $|\cS|=N \in \Nb$. In the special case where each $(x_i, y_i) \in \cS$ is independently drawn, we say that we have an \emph{i.i.d.\@ sampling process}.

We consider a family of mappings from $\cX$ to $\cY$, denoted by $\cH$, often referred to as the \emph{hypothesis class}, and its elements as \new{concepts}. To formally measure the quality of a concept, we define a \new{loss function} $\ell \colon \cH \times \cX \times \cY \to \Rb^{+}$ bounded by some $M \in \Nb$. The goal then is to choose a function $h \in \cH$ that minimizes the expectation $\Eb_{\mu}[\ell(h, X, Y)]$.

A \emph{learning algorithm} $\cA$ is a procedure that takes the training data $\cS$ as input and outputs a concept $h \in \cH$; we usually denote $h_{\cS}\coloneq \cA_{\cS}$. Hence, the chosen concept $h \in \cH$ also depends on the training data $\cS$. For a given sample $\cS$ of cardinality $N$, we write
\begin{equation*}
    \ell_{\text{exp}}(\cA_{\cS}) \coloneqq \Eb_{\mu}\left(\ell(\cA_{\cS}, X, Y)\right),
\end{equation*}
to denote the \emph{expected risk}. However, since the underlying distribution $\mu$ is unknown, we usually resort to minimizing the \emph{empirical risk} for a given sample $\cS$, i.e.,
\begin{equation*}
    \ell_{\text{emp}}(\cA_{\cS}) \coloneqq  \frac{1}{N}\sum_{(x,y) \in \cS} \ell(\cA_{\cS},x,y).
\end{equation*}
Given the above notation, the \new{generalization error} of a hypothesis $h \in \cH$ is defined as the absolute difference of the above two quantities, i.e., 
\begin{equation*}
|\ell_{\mathrm{emp}}(h) - \ell_{\mathrm{exp}}(h)|.
\end{equation*}

\textbf{Transductive statistical learning} Again, let $(\Omega, \cF, P)$ be a probability space. As in the inductive setting, we are given pairs $(x,y)$ of labeled data where $x\in \cX$ and $y\in \cY$, for some arbitrary input space $\cX$ and label space $\cY$, and we similarly define a hypothesis class $\cH$ and a loss function $\ell$. However, in the transductive setting, we do not assume an unknown underlying distribution for generating the data, as we know the inputs for which we want to predict labels from the beginning, allowing us to utilize them throughout the learning process. 

Thus, we consider the space $\cZ \coloneqq \{(x_i,y_i)\}_{i=1}^{m+u}$, where $n=m+u$, consisting of pairs $(x,y) \in \cX \times \cY$. We can access $m$ labeled pairs and the remaining $u$ unlabeled inputs. The objective is to choose a function $h \in \cH$ that minimizes the average loss on the $u$ unlabeled samples.

Since we have not assumed an underlying distribution, we must introduce randomness into our sampling process in some way. Following \citet[Setting 1]{DBLP:books/sp/Vapnik06}, we model the sampling process by assuming that the $m$ labeled training examples are sampled \emph{uniformly without replacement} from the whole space $\cZ$.

We may assume that we first choose a permutation $\pi \colon [n] \to [n]$ uniformly at random from the set of all possible $n!$ permutations. We then order the space $\cZ$ according to this permutation, obtaining
\begin{equation*}
    \cZ_{\pi} = (x_{\pi(i)}, y_{\pi(i)})_{i=1}^{m+u}.
\end{equation*}
We select the first $m$ elements from this ordered sequence as our labeled training set and the remaining $u$ elements as the unlabeled test set. Thus, our training set is defined as
\begin{equation*}
    \cS_{\pi} = \{(x_{\pi(i)}, y_{\pi(i)})\}_{i=1}^{m} \cup \{x_{\pi(i)}\}_{i=m+1}^{m+u}.
\end{equation*}
The only source of randomness in this setting comes from the random variable $\pi \colon \Omega \to S_n$, where $S_n = \{ \rho \colon [n] \to [n] \mid \rho \text{ is a bijection} \}$ denotes the symmetric group, with
\begin{equation*}
    P( \pi \in A ) = \frac{|A|}{n!}, \quad \text{for } A \subset S_n.
\end{equation*}
We write $\pi \sim \text{Unif}(S_n)$. A \emph{transductive learning algorithm} $\cA$ is then defined as a function $S_n \to \cH$, and the generalization gap is given by the difference 
\begin{equation*}
|R_m-R_u|,
\end{equation*}
where,
\begin{equation*}
    R_m \coloneq \frac{1}{m} \sum_{i=1}^{m} \ell(\cA_{\pi}, x_{\pi(i)}, y_{\pi(i)}) \quad \text{and} \quad R_u \coloneq \frac{1}{u} \sum_{i=m+1}^{m+u} \ell(\cA_{\pi}, x_{\pi(i)}, y_{\pi(i)}),
\end{equation*}
where $\cA_{\pi}$ depends on $\pi$ through $\cS_{\pi}$.

\textbf{Remarks}
\begin{enumerate}
    \item This setting differs significantly from the i.i.d.\@ case, as the samples are not independent. Specifically, since we sample without replacement, each training sample depends on the others. Consequently, concentration inequalities for independent random variables or graph-dependent data, such as \cref{thm:Janson's_concentration}, are not directly applicable.
    
    \item In this setting, the underlying distribution and dependencies between samples are explicitly known. These dependencies should be exploited to derive generalization bounds.
    
    \item Considering our goal, it is reasonable to focus on learning algorithms that exhibit symmetry concerning the arguments $\{(x_{\pi(i)}, y_{\pi(i)})\}_{i=1}^{m}$ for any permutation $\pi \in S_n$. This symmetry ensures that the expected values satisfy the following conditions:
        \begin{align*}
        & \Eb_{\pi}[\ell(\cA_{\pi}, x_{\pi(i)}, y_{\pi(i)})] = \Eb_{\pi}[\ell(\cA_{\pi}, x_{\pi(1)}, y_{\pi(1)})], \text{ for } i\in [m] \\
        & \Eb_{\pi}[\ell(\cA_{\pi}, x_{\pi(m+j)}, y_{\pi(m+j)})] = \Eb_{\pi}[\ell(\cA_{\pi}, x_{\pi(m+1)}, y_{\pi(m+1)})], \text{ for } j\in [u].
    \end{align*}
    Implying,
    \begin{align*}
        &\Eb_{\pi}[R_m] = \Eb_{\pi}[\ell(\cA_{\pi}, x_{\pi(1)}, y_{\pi(1)})], \\
        &\Eb_{\pi}[R_u] = \Eb_{\pi}[\ell(\cA_{\pi}, x_{\pi(m+1)}, y_{\pi(m+1)})].
    \end{align*}
\end{enumerate}

\textbf{Learning on Graphs}
We specify below how the statistical learning approaches discussed earlier (inductive and transductive) apply to graph-learning tasks. More precisely, we focus on node and link prediction, the primary tasks analyzed in this work. However, the setting can be naturally extended to any graph-representation learning task.

In the inductive setting for node prediction, the input space is given by $\cX \coloneqq \cG' \otimes V$, where $\cG' \subset \cG_{d}^{\Rb}$, and the label space is $\cY = \{0,1\}$ for classification or $\Rb$ for regression. Thus, the dataset consists of triplets $(G, u, y)$, where $G \in \cG'$, $u \in V(G)$, and $y \in \cY$, sampled according to a distribution $\mu$, as described earlier. Possible dependencies between samples will be specified based on our assumptions, as detailed in \cref{sec:robustness_under_dependency}. A natural assumption regarding the dependency structure in the dataset is that two sampled triplets $(G_1, u_1, y_1)$ and $(G_2, u_2, y_2)$ are possibly dependent when $G_1 = G_2$. This follows from the assumption that the underlying distribution first generates the entire graph with all node labels, and then only a subset of these labels is revealed to the learner. Similarly, the input space is given by $\cX \coloneqq \cG' \otimes E$ for link prediction.

In the transductive setting for node prediction, we assume that we are given a fixed graph $G \in \cG_{d}^{\Rb}$ with node set $V(G) = \{u_1, \dots, u_n\}$, where $n = m + u$, and that the labels for $m$ nodes in $V(G)$ are provided. Thus, we define the dataset as $\cZ \coloneqq \{(G, u_i, y_i)\}_{i=1}^{m+u}$. When we say the graph is given, we implicitly assume that the initial node features are also provided. Similarly, in the transductive framework for link prediction, we define $\cZ = \{(G, e_i, y_i)\}_{i=1}^{m+u}$, where each $(G, e_i)$ belongs to $G \otimes E \coloneq \{(G, e) \mid e = \{u,v\} \subset V(G) \}$

\textbf{Cross-entropy loss function}
Finally, we note that the quantities $\ell_{\mathrm{exp}}, \ell_{\mathrm{emp}}, R_u$, and $R_m$ can be similarly defined when the hypothesis class $\cH$ consists of functions mapping from $\cX$ to $\Rb$ (rather than $\{0,1\}$), as long as the loss function $\ell \colon \cH \times \cZ \to \Rb^{+}$ is properly defined. In this work, all $L$-layer generalized MPNNs are assumed to map inputs to vectors $\hb \in \Rb^{d_L}$, which are then processed by a trainable feedforward neural network (FNN) producing an output $x \in \Rb$. We use the binary cross-entropy loss with a sigmoid activation to compute the loss between $x$ and the true label $y \in \{0,1\}$:
\begin{equation*}
\mathrm{BCE}(x, y) \coloneq y \log(\sigma(x)) + (1 - y) \log(1 - \sigma(x)),
\end{equation*}
where $\sigma(x)$ denotes the sigmoid function. Importantly, note that when using the above setting for the loss function $\ell \colon \cH \times \cZ \to \Rb^{+}$, it is easy to verify that $\ell(h, \cdot)$, satisfies the Lipschitz property as required by \cref{thm:binaryclassificationdatadepend}, and \cref{thm:binaryclassificationtransductive}.

\section{The unrolling distances}
\label{app_sec:unrolling_distances_extension}

Here, we formally define the unrolling distances introduced in \cref{sec:unrolling_distances} and prove their main properties.

Recalling the definition of unrolling trees from \cref{app_sec:1WL}, given $(G,S) \in \cG' \otimes \mathfrak{R}$, a transformation function $\cT \colon \cG' \to \cG_{d}^{\Rb}$, selection function $\cV \colon (G,S) \mapsto A \subset V(\cT(G))$, and a number of layers $L \in \Nb$, we define the following multi-set of unrolling trees,
\begin{equation*}
	\cF^{(L)}(G,S) = \oms \UNR{\cT(G),u,L} \mid u \in \cV(G,S) \cms.
\end{equation*}
Our metric is intended to measure the alignment between the multisets of unrolling trees $\cF^{(L)}(G_1,S_1)$ and $\cF^{(L)}(G_2,S_2)$. We first define a padding process for handling multisets and unrolling of different sizes.

Given a rooted tree $(T,r)$ with node features in $\Rb^d\setminus \{\vec{0}_{\Rb^d}\}$, and parameters $L, q \in \Nb$ such that  
\begin{equation*}  
L \geq \max\{ \text{length}(p) \mid v \in V(T), p \in \mathcal{P}(r,v) \}, \quad q \geq \max\{ |N_{T,\text{out}}(v)| \mid v \in V(T) \},  
\end{equation*}  
where $L$ is at least the depth of the tree, and $q$ is at least the maximum out-degree of the corresponding directed tree, noting that any rooted tree has an implicit direction pointing away from the root, we define the $(q,L)$-padded rooted tree $(T',r)$ according to the following algorithm.

\begin{enumerate}
    \item Initialize $l=0$ and set $T_0 = T$.
    \item For each node $u$ at level $l$ of $T_0$, add children with node feature $\vec{0}_{\Rb^d}$ until $u$ has exactly $q$ children. The resulting tree is denoted as $T_1$.
    \item If $l = L-1$, the algorithm terminates, and we set $T' = T_1$. Otherwise, repeat step 2 with $l = l+1$ and set $T_0 = T_1$.
\end{enumerate}
We denote the resulting rooted tree $T'$ as $\mathsf{padd}_{q,L}(T)$, where the root remains unchanged.

Now, given $(G_1,S_1),(G_2,S_2) \in (\cG_{d}^{\Rb},\mathfrak{R})$, we apply the $\mathsf{padd}_{q,L}$ process to each unrolling tree in both $\cF^{(L)}(G_1,S_1)$ and $\cF^{(L)}(G_2,S_2)$, with $q$ set to the maximum node degree, considering the trees as directed trees, and $L$ set to the maximum depth among all trees in $\cF^{(L)}(G_1,S_1)$ and $\cF^{(L)}(G_2,S_2)$. Ignoring node features, the resulting trees are all isomorphic, i.e., all resulting trees are $L$-depth-rooted trees where each non-leaf node has exactly $q$ children. Finally, suppose the resulting multisets have different sizes. In that case, we augment the smaller one with additional trees of the same structure and node features $\vec{0}_{\Rb^{d}}$ until both multisets have the same cardinality. We denote the resulting multisets as $(M_1,M_2)$ and define the mapping,

\begin{equation*}
\rho \colon (\cF^{(L)}(G_1,S_1), \cF^{(L)}(G_2,S_2)) \mapsto (M_1, M_2).
\end{equation*}
See \cref{fig:padding_process} for an illustration of the padding process through the function $\rho$.

Finally, let $G_A, G_B$ be two graphs such that an edge-preserving bijection exists between their nodes. We define  
\begin{equation*}  
S(G_A, G_B) \coloneq \{ \varphi \colon V(G_A) \to V(G_B) \mid \varphi \text{ is an edge-preserving bijection} \}.
\end{equation*}

We now recall the definition of the unrolling distance function on
\begin{equation*}
\cG_{d}^{\Rb} \otimes \mathfrak{C} \coloneq \{ (G,S) \mid G \in \cG_{d}^{\Rb}, S \subseteq V(G) \}.
\end{equation*}

\begin{definition}[Restated, \cref{def:gen_unrolling_distances}]
Given a transformation function $\cT \colon \cG' \to \cG_{d}^{\Rb}$, and a selection function $\cV \colon (G,S) \mapsto A \subset V(\cT(G))$, for $(G,S) \in \cG' \otimes \mathfrak{R}$, we define the $(\cT,\cV)$-\new{Unrolling Distance} ($\mathrm{UD}_{\cT,\cV}^{(L)}$) of depth $L$ on $\cG_{d}^{\Rb} \otimes \mathfrak{C}$ as follows. Let $(G_1,S_1), (G_2,S_2) \in \cG' \otimes \mathfrak{C}$ and $\rho ( (\cF^{(L)}(G_1,S_1), \cF^{(L)}(G_2,S_2)) )=(M_1,M_2)$. Then,
\begin{equation*}
\mathrm{UD}^{(L)}_{\cT,\cV}( (G_1,S_1), (G_2,S_2) ) = \min_{ \varphi \in S(F_1,F_2)}  \sum_{x \in V(F_1)} \norm{ a_{F_1}(x) - a_{F_2}(\varphi(x))}_2,
\end{equation*}
where $F_i$ is the forest consisting of the disjoint union of all trees in $M_i$, for $i=1,2$, and we recall that
\begin{align*}  
S(G, H) \coloneq \{ \varphi \colon V(G) \to V(H) \mid \varphi \text{ is an edge-preserving bijection} \}.
\end{align*}
\end{definition}

Let $T_1,T_2$ be two rooted trees such that both have depth $L$ and all non-leaf nodes in $T_1$ and $T_2$ have exactly $q$ children. Let $\overrightarrow{T_1}$ and $\overrightarrow{T_2}$ be the directed versions of these trees. Comparing the degrees of the nodes (for $T_1,T_2$), it is easy to verify that $S(T_1, T_2) = S(\overrightarrow{T_1}, \overrightarrow{T_2})$, implying that in the above definition, it does not matter whether we consider the padded unrollings as directed or undirected trees.

We now prove that the Unrolling Distance is a well-defined pseudo-metric on $\cG_{d}^{\Rb} \otimes \mathfrak{C}$.

\begin{lemma}
\label{lemma:unrol_dist_pseudo-metricproperty} 
For every $L \in \Nb$, the Unrolling Distance $\mathrm{UD}_{\cT,\cV}^{(L)}$ in \cref{eq:unrolling_distance} is a well-defined pseudo-metric on 
$\cG_{d}^{\Rb} \otimes \mathfrak{C}.$
\end{lemma}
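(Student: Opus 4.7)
The plan is to verify the pseudo-metric axioms—non-negativity, vanishing on identical arguments, symmetry, and the triangle inequality—while also confirming well-definedness of the construction. Well-definedness of the minimum itself is immediate: the padding procedure $\rho$ produces forests $F_1$ and $F_2$ that are isomorphic as unlabeled rooted forests, so the feasible set $S(F_1,F_2)$ is non-empty and finite, and the min in Eq.~\eqref{eq:unrolling_distance} exists. Non-negativity holds because the objective is a sum of Euclidean norms. For identical inputs, $F_1 = F_2$ and the identity is a feasible bijection with zero objective. Symmetry follows from $\|a-b\|_2 = \|b-a\|_2$ by passing to $\varphi^{-1}$ and re-indexing the sum over $V(F_2)$.

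The less trivial preparatory step is padding-invariance: the value of the min does not depend on the specific $(q,L')$ used by $\rho$, as long as $q$ is at least the maximum out-degree and $L'$ is at least the maximum depth occurring in $\cF^{(L)}(G_1,S_1)\cup\cF^{(L)}(G_2,S_2)$. Larger padding inserts additional nodes with feature $\vec{0}$ in \emph{both} forests at structurally identical positions, so any optimal bijection under minimal padding extends canonically to larger padding by matching these additional zero-feature subtrees to each other, adding $0$ to the objective. Conversely, given any bijection for the larger padding, a sequence of subtree-swaps—each non-increasing by the pointwise inequality $\|a-\vec{0}\|_2 + \|\vec{0}-b\|_2 \ge \|a-b\|_2$ applied across matched positions—transforms it into one that pairs zero-padding subtrees with zero-padding subtrees and hence restricts to the minimally padded forests with the same value.

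With padding-invariance established, the triangle inequality for three pairs $((G_i,S_i))_{i=1,2,3}$ follows by choosing common padding parameters $(q,L')$ large enough to accommodate all three unrolling collections, yielding pairwise isomorphic forests $F_1,F_2,F_3$ for which composition of edge-preserving bijections is well-defined. For optimal $\varphi_{12}\in S(F_1,F_2)$ and $\varphi_{23}\in S(F_2,F_3)$, the composition $\psi := \varphi_{23}\circ\varphi_{12}\in S(F_1,F_3)$, and the pointwise triangle inequality of $\|\cdot\|_2$ together with the bijectivity of $\varphi_{12}$ yields
\begin{equation*}
\sum_{x\in V(F_1)} \|a_{F_1}(x) - a_{F_3}(\psi(x))\|_2 \le \mathrm{UD}_{\cT,\cV}^{(L)}((G_1,S_1),(G_2,S_2)) + \mathrm{UD}_{\cT,\cV}^{(L)}((G_2,S_2),(G_3,S_3)),
\end{equation*}
which upper-bounds $\mathrm{UD}_{\cT,\cV}^{(L)}((G_1,S_1),(G_3,S_3))$ by feasibility of $\psi$.

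The main obstacle is the padding-invariance step, in particular the converse direction of the swap argument. One must ensure that each exchange preserves the edge-preserving (and root-preserving) structure of the rooted forests, and that the multiset-level padding (adding all-zero trees to the smaller multiset) is handled consistently. The cleanest route is to establish a local exchange lemma: whenever a non-zero subtree in $F_1$ is matched to a zero-padding subtree in $F_2$ while a zero-padding subtree in $F_1$ at a structurally equivalent position is matched to some subtree in $F_2$, swapping the two matchings yields another element of $S(F_1,F_2)$ with objective no larger than before, and iterating this procedure terminates at a bijection that respects the zero-padding structure.
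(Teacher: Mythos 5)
Your proof is correct and its core coincides with the paper's: the paper encodes elements of $S(F_1,F_2)$ as permutation matrices $\vec{P}$ with $\vec{A}(F_1)\vec{P}=\vec{P}\vec{A}(F_2)$ and proves the triangle inequality by showing $\vec{P}^{(1,2)}\vec{P}^{(2,3)}\in\cP_{1,3}$ and expanding $\tr\bigl((\vec{P}^{(1,2)}\vec{P}^{(2,3)})^{\mathrm{T}}\vec{L}^{(1,3)}\bigr)$ with the pointwise triangle inequality of $\|\cdot\|_2$ — which is exactly your composition $\psi=\varphi_{23}\circ\varphi_{12}$ in matrix notation. Where you genuinely diverge is in isolating padding-invariance as a separate lemma. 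The paper dispenses with this in one sentence (``we can assume that $F_1,F_2,F_3$ have the same structure, since \dots we can pad them by adding nodes (or even trees) with $\vec{0}_{\Rb^d}$ features''), which silently uses both directions of your invariance claim: the pairwise distances in \cref{def:gen_unrolling_distances} are computed with pair-specific padding parameters, so the three-way argument only bounds the commonly-padded quantities unless one knows these agree with the pair-padded ones. Making that explicit is a real improvement in rigor, and the easy direction (extend an optimal matching by pairing the new zero subtrees, adding cost $0$) you handle correctly.

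One caveat on your exchange lemma, which you rightly flag as the delicate step: a swap of two matched subtrees of $F_1$ preserves edge-preservation only if the two subtrees are \emph{siblings} (children of the same node), since the root of the relocated subtree must remain adjacent to the image of its parent. An arbitrary ``real subtree matched to a zero-padding subtree'' need not have, anywhere in the forest, a partner swap that respects this. The fix is to run the exchange top-down, level by level: once a node $u$ of $F_1$ is matched to a node of $F_2$ and both lie in the minimally padded sub-forests, the children of $u$ and of its image each split into equinumerous sets of ``minimal-padding'' and ``extra-padding'' children, so by counting any mismatch can be repaired by a swap \emph{within the sibling set}, and the inequality $\sum_x\|a(x)\|+\sum_y\|a(y)\|\geq\sum_x\|a(x)-a(\sigma(x))\|$ applied to the two relocated subtrees shows each such swap is non-increasing; the same argument at the level of whole trees handles the multiset padding. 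With that localization your argument closes completely, and the restriction of the resulting matching to the minimally padded forests witnesses the converse inequality.
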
 

\begin{proof}
The proof follows the reasoning of the proof in \citet[Lemma 1]{bento2019family}.
Let $\cG'$ denote the set of all graphs in $\cG_{d}^{\Rb}$ that do not contain nodes with zero feature vectors. That is,  
$\cG' = \cG_{d}^{\Rb} \cap \{G \in \cG_{d}^{\Rb} \mid a_G(u) \neq \vec{0}, \forall u \in V(G) \}$. We prove the pseudo-metric property on $\cG' \otimes \mathfrak{C}$ for simplicity of notation, but it can easily be extended to $\cG_{d}^{\Rb} \otimes \mathfrak{C}$.

Let $(G_i,S_i)\in\cG' \otimes \mathfrak{R}$. We denote by $F_i$ the forest consisting of all padded unrolling trees of depth $L$ of graph $G_i$ as defined in the definition of the Unrolling Distance and $a_{F_i}(u)$, being the node feature of node $u$ in $V(F_i)$. Note that we can assume that $F_1, F_2, F_3$ have the same structure, since if that is not the case, we can pad them by adding nodes (or even trees) with $\vec{0}_{\Rb^d}$ features. We further assume for simplicity of notation that $V(F_i)=[n]$, for $i \in [3]$. 

Now, we define the distance matrices $\vec{L}^{(\kappa, \lambda)}$, as $\vec{L}^{(\kappa, \lambda)}_{i,j} = \|a_{F_{\kappa}}(i) - a_{F_{\lambda}}(j)\|_2$, for $\kappa,\lambda,i,j \in [3]$. Let also $\cP_{i,j}$, being the set of all permutation $n\times n$ matrices $\vec{P}$ satisfying $\vec{A}(F_i)\vec{P} = \vec{P} \vec{A}(F_j)$, for $i,j \in [3]$.
Hence, the unrolling distance can be rewritten as,
\begin{equation}
\label{eq:unrolling_dystance_permutation_formula}
\mathrm{UD}_{\cT,\cV}^{(L)}((G_1,S_2), (G_2,S_2)) = \min_{\vec{P} \in \cP_{1,2}} \left( \tr(\vec{P}^{\mathrm{T}} \vec{L}^{(1,2)} ) \right).
\end{equation}
It is easy to observe from \cref{eq:unrolling_dystance_permutation_formula} that the identity and symmetry properties are trivially satisfied. For the triangle inequality, let $\vec{P}^{(i,j)} \in \cP_{i,j}$ be the permutation matrix that minimizes $\mathrm{UD}_{\cT,\cV}^{(L)}((G_i,S_i), (G_j,S_j))$. Note that $\vec{P}^{(1,2)}\vec{P}^{(2,3)} \in \cP_{1,3}$, that is because permutation matrices are closed under products and 
\begin{equation*}
\vec{A}(F_1)\vec{P}^{(1,2)}\vec{P}^{(2,3)} = \vec{P}^{(1,2)}\vec{A}(F_2) \vec{P}^{(2,3)} = \vec{P}^{(1,2)}\vec{P}^{(2,3)} \vec{A}(F_3).
\end{equation*}
Therefore, 
\begin{equation*}
\mathrm{UD}_{\cT,\cV}^{(L)}((G_1,S_1), (G_3,S_3)) \leq \tr((\vec{P}^{(1,2)}\vec{P}^{(2,3)})^{T}\vec{L}^{(1,3)} ).
\end{equation*}
It is, therefore, sufficient to show that,
\begin{equation*}
\tr\left((\vec{P}^{(1,2)}\vec{P}^{(2,3)})^{\mathrm{T}}\vec{L}^{(1,3)} \right) \leq  
\tr\left(\left(\vec{P}^{(1,2)}\right)^T\vec{L}^{(1,2)}\right) + \tr\left(\left(\vec{P}^{(2,3)}\right)^T\vec{L}^{(2,3)}\right).
\end{equation*}
To verify this, we use the following property of the trace operator. For $\vec{A},\vec{B},\vec{C} \in \Rb^{n\times n}$, we have that,
\begin{equation*}
\tr((\vec{A}\vec{B})^{T}\vec{C}) = \sum_{i,j,k=1}^{n}A_{i,k}B_{k,j}C_{i,j}.
\end{equation*}
Applying this we have that:
\begin{align*}
\tr( ( \vec{P}^{(1,2)}\vec{P}^{(2,3)})^T \vec{L}^{(1,3)}  ) &= \sum_{i,j,k=1}^{n} P^{(1,2)}_{i,k}P^{(2,3)}_{k,j} \|a_{F_{1}}(i) - a_{F_{3}}(j)\|_2 \\
&\leq \sum_{i,j,k=1}^{n} P^{(1,2)}_{i,k}P^{(2,3)}_{k,j} \left( \|a_{F_{1}}(i) - a_{F_{2}}(k)\|_2 + \|a_{F_{2}}(k) - a_{F_{3}}(j)\|_2 \right) \\
& = \sum_{i,k=1}^{n} P^{(1,2)}_{i,k} \|a_{F_{1}}(i) - a_{F_{2}}(k)\|_2 \underbrace{\sum_{j=1}^{n} P^{(2,3)}_{k,j}}_{=1} \\
&+ \sum_{k,j=1}^{n} P^{(2,3)}_{k,j} \|a_{F_{2}}(k) - a_{F_{3}}(j)\|_2 \underbrace{\sum_{i=1}^{n} P^{(1,2)}_{i,k}}_{=1} \\
& = \tr\left(\left(\vec{P}^{(1,2)}\right)^T\vec{L}^{(1,2)}\right) + \tr\left(\left(\vec{P}^{(2,3)}\right)^T\vec{L}^{(2,3)}\right)
\end{align*}
\end{proof}

Finally, note that the above definition can be generalized by introducing different weights on each level of the unrolling trees. Given a forest of rooted trees $F$, define the level function $l_F \colon V(F) \to \Nb$, where $l_F(u)$ is the level of the tree node $u$ belongs to. Given a weight function $\omega \colon \Nb \to \Rb^{+}$, we define the following \new{Weighted Unrolling Distance},
\begin{equation}
\label{def:weighted_node-forest-distance}
\mathrm{UD}_{\cT,\cV, \omega}^{(L)}((G_1,S_2), (G_2,S_2)) = \min_{ \varphi \in S(F_1,F_2)}  \sum_{x \in V(F_1)} \omega(l_{F_1}(x)) \norm{ a_{F_1}(x) - a_{F_2}(\varphi(x))}_2.
\end{equation}
To verify that \cref{def:weighted_node-forest-distance} is a well-defined pseudo-metric, we would have to modify the definition of unrolling trees by weighting the node features based on their level on the tree and then follow the proof of \cref{lemma:unrol_dist_pseudo-metricproperty}. Regarding the numerical computation of the above distance, we follow the optimal transport approach described in~\citet{chuang2022tree}, which formulates the problem as an instance of optimal transport and solves it using dynamic programming. Alternatively, faster approximation methods, such as those proposed by~\citet{Altschuler2017}, can be employed.

The following result shows that the sub-sum property, see~\cref{def:sub-sum_pooling}, provides a sufficient condition for a generalized $(\cT,\cV,\Psi)$-MPNN$(L)$ to satisfy the Lipschitz property with respect to the $\mathrm{UD}^{(L)}_{\cT,\cV,\omega}$ pseudo-metric, for a specific weighting function $\omega$.

\begin{proposition}
\label{prop:Lipschitz_generalized_MPNNs}
Let $\cG_{d}^{\Rb} \otimes \mathfrak{R}$ be a representation task, and let $(\cT, \cV, \Psi)$ be a generalized MPNN$(L)$ with an MPNN architecture given in \cref{sec:MPNNs}, where $\Psi$ satisfies the sub-sum property with parameter $\widetilde{C}$. Then, for 
\begin{equation*}
\omega \colon l \mapsto \binom{L}{l}, \quad l \in \{0,\ldots,L\},
\end{equation*}
where $\binom{L}{l}$ denotes the binomial coefficient. The generalized MPNN is Lipschitz concerning the weighted unrolling distance $\mathrm{UD}_{\cT,\cV, \omega}^{(L)}$. That is, there exists a constant $C > 0$ such that for all $(G_1,S_1), (G_2,S_2) \in \cG_{d}^{\Rb} \otimes \mathfrak{R}$, we have
\begin{equation*}
    \|h_{\cT,\cV,\Psi}(G_1,S_1) - h_{\cT,\cV,\Psi}(G_2,S_2) \|_2 \leq C \cdot \mathrm{UD}_{\cT,\cV,\omega}^{(L)}((G_1,S_1),(G_2,S_2)).
\end{equation*}
Moreover, the constant $C$ depends on $L$, $\widetilde{C}$, the Lipschitz constants $L_{\varphi_t}$ and the bound $B$ on the $2$-norm of weighted matrices described in \cref{sec:MPNNs}.
\end{proposition}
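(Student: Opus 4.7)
The plan is to combine the sub-sum property of $\Psi$ with an inductive Lipschitz bound on the underlying MPNN, closed via Pascal's identity. First, I would apply the sub-sum property (\cref{def:sub-sum_pooling}) to reduce the pooled-output discrepancy to a sum of per-node discrepancies. Let $\varphi^{*}$ be the edge-preserving bijection between the padded forests $F_1,F_2$ achieving the minimum in $\mathrm{UD}^{(L)}_{\cT,\cV,\omega}((G_1,S_1),(G_2,S_2))$. Its restriction to the roots of the (padded) unrolling trees induces, up to sending surplus trees to the $*$ symbol with zero feature, an extended bijection $\sigma\colon\cV(G_1,S_1)\to\cV(G_2,S_2)\cup\{*\}$ compatible with \cref{def:sub-sum_pooling}, giving
$$\|\hb^{(L)}_{\cT,\cV,\Psi}(G_1,S_1)-\hb^{(L)}_{\cT,\cV,\Psi}(G_2,S_2)\|_{2}\leq \widetilde{C}\sum_{x\in\cV(G_1,S_1)}\|\hb^{(L)}_{\cT(G_1)}(x)-\hb^{(L)}_{\cT(G_2)}(\sigma(x))\|_{2}.$$

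Next, I would prove by induction on $t\in\{0,\ldots,L\}$ the following claim: for any nodes $x,y$ whose depth-$t$ (padded) unrolling trees $T^{(t)}_{x},T^{(t)}_{y}$ are related by an edge-preserving bijection $\psi$ with $\psi(x)=y$,
$$\|\hb^{(t)}(x)-\hb^{(t)}(y)\|_{2}\leq K_{t}\sum_{\tau\in V(T^{(t)}_{x})}\binom{t}{l(\tau)}\|a(\tau)-a(\psi(\tau))\|_{2},$$
where $K_{t}\coloneq B^{t}\prod_{s=1}^{t}L_{\varphi_{s}}$, $l(\tau)$ is the level of $\tau$, and the MPNN is extended to padded zero-feature nodes by convention. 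The base case $t=0$ is immediate since $T^{(0)}_{x}$ is the isolated root with $\binom{0}{0}=1$. For the inductive step, Lipschitz continuity of $\varphi_{t}$, the operator-norm bounds $\|W^{(1)}_{t}\|_{2},\|W^{(2)}_{t}\|_{2}\leq B$, and the triangle inequality along $\psi$ on children give
$$\|\hb^{(t)}(x)-\hb^{(t)}(y)\|_{2}\leq L_{\varphi_{t}}B\Bigl(\|\hb^{(t-1)}(x)-\hb^{(t-1)}(y)\|_{2}+\sum_{u\in N(x)}\|\hb^{(t-1)}(u)-\hb^{(t-1)}(\psi(u))\|_{2}\Bigr),$$
to which the inductive hypothesis applies: on the depth-$(t-1)$ subtree rooted at $x$ (using the restriction of $\psi$), and on each subtree $T^{(t-1)}_{u}$ for $u$ a (padded) neighbor of $x$.

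The key combinatorial observation is that a node at level $l'$ in $T^{(t-1)}_{u}$ sits at level $l=l'+1$ in $T^{(t)}_{x}$; after re-indexing, each node at level $l$ in $T^{(t)}_{x}$ collects the coefficient $\binom{t-1}{l}+\binom{t-1}{l-1}=\binom{t}{l}$ by Pascal's identity (with only one contributor at the boundary levels $l=0$ and $l=t$), giving the recursion $K_{t}=L_{\varphi_{t}}B\cdot K_{t-1}$. Instantiating the induction at $t=L$ and summing over $x\in\cV(G_1,S_1)$ reassembles the forest $F_1$, and since the per-tree bijections are the restrictions of $\varphi^{*}$, the right-hand side collapses to $\widetilde{C}K_{L}\,\mathrm{UD}^{(L)}_{\cT,\cV,\omega}((G_1,S_1),(G_2,S_2))$, yielding the Lipschitz constant $C=\widetilde{C}B^{L}\prod_{t=1}^{L}L_{\varphi_{t}}$.

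I expect the main obstacle to be the bookkeeping in the inductive step: correctly restricting the forest-level bijection $\varphi^{*}$ to each depth-$(t-1)$ subtree of the padded unrollings, tracking the level shift under the attach-children operation, and consistently handling padded zero-feature nodes and the $*$ symbol for unmatched roots (both of which contribute harmlessly since $\vec{0}$ has zero norm). Once these alignments are indexed correctly, Pascal's identity cleanly produces the prescribed binomial weighting $\omega(l)=\binom{L}{l}$ at depth $L$.
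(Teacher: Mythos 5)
Your proposal is correct and follows essentially the same route as the paper's proof: the paper also first invokes the sub-sum property with the extended bijections induced by the optimal edge-preserving forest bijection, and then derives the two-term recursion $\delta(L,l)\leq L_{\varphi_L}\bigl(\delta(L-1,l)+\delta(L-1,l+1)\bigr)$, which is exactly the self-term/neighbor-term level-shift you close with Pascal's identity. The only differences are organizational (the paper aggregates over all roots from the start rather than doing a per-node induction and summing at the end, and it assumes $B=1$ for notational simplicity where you carry $B^{L}$ explicitly).
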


\begin{proof}
Let $(G_1,S_1), (G_2,S_2) \in \cG_{d}^{\Rb}$ (without $\vec{0}$ node features), and let $F_1, F_2$ be their corresponding padded forests as described in \cref{def:gen_unrolling_distances}. First, note that each edge-preserving bijection $\varphi$ between $F_1$ and $F_2$ (i.e., $\varphi \in S(F_1,F_2)$) induces a sequence of bijections $\widebar{\sigma}_{l} \colon V_1^{(l)} \to V_2^{(l)}$, where $V_1^{(l)}$ and $V_2^{(l)}$ denote the sets of nodes in $F_1$ and $F_2$ at depth $l$, respectively, for $l \in \{0,\ldots, L\}$. 

Based on $\widebar{\sigma}_{l}$, we can define a sequence of extended bijections $\sigma_{l}$ between $V_1^{(l)} \cap \{ u \in V_1^{(l)} \mid a_{F_1}(u) \neq \vec{0} \}$ and $V_2^{(l)} \cap \{ u \in V_2^{(l)} \mid a_{F_2}(u) \neq \vec{0} \}$ as follows:
\begin{equation*}
    \sigma_{l}(u) = \begin{cases} 
    \widebar{\sigma}_{l}(u), & \text{if } a_{F_1}(u) \neq \vec{0}, \\
    *, & \text{otherwise}. 
    \end{cases}
\end{equation*}
A similar definition applies when $|V_1^{(l)}| < |V_2^{(l)}|$. Therefore, each $\varphi \in S(F_1,F_2)$ uniquely induces a sequence of extended bijections. It is straightforward to verify that this sequence is unique and that this process can be reversed, meaning that each sequence of extended bijections induces a unique $\varphi \in S(F_1,F_2)$. For simplicity, we assume $B=1$ (the proof follows similarly for general $B$). Finally, by the construction of the unrolling trees, each node in $V_i^{(l)} \cap \{ u \in V_1^{(l)} \mid a_{F_1}(u) \neq \vec{0} \}$ can be mapped to the corresponding node $u \in V(\cT(G_i))$, for $i=1,2$, $l=0,\ldots,L$. Therefore, with a slight abuse of notation, $\sigma_{l}$ also defines an extended bijection between nodes in $\cT(G_1)$ and $\cT(G_2)$ that can be found in the $l$-th level of some tree.

Let $\varphi \in S(F_1,F_2)$, and let $\sigma_0,\ldots,\sigma_L$ be the corresponding induced extended bijections. We define $\delta(L,l)$ as follows \footnote{In this notation, without loss of generality and for simplicity, we assume that 
$|N_{F_1}(x) | \geq | N_{F_2}(y) |$. If this assumption does not hold, the notation can be trivially adjusted to reflect the reverse case.}:
\begin{align*}
    & \delta(L,0) = \sum_{x_{0} \in \cV(G_1,S_1)} \| h_{\cT(G_1)}^{(L)}(x_{0}) - h_{\cT(G_2)}^{(L)}(\sigma_0(x_{0})) \|_2, \\
    & \delta(L,l) = \sum_{x_{0} \in \cV(G_1,S_1)} \sum_{x_{1} \sim x_{0}} \dots \sum_{x_l \sim x_{l-1}} \| h_{\cT(G_1)}^{(L)}(x_l) - h_{\cT(G_2)}^{(L)}(\sigma_l(x_l)) \|_2, \quad \text{for } l \in [L],
\end{align*}
where $h_{\cT(G_i)}^{(l)}(*)=\vec{0}$, for all $i=1,2$, $l \in \{0,\ldots,L\}$ and $x_l \sim x_{l-1} \Leftrightarrow x_l \in N_{\cT(G_1)}(x_{l-1})$, for all $l \in \{0,\ldots ,L\}$.

It is easy to observe that,
\begin{align*}
    \delta(L,l) &\leq L_{\varphi_{L}} \sum_{x_{0} \in \cV(G_1,S_1)} \sum_{x_{1} \sim x_{0}} \dots \sum_{x_l \sim x_{l-1}} \| \vec{W}^{(1)}_L \| \| h_{\cT(G_1)}^{(L-1)}(x_l) - h_{\cT(G_2)}^{(L-1)}(\sigma_l(x_l)) \|_2 \\
    & + L_{\varphi_{L}} \sum_{x_{0} \in \cV(G_1,S_1)} \sum_{x_{1} \sim x_{0}} \dots \sum_{x_{l+1} \sim x_l} \| \vec{W}^{(2)}_L \| \| h_{\cT(G_1)}^{(L-1)}(x_l) - h_{\cT(G_2)}^{(L-1)}(\sigma_{l+1}(x_{l+1})) \|_2 \\
    &\leq L_{\varphi_{L}} \delta(L-1,l)  +  L_{\varphi_{L}} \delta(L-1,l+1) 
\end{align*}
Thus, recursively applying this bound, we get:
\begin{align*}
    \| \Psi (F (G_1,S_1)) - \Psi(F(G_2,S_2)) \|_2 &\leq \widetilde{C}  \cdot \sum_{x \in \cV(G_1,S_1)} \| h_{\cT(G_1)}^{(L)}(x)  - h_{\cT(G_2)}^{(L)}(\sigma(x)) \|_2 \\
    &= \widetilde{C} \delta(L,0) \\
    &\leq \widetilde{C} L_{\varphi_{L}} \left( \delta(L-1,0) + \delta(L-1, 1) \right) \\
    &\leq \dots \\
    &\leq \widetilde{C} \left( \prod_{t=1}^{L} L_{\varphi_{t}} \right) \sum_{l=0}^{L-1} \binom{L}{l} \delta(0,l) \\
    &= \widetilde{C} \left( \prod_{t=1}^{L} L_{\varphi_{t}} \right)  \cdot \mathrm{UD}_{\cT,\cV,\omega}^{(L)}((G_1,S_1),(G_2,S_2)).
\end{align*}
\end{proof}

\section{Missing proofs from Section~\ref{sec:robustness_under_dependency}}
\label{app_sec:robustness_under_dependency}

This appendix contains the proofs of the main results of this work: \cref{thm:Xu_Mannor_noniid} and \cref{thm:Xu_Mannor_transductive}. We organize the presentation of the proofs as follows. For \cref{thm:Xu_Mannor_noniid}, we first establish a concentration inequality for dependent data, similar in spirit to the Bretagnolle--Huber--Carol inequality, stated as \cref{lemma:Bretagnolle_Huber_extension}, which is then used to prove the theorem. Similarly, for \cref{thm:Xu_Mannor_transductive}, we prove a concentration inequality adapted to the transductive setting (\cref{lemma:Bretagnolle_Huber_permutation_extension}), which forms the basis of the proof of the main result.

\subsection{Dependency graphs and concentration inequalities}
\label{app_sec:dependency_graphs}
We begin with the formal definition of dependency graphs, which model dependencies between random variables.

\begin{definition}[Dependency graphs]
Let $(\Omega, \cF, P)$ be a probability space and $(E, \mathcal{E})$ a measurable space. Given $n \in \Nb$ and random variables $X_i \colon \Omega \to E$, for $i \in [n]$, let $G$ be an undirected graph with $V(G) = [n]$. We say that $G$ is a \emph{dependency graph} of $\vec{X} = (X_1, \ldots, X_n)$, or that $\vec{X}$ is $G$-dependent, if for all disjoint subsets $I, J \subset [n]$, whenever $I$ and $J$ are not adjacent in $G$ (i.e., for all $u \in I, v \in J$ such that $\{u,v\} \notin E(G)$), the collections $\{X_i\}_{i \in I}$ and $\{X_j\}_{j \in J}$ are independent.
\end{definition}

This structure enables us to extend concentration inequalities designed initially for independent variables to settings with controlled dependencies by penalizing the complexity of the corresponding dependency graph, as demonstrated by the following theorem.

\begin{theorem}[{\citet[Theorem 2.1]{DBLP:journals/rsa/Janson04}}]
\label{thm:Janson's_concentration}
Let $(\Omega, \cF, P)$ be a probability space, and let $X_i \colon (\Omega, \cF) \to \Rb$, for $i \in [n]$, be random variables with dependency graph $G$. Suppose that each $X_i$ takes values in a real interval of length at most $c_i \geq 0$ for all $i \in [n]$. Then, for every $t > 0$,

\begin{equation*}
P \left( \sum_{i=1}^{n}X_i - \Eb\left[ \sum_{i=1}^{n}X_i \right] \geq t \right) 
\leq \exp\left( \frac{-2t^2}{\chi(G) \|\vec{c}\|_2^2} \right),
\end{equation*}

where $\vec{c} = (c_1, \ldots, c_n)$, and $\chi(G)$ denotes the chromatic number of $G$. More specifically, the chromatic number $\chi(G)$ in the bound can be replaced by the fractional chromatic number of $G$, which provides a tighter bound, as it lower bounds $\chi(G)$; see \citep{DBLP:journals/rsa/Janson04}.
\end{theorem}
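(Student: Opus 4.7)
The plan is to reduce Janson's inequality for $G$-dependent variables to the classical Chernoff--Hoeffding bound by exploiting a proper coloring of $G$. First I would fix a proper coloring of the dependency graph with $\chi(G)$ colors, giving a partition $[n] = V_1 \dot\cup \cdots \dot\cup V_{\chi(G)}$ where each color class $V_k$ is an independent set of $G$. Applying the dependency-graph definition to all partitions of $V_k$ into two disjoint subsets, together with a short induction on $|V_k|$, yields that the family $\{X_i\}_{i \in V_k}$ is \emph{mutually} independent (not just pairwise). Setting $S \coloneq \sum_{i=1}^n X_i$ and $S_k \coloneq \sum_{i \in V_k} X_i$, we have $S - \Eb[S] = \sum_{k=1}^{\chi(G)} (S_k - \Eb[S_k])$.

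The key device is Chernoff combined with H\"older's inequality to decouple the dependent blocks while only paying a single factor of $\chi(G)$. For any $\lambda > 0$, by Markov's inequality applied to the exponential,
\begin{equation*}
P\bigl(S - \Eb[S] \geq t\bigr) \;\leq\; e^{-\lambda t}\, \Eb\Bigl[\prod_{k=1}^{\chi(G)} e^{\lambda (S_k - \Eb[S_k])}\Bigr].
\end{equation*}
I would then apply H\"older's inequality with equal exponents $p_k = \chi(G)$ (so that $\sum_k 1/p_k = 1$) to the product, obtaining
\begin{equation*}
\Eb\Bigl[\prod_{k} e^{\lambda (S_k - \Eb[S_k])}\Bigr] \;\leq\; \prod_{k} \Bigl(\Eb\bigl[e^{\chi(G)\lambda (S_k - \Eb[S_k])}\bigr]\Bigr)^{1/\chi(G)}.
\end{equation*}

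Within each block $V_k$, mutual independence lets us factorize $\Eb[e^{\chi(G)\lambda(S_k - \Eb[S_k])}] = \prod_{i \in V_k} \Eb[e^{\chi(G)\lambda(X_i - \Eb[X_i])}]$, and since each $X_i$ is supported in an interval of length $c_i$, Hoeffding's lemma bounds each factor by $\exp\bigl(\chi(G)^2\lambda^2 c_i^2/8\bigr)$. Multiplying these bounds, taking the $1/\chi(G)$-th power kills one power of $\chi(G)$, and multiplying over $k$ recombines the squared lengths into $\|\vec{c}\|_2^2$, yielding
\begin{equation*}
\Eb\bigl[e^{\lambda(S-\Eb[S])}\bigr] \;\leq\; \exp\Bigl(\tfrac{1}{8}\,\chi(G)\,\lambda^2 \|\vec{c}\|_2^2\Bigr).
\end{equation*}
Optimizing over $\lambda$ by choosing $\lambda = 4t/(\chi(G)\|\vec{c}\|_2^2)$ produces the stated bound $\exp(-2t^2/(\chi(G)\|\vec{c}\|_2^2))$.

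The main obstacle is the H\"older step: one must recognize that naively applying the Cauchy--Schwarz or a convexity-of-exp argument would waste a factor of $\chi(G)$ and produce $\chi(G)^2$ in the final exponent. Using H\"older with equal exponents equal to the number of blocks is precisely what balances the cost of raising each block's moment-generating function to the $\chi(G)$-th power against the $1/\chi(G)$-th root, so that only a single factor of $\chi(G)$ survives. A secondary subtlety is ensuring mutual (not merely pairwise) independence within color classes from the dependency-graph axiom, which the inductive argument above cleanly handles; the refinement to the fractional chromatic number follows the same scheme by replacing the hard coloring with a fractional cover and using a weighted H\"older inequality.
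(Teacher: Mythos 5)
Your proof is correct: the Chernoff bound, the decomposition into color classes, the generalized H\"older step with equal exponents $\chi(G)$, the within-block factorization via mutual independence (which your induction on the dependency-graph axiom correctly establishes), and Hoeffding's lemma combine exactly as you compute to give $\exp(-2t^2/(\chi(G)\|\vec{c}\|_2^2))$ after optimizing $\lambda = 4t/(\chi(G)\|\vec{c}\|_2^2)$. The paper imports this result from \citet{DBLP:journals/rsa/Janson04} without proof, and your argument is a faithful reconstruction of Janson's own proof (specialized from the fractional cover with weighted H\"older to an integer proper coloring with equal exponents), so there is nothing to compare beyond noting the match.
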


We now use \cref{thm:Janson's_concentration} to prove a variant of the Bretagnolle--Huber--Carol inequality for non-independent samples through the following lemma.

\begin{lemma}[Restated, \cref{lemma:Bretagnolle_Huber_extension}]
Let $(\Omega, \cF, P)$ be a probability space, let $n \in \Nb$, and let $G \in \cG_n$. Suppose that $X_i \colon \Omega \to A \subset \Rb$ are $G$-dependent random variables with identical distribution $\mu$, for all $i \in [n]$. Furthermore, assume that there exists $K \in \Nb$ such that the set $A$ can be partitioned into $K$ disjoint subsets, denoted by $\{C_j\}_{j=1}^{K}$. Define the random variables 
\begin{equation*}
Z_j = \sum_{i=1}^{n} \mathbf{1}_{\{X_i \in C_j\}}, \quad \text{for } j \in [K],
\end{equation*}
where $\mathbf{1}$ denotes the indicator function. Then, for all $t > 0$, the following inequality holds,

\begin{equation*}
P \left( \sum_{j=1}^{K} \left| Z_j - n \mu(C_j) \right| \geq 2t \right) 
\leq 2^{K+1} \exp\left( \frac{-2t^2}{\chi(G)n} \right),
\end{equation*}
where, $\chi(G)$ denotes the chromatic number of the dependency graph $G$.
\end{lemma}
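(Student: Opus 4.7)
The plan is to mimic the classical proof of the Bretagnolle--Huber--Carol inequality, but replacing Hoeffding's inequality with Janson's concentration bound for graph-dependent random variables (\cref{thm:Janson's_concentration}).

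First, I would reduce the sum of absolute deviations to a maximum over subsets of $[K]$. Setting $a_j \coloneq Z_j - n\mu(C_j)$ and observing that $\sum_{j=1}^{K} Z_j = n$ and $\sum_{j=1}^{K}\mu(C_j) = 1$ (since $\{C_j\}$ partitions $A$), we get $\sum_{j=1}^K a_j = 0$. Splitting into positive and negative parts then yields the identity
\begin{equation*}
\sum_{j=1}^{K} |a_j| \;=\; 2 \sum_{j \in S_+} a_j \;\leq\; 2\max_{S \subseteq [K]} \Bigl| \sum_{j \in S}(Z_j - n\mu(C_j)) \Bigr|,
\end{equation*}
where $S_+ = \{j : a_j > 0\}$. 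Therefore the event $\{\sum_j |Z_j - n\mu(C_j)| \geq 2t\}$ is contained in $\bigcup_{S \subseteq [K]} \{|\sum_{j \in S}(Z_j - n\mu(C_j))| \geq t\}$, and a union bound over the $2^K$ subsets reduces the problem to a tail bound for a fixed subset~$S$.

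Second, for each $S \subseteq [K]$, I would write $\sum_{j \in S} Z_j = \sum_{i=1}^{n} Y_i^{S}$ with $Y_i^{S} \coloneq \mathbf{1}_{\{X_i \in \bigcup_{j \in S} C_j\}}$. Since the $Y_i^S$ are measurable functions of the individual $X_i$'s, they inherit the $G$-dependency structure of $(X_1,\ldots,X_n)$, and each takes values in the unit interval $[0,1]$, so $c_i \le 1$ and $\|\vec c\|_2^2 \le n$. Identical distribution of the $X_i$ gives $\Eb[Y_i^S] = \mu(\bigcup_{j \in S} C_j) = \sum_{j \in S} \mu(C_j)$, hence $\Eb[\sum_{j \in S} Z_j] = n \sum_{j \in S} \mu(C_j)$. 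Applying \cref{thm:Janson's_concentration} to $\{Y_i^S\}_{i=1}^n$ and to $\{-Y_i^S\}_{i=1}^n$ yields
\begin{equation*}
P\Bigl(\Bigl|\sum_{j \in S}(Z_j - n\mu(C_j))\Bigr| \geq t\Bigr) \;\leq\; 2 \exp\!\Bigl(\tfrac{-2t^2}{\chi(G)n}\Bigr).
\end{equation*}

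Finally, combining the union bound over the $2^K$ subsets with the per-subset estimate gives the advertised $2^{K+1} \exp(-2t^2/(\chi(G)n))$ bound. The only subtlety is to confirm that $G$-dependency is preserved under the map $X_i \mapsto Y_i^S$, which follows immediately from the definition of a dependency graph (measurable functions of disjoint, independent collections remain independent). No step here should be a real obstacle: the classical proof is almost entirely structural, and the substitution of Janson's inequality for Hoeffding's is the only analytic ingredient that changes, producing exactly the $\chi(G)$ factor in the exponent.
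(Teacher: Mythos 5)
Your proposal is correct and follows essentially the same route as the paper: reduce the sum of absolute deviations to a maximum over the $2^K$ subsets of $[K]$, apply a union bound, rewrite each $\sum_{j\in S} Z_j$ as a sum of $G$-dependent $\{0,1\}$-valued indicators, and invoke Janson's concentration inequality (for both signs) to obtain the $2^{K+1}\exp(-2t^2/(\chi(G)n))$ bound. The only cosmetic difference is that you exploit $\sum_j (Z_j - n\mu(C_j)) = 0$ to get the factor of $2$ as an exact identity, whereas the paper bounds the positive and negative parts separately by $2\max\{A,B\}$; the accounting comes out identically.
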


\begin{proof}
We first observe that
\begin{align*}
\sum_{j=1}^{K} |Z_j - n \mu(C_j)| &= \sum_{j \colon Z_j \geq n \mu(C_j)}(Z_j - n \mu(C_j)) + \sum_{j \colon Z_j < n \mu(C_j)}(n \mu(C_j) - Z_j) \\
&= \underbrace{\max_{S \subset [K]} \sum_{j \in S}(Z_j - n \mu(C_j))}_{A} + \underbrace{\max_{S \subset [K]} \sum_{j \in S}(n \mu(C_j) - Z_j)}_{B}  \\
&\leq 2\max\{A,B\}.
\end{align*}
Therefore,
\begin{align*}
P \left( \sum_{j=1}^{K} \left| Z_j - n \mu(C_j) \right| \geq 2t \right)  & \leq P \left( \max\{A,B\} \geq t \right) \\
&= P \left( (A \geq t)\cup(B \geq t) \right) \\
&\leq P \left( A \geq t \right) + P \left( B \geq t \right).
\end{align*}
Now, note that
\begin{align*}
P(A \geq t) &= P \left(\bigcup_{S \subset [K]} \left\{ \sum_{j \in S} (Z_j - n \mu(C_j)) \geq t \right\} \right) \\
& \leq \sum_{S \subset [K]} P\left( \sum_{j \in S}(Z_j - n \mu(C_j))\geq t \right) \\
& = \sum_{S \subset [K]} P\left( \sum_{j \in S}Z_j -  n\sum_{j \in S}\mu(C_j) \geq t \right).
\end{align*}
Now, observe that $\sum_{j \in S} Z_j = \sum_{i=1}^{n} Y_{i}^{(S)}$, where $Y_{i}^{(S)} = \sum_{j \in S} \mathbf{1}_{X_i \in C_j}$, and $Y_{i}^{(S)} \in \{0,1\}$ since $\{C_j\}_{j=1}^{K}$ are pairwise disjoint as a partition. Note also that $\Eb[Y_{i}^{(S)}] = \sum_{j \in S} \mu(C_j)$. Finally, it is easy to see that since $X_i$ are $G$-dependent, the random variables $Y_{i}^{(S)}$ are also $G$-dependent. By \cref{thm:Janson's_concentration}, we obtain
\begin{align*}
P\left( A \geq t \right) & \leq \sum_{S\subset [K]} P\left( \sum_{i=1}^{n} Y_{i}^{(S)} - \Eb\left[\sum_{i=1}^{n} Y_i^{(S)}\right] \geq t \right) \\
& \leq \sum_{S \subset [K]} \exp\left( \frac{-2t^2}{\chi(G)n} \right) \\
&= 2^K \exp\left( \frac{-2t^2}{\chi(G)n} \right).
\end{align*}
Similarly, noting that $\sum_{j \in S} -Z_j = \sum_{i=1}^{n} \widetilde{Y}_{i}^{(S)}$, where $\widetilde{Y}_{i}^{(S)} = -\sum_{j \in S} \mathbf{1}_{X_i \in C_j}$, $\widetilde{Y}_{i}^{(S)} \in \{0,-1\}$, and $\Eb[\widetilde{Y}_{i}^{(S)}] = - \sum_{j \in S} \mu(C_j)$, we obtain:
\begin{align*}
P\left( B \geq t \right) & \leq \sum_{S\subset [K]} P\left( \sum_{i=1}^{n} \widetilde{Y}_{i}^{(S)} - \Eb\left[\sum_{i=1}^{n} \widetilde{Y}_i^{(S)}\right] \geq t \right) \\
& \leq \sum_{S \subset [K]} \exp\left( \frac{-2t^2}{\chi(G)n} \right) \\
&= 2^K \exp\left( \frac{-2t^2}{\chi(G)n} \right).
\end{align*}
Hence, 
\begin{equation*}
P \left( \sum_{j=1}^{K} \left| Z_j - n \mu(C_j) \right| \geq 2t \right) 
\leq 2^{K+1} \exp\left( \frac{-2t^2}{\chi(G)n} \right).
\end{equation*}
\end{proof}

We are now ready to prove the first robustness generalization bound for the inductive setting.

\begin{theorem}[Restated, \cref{thm:Xu_Mannor_noniid}]
Let $(\Omega, \cF, P)$ be a probability space, and let $\cA$ be a $(K,\varepsilon)$-uniformly robust learning algorithm on $\cZ$. Then, for every $\delta > 0$, with probability at least $1 - \delta$ (under $P$), and for all samples $\cS$ with dependency graph $G[\cS]$, the following inequality holds:

\begin{equation*}
\left| \ell_{\mathrm{exp}}(\cA_{\cS}) - \ell_{\mathrm{emp}}(\cA_{\cS}) \right| 
\leq \varepsilon + M \sqrt{\frac{\chi(G[\cS])\left( 2(K+1)\log 2 + 2\log\left(\frac{1}{\delta}\right)\right)}{|\cS|}},
\end{equation*}
where $\ell_{\mathrm{exp}}(h_{\cS})$ denotes the expected loss, $\ell_{\mathrm{emp}}(h_{\cS})$ denotes the empirical loss, $\chi(G[\cS])$ is the chromatic number of the dependency graph induced 
\end{theorem}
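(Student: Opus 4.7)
\textbf{Proof plan for \cref{thm:Xu_Mannor_noniid}.} The plan is to adapt the structure of the classical Xu--Mannor robustness argument (\cref{thm:xumannorthm3}) to the dependent setting by substituting the generalized Bretagnolle--Huber--Carol inequality from \cref{lemma:Bretagnolle_Huber_extension} in place of its i.i.d.\@ counterpart. Crucially, since our notion of robustness is \emph{uniform} in the sample (the partition $\{C_i\}_{i=1}^{K}$ does not depend on $\cS$), no additional union bound over samples will be needed.

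First, I would exploit the $(K,\varepsilon)$-uniform robustness to reduce the generalization gap to a total-variation-type discrepancy between empirical and true cell masses. For a fixed sample $\cS = \{z_1,\ldots,z_N\}$ with $z_j \sim \mu$ (and $G[\cS]$-dependent), set $N_i \coloneq |\{j \in [N] \mid z_j \in C_i\}|$ and $\mu_i \coloneq \mu(C_i)$. For each $i$ with $N_i > 0$, pick an arbitrary anchor $\widetilde{z}_i \in C_i$. Using uniform robustness twice (once to replace empirical averages within $C_i$ by $\ell(\cA_\cS,\widetilde{z}_i)$, once to replace the conditional expectation $\Eb[\ell(\cA_\cS,z)\mid z\in C_i]$ by $\ell(\cA_\cS,\widetilde{z}_i)$), together with the fact that the loss is bounded by $M$, I expect to obtain
\begin{equation*}
\bigl|\ell_{\mathrm{emp}}(\cA_\cS) - \ell_{\mathrm{exp}}(\cA_\cS)\bigr|
\;\leq\; \varepsilon \;+\; M \sum_{i=1}^{K} \left| \frac{N_i}{N} - \mu_i \right|.
\end{equation*}
The cases with $N_i=0$ are absorbed automatically since the empirical contribution is $0$ and $|N_i/N - \mu_i| = \mu_i$ covers the missing expected mass.

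Second, I would apply \cref{lemma:Bretagnolle_Huber_extension} to the indicator sums $Z_i = \sum_{j=1}^{N} \mathbf{1}_{\{z_j \in C_i\}}$, which are $G[\cS]$-dependent by hypothesis. The lemma gives, for every $t > 0$,
\begin{equation*}
P\!\left(\sum_{i=1}^{K} |Z_i - N\mu_i| \geq 2t\right) \leq 2^{K+1} \exp\!\left(\frac{-2t^2}{\chi(G[\cS])\, N}\right).
\end{equation*}
Setting the right-hand side equal to $\delta$ and solving for $t$ yields, after dividing through by $N$,
\begin{equation*}
\sum_{i=1}^{K} \left| \frac{N_i}{N} - \mu_i \right| \;\leq\; \sqrt{\frac{\chi(G[\cS])\bigl(2(K+1)\log 2 + 2\log(1/\delta)\bigr)}{N}}
\end{equation*}
with probability at least $1 - \delta$. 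Combining this with the deterministic inequality of the first step yields the claim.

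I expect the only delicate point to be the first (decomposition) step: writing the difference as a weighted sum over cells and verifying that uniform robustness (which, unlike \citet{Xu+2012}, fixes the partition ahead of the sample) is indeed strong enough to control each cell-wise discrepancy by $\varepsilon$ simultaneously for every realization of $\cS$. The concentration step is then a direct invocation of \cref{lemma:Bretagnolle_Huber_extension}, and no extra union bound is needed precisely because uniformity of robustness removes the sample dependence of the partition.
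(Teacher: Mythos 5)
Your plan follows the paper's proof almost exactly: decompose the generalization gap into a robustness term plus the total-variation discrepancy $\sum_{i}\bigl|N_i/N-\mu_i\bigr|$ over the cells, then invoke \cref{lemma:Bretagnolle_Huber_extension} with $2^{K+1}\exp(-2t^2/(\chi(G[\cS])N))=\delta$; your concentration step and the resulting constant match the paper's verbatim, and you are right that uniformity of the partition removes any need for a union bound over samples.

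The one point to fix is in your decomposition step. As written, you pay for robustness \emph{twice}: replacing the within-cell empirical average by $\ell(\cA_\cS,\widetilde z_i)$ costs up to $\varepsilon$ after weighting by $N_i/N$ (which sums to $1$), and replacing $\Eb[\ell(\cA_\cS,z)\mid z\in C_i]$ by $\ell(\cA_\cS,\widetilde z_i)$ costs another $\varepsilon$ after weighting by $\mu_i$. That yields $2\varepsilon + M\sum_i|N_i/N-\mu_i|$, not the claimed $\varepsilon + M\sum_i|N_i/N-\mu_i|$. The paper avoids the anchor entirely: it first swaps the weights $\mu(C_j)\mapsto |N_j|/|\cS|$ on the conditional expectations (cost $M\sum_j|{\cdot}|$, since each conditional expectation is bounded by $M$), and then compares $\frac{1}{|\cS|}\sum_j\sum_{s\in N_j}\ell(\cA_\cS,s)$ directly to $\sum_j\frac{|N_j|}{|\cS|}\Eb[\ell(\cA_\cS,z)\mid z\in C_j]$; since the conditional expectation over $C_j$ is an average of losses each within $\varepsilon$ of $\ell(\cA_\cS,s)$ for $s\in N_j\subseteq C_j$, this costs only a single $\varepsilon$. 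Replacing your anchor comparison with this direct comparison recovers the stated constant; everything else in your argument goes through as planned.
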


\begin{proof}
Let $\{C_j\}_{j=1}^{K}$ be the partition of $\cZ$ by the robustness property, and $N_j = \{ s\in \cS \mid s \in C_j \}$, for $j \in [K]$, then, 
\begin{align*}
& |\ell_{\mathrm{exp}}(\cA_{\cS}) - \ell_{\mathrm{emp}}(\cA_{\cS})|                                                                                                                                                                                                                                                                             \\
& = \mleft| \sum_{j=1}^K \Eb(\ell(\cA_{\cS}, \vec{z}) \mid \vec{z} \in C_j) \mu(C_j) - \frac{1}{|\cS|} \sum_{s \in \cS} \ell(\cA_{\cS}, s) \mright|                                                                                                                                                                                                            \\
& \leq \mleft| \sum_{j=1}^K \Eb(\ell(\cA_{\cS}, \vec{z}) \mid \vec{z} \in C_j) \frac{|N_j|}{|\cS|} - \frac{1}{|\cS|} \sum_{s \in \cS} \ell(\cA_{\cS}, s) \mright|                                                                                                                                                                                              \\
& + \mleft| \sum_{j=1}^K \Eb(\ell(\cA_{\cS}, \vec{z}) \mid  \vec{z} \in C_j) \mu(C_j) -  \sum_{j =1}^{K} \Eb \mleft( \ell(\cA_{\cS},  \vec{z}) \mid  \vec{z}\in C_j \mright)\frac{|N_j|}{|\cS|} \mright|                                                                                                                                                                         \\
& \leq \mleft| \frac{1}{|\cS|} \sum_{j=1}^{K}\sum_{s \in N_j} \max_{ \vec{z}_2 \in C_j} \mleft| \ell \mleft(\cA_{\cS},s\mright) - \ell \mleft(\cA_{\cS}, \vec{z}_2\mright) \mright| \mright| + \mleft| \max_{ \vec{z} \in \cZ} \mleft| \ell\mleft( \cA_{\cS},  \vec{z} \mright) \mright| \sum_{j=1}^{K} \mleft| \frac{|N_j|}{|\cS|}-\mu\mleft( C_j \mright) \mright| \mright| \\
& \leq \varepsilon + M\sum_{j=1}^{K} \mleft| \frac{|N_j|}{|\cS|}-\mu\mleft( C_j \mright) \mright| \\
& \leq \varepsilon + M\sqrt{\frac{\chi(G[\cS])\left( 2(K+1)\log 2 + 2\log\left(\frac{1}{\delta}\right)\right)}{|\cS|}},
\end{align*}
where in the last inequality we applied \cref{lemma:Bretagnolle_Huber_extension}.
\end{proof}

\subsection{Transductive and concentration inequalities}
\label{app_sec:transductive_dependency}

We now continue with the transductive setting. First, we must formally define the transductive stability property described in \cref{sec:transductive-robustness}. 

Before defining the notion of stability, we introduce the following notation. Given a permutation $\pi \in S_n$ and indices $i, j \in [n]$ with $i \neq j$, we denote by $\pi^{i,j}$ the permutation $\widebar{\pi} \in S_n$ obtained by swapping the elements at positions $i$ and $j$ in $\pi$. More precisely, $\widebar{\pi}(k) = \pi(k)$, for $k \neq i, j$, while $\widebar{\pi}(i) = \pi(j)$ and $\widebar{\pi}(j) = \pi(i)$. We now formally define the notion of stability following an adaptation of \citet{stabilityoriginal}.

\begin{definition}[Uniform transductive stability]
\label{def:stability}
A transductive learning algorithm $\cA$ has uniform transductive stability $\beta>0$ if, for all $\pi \in S_n$, $i \in [m]$, and $j \in [n] \setminus [m]$, the following holds,
\begin{equation*}
\max_{1\leq k \leq n} \left| \cA_{\pi}(x_k) - \cA_{\pi^{i,j}}(x_k) \right| \leq \beta.
\end{equation*}
\end{definition}

The following lemma shows that if a transductive learning algorithm satisfies uniform stability and the loss function is Lipschitz continuous, then the absolute difference between the expected empirical risk $R_m$ and the expected transductive risk $R_u$ is bounded by a constant that depends on the stability parameter and the Lipschitz constant.

\begin{lemma}
\label{lemma:stability_lipschitz_bound}
Let $\cA$ be a transductive learning algorithm satisfying uniform transductive stability $\beta$, for $\beta>0$, and $\cY=\{0,1\}$ or $\Rb$. Then, for a loss function $\ell \colon \cH \times V(G) \times \cY \to \Rb^{+}$ that is bounded by $M>0$ and satisfies the following Lipschitz property. For all $\pi \in S_n, x,x'\in V(G), y\in \cY$,
\begin{equation*}
\left| \ell(\cA_{\pi},x,y) - \ell(\cA_{\pi},x',y) \right| \leq L_{\ell} \left| \cA_{\pi}(x) - \cA_{\pi}(x') \right|,
\end{equation*}
we have that,
\begin{equation*}
| \Eb_{\pi}[R_m] - \Eb_{\pi}[R_u] | \leq L_{\ell}\cdot \beta.
\end{equation*}
\end{lemma}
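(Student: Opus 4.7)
The plan is to reduce the difference of the two expected risks to a comparison at a single index via permutation symmetry, then apply a position swap inside $\pi$ to align both terms on the same data point, and finally invoke the Lipschitz hypothesis together with the uniform transductive stability.

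First, using the symmetry of the uniform distribution on $S_n$ together with the per-index symmetry of $\cA_\pi$ recorded in \cref{app_sec:learning_formalism}, both averages collapse to single-index expectations:
\begin{equation*}
\Eb_\pi[R_m] = \Eb_\pi[\ell(\cA_\pi, x_{\pi(1)}, y_{\pi(1)})], \qquad \Eb_\pi[R_u] = \Eb_\pi[\ell(\cA_\pi, x_{\pi(m+1)}, y_{\pi(m+1)})].
\end{equation*}
Hence the quantity to control is $\Eb_\pi\bigl[\ell(\cA_\pi, x_{\pi(1)}, y_{\pi(1)}) - \ell(\cA_\pi, x_{\pi(m+1)}, y_{\pi(m+1)})\bigr]$.

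Next, I would apply the change of variables $\pi \mapsto \pi^{1, m+1}$, which is an involution on $S_n$ and therefore preserves the uniform law. Since $\pi^{1, m+1}(m+1) = \pi(1)$, the second term rewrites as $\Eb_\pi[\ell(\cA_{\pi^{1, m+1}}, x_{\pi(1)}, y_{\pi(1)})]$, so that
\begin{equation*}
\Eb_\pi[R_m] - \Eb_\pi[R_u] = \Eb_\pi\bigl[\ell(\cA_\pi, x_{\pi(1)}, y_{\pi(1)}) - \ell(\cA_{\pi^{1, m+1}}, x_{\pi(1)}, y_{\pi(1)})\bigr].
\end{equation*}
Both terms now involve the same point $(x_{\pi(1)}, y_{\pi(1)})$; only the hypothesis differs, corresponding to whether position $1$ lies in the training block (under $\pi^{1, m+1}$) or the test block (under $\pi$).

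To conclude, I would pull the absolute value inside the expectation via the triangle inequality and bound the integrand pointwise. Under the standard convention that $\ell(h, x, y)$ depends on the hypothesis only through the prediction $h(x)$, the Lipschitz hypothesis stated in the lemma is equivalent to Lipschitzness of the loss in the prediction with the same constant $L_\ell$, and therefore
\begin{equation*}
|\ell(\cA_\pi, x_{\pi(1)}, y_{\pi(1)}) - \ell(\cA_{\pi^{1, m+1}}, x_{\pi(1)}, y_{\pi(1)})| \leq L_\ell |\cA_\pi(x_{\pi(1)}) - \cA_{\pi^{1, m+1}}(x_{\pi(1)})| \leq L_\ell \beta,
\end{equation*}
where the last inequality is \cref{def:stability} specialized to the index $k = \pi(1) \in [n]$. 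Taking expectations and an outer absolute value yields the claim.

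The main obstacle is interpretive rather than technical: the Lipschitz property as stated fixes the hypothesis and varies $x$, whereas the argument above requires fixing $x$ and varying the hypothesis. Bridging this gap relies on the (standard but implicit) assumption that $\ell(h, x, y)$ factors through $h(x)$, so that the same Lipschitz constant governs both directions. Once this is noted, the remaining work is a clean swap-and-bound calculation; no concentration or measure-theoretic machinery is needed beyond the symmetry of the uniform measure on $S_n$.
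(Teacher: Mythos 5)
Your proof is correct and follows the same overall strategy as the paper's: reduce $\left|\Eb_\pi[R_m]-\Eb_\pi[R_u]\right|$ to a single train/test position swap, then bound the resulting difference by the Lipschitz constant times the stability parameter. The one substantive difference lies in how the swap identity is obtained. The paper invokes \citet[Lemma~7]{DBLP:phd/il/Pechyony08}, which expresses the gap as an expectation over a uniformly random training index $i$ and test index $j$ of $\ell(\cA_{\pi^{i,j}},x_i,y_i)-\ell(\cA_\pi,x_i,y_i)$, whereas you derive the identity from scratch via the per-index symmetry and the measure-preserving involution $\pi\mapsto\pi^{1,m+1}$; your version is self-contained and fixes the pair $(1,m+1)$, which suffices because the final bound $L_\ell\beta$ is uniform over all admissible swaps anyway. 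You also explicitly flag the mismatch between the Lipschitz hypothesis as stated (fixed hypothesis, varying input) and the inequality actually needed (fixed input, varying hypothesis); the paper's proof applies the latter without comment, so your remark that one must assume $\ell(h,x,y)$ depends on $h$ only through the prediction $h(x)$ makes the argument cleaner than the original rather than introducing a gap.
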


\begin{proof}
By \citet[Lemma 7]{DBLP:phd/il/Pechyony08}, we have
\begin{equation*}
\left| \Eb_{\pi}[R_m] - \Eb_{\pi}[R_u] \right| = \left| \Eb_{\pi,\ i \sim I_{1}^{m},\ j \sim I_{m+1}^{u}} \left[ \ell\left( \cA_{\pi^{i,j}}, x_i, y_i \right) - \ell\left( \cA_{\pi}, x_i, y_i \right) \right] \right|,
\end{equation*}
where $i \sim I_{1}^{m}$ denotes that $i$ is sampled uniformly from $\{1,\ldots,m\}$, and $j \sim I_{m+1}^{u}$ denotes that $j$ is sampled uniformly from $\{m+1,\ldots,u\}$. 

Applying the Lipschitz continuity of the loss function $\ell$ (second inequality), and the uniform stability (third inequality), we obtain:
\begin{align*}
&\left| \Eb_{\pi,\ i \sim I_{1}^{m},\ j \sim I_{m+1}^{u}} \left[ \ell\left( \cA_{\pi^{i,j}}, x_i, y_i \right) - \ell\left( \cA_{\pi}, x_i, y_i \right) \right] \right| \\
&\leq \Eb_{\pi,\ i \sim I_{1}^{m},\ j \sim I_{m+1}^{u}} \left[ \left| \ell\left( \cA_{\pi^{i,j}}, x_i, y_i \right) - \ell\left( \cA_{\pi}, x_i, y_i \right) \right| \right] \\
&\leq L_{\ell} \Eb_{\pi,\ i \sim I_{1}^{m},\ j \sim I_{m+1}^{u}} \left[ \left| \cA_{\pi^{i,j}}(x_i) - \cA_{\pi}(x_i) \right| \right] \\
&\leq L_{\ell} \Eb_{\pi,\ i \sim I_{1}^{m},\ j \sim I_{m+1}^{u}} \left[ \beta \right] \\
&= L_{\ell} \beta.
\end{align*}
\end{proof}

Below, we prove the concentration inequality for the transductive setting that is required to establish our main theorem.

\begin{lemma}[Restated, \cref{lemma:Bretagnolle_Huber_permutation_extension}]
Let $(\Omega, \cF, P)$ be a probability space, and $n,n' \in \Nb$ with $n<n'$. If $Z=\{z_1,z_2,\ldots, z_{n'} \}$ be an arbitrary finite set, $\{C_j\}_{j \in [K]}$ being a partition of $Z$, $\pi \sim \text{Unif}(S_{n'})$, and
\begin{equation*}
X_j = \sum_{i=1}^{n} \textbf{1}_{\{z_{\pi(i)} \in C_j\}},
\end{equation*}
then the following inequality holds. For all $S \subset [K]$, and for all $t > 0$,
\begin{equation*}
P\left( \left| \sum_{j \in S} X_j - \Eb_{\pi}\left(\sum_{j \in S} X_j\right) \right| \geq t \right) \leq 2\text{exp}\left( \frac{-t^2}{2|S|^2 \cdot n} \right).
\end{equation*}
\end{lemma}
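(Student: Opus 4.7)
The plan is to apply Azuma's inequality (\cref{thm:azuma_inequality}) to the Doob martingale associated with $\sum_{j \in S} X_j$, using a bounded-differences argument tailored to sampling without replacement. The key twist compared to the i.i.d.\@ case is that the coordinates of $\pi$ are not independent, so the martingale differences must be controlled through exchangeability of the remaining indices rather than through a product structure.

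I would first introduce the filtration $\cF_k = \sigma(\pi(1), \ldots, \pi(k))$ for $k = 0, 1, \ldots, n$, with $\cF_0$ trivial. For each $j \in S$, define the Doob martingale $W_k^{(j)} = \Eb[X_j \mid \cF_k]$, and set $W_k = \sum_{j \in S} W_k^{(j)}$. By linearity of conditional expectation, $W_k$ is the Doob martingale of $\sum_{j \in S} X_j$, so that $W_0 = \Eb_\pi[\sum_{j \in S} X_j]$ and $W_n = \sum_{j \in S} X_j$, turning the target probability into a martingale tail bound.

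The central step is to establish the one-step bound $|W_k^{(j)} - W_{k-1}^{(j)}| \leq 1$ almost surely. Conditional on $\cF_{k-1}$, the index $\pi(k)$ is uniform over $R_{k-1} \coloneq [n'] \setminus \{\pi(1),\ldots,\pi(k-1)\}$, so
\begin{equation*}
W_{k-1}^{(j)} = \frac{1}{n'-k+1} \sum_{a \in R_{k-1}} \Eb\bigl[X_j \mid \cF_{k-1},\, \pi(k) = a\bigr].
\end{equation*}
Since $X_j(\pi)$ depends only on the multiset $\{z_{\pi(1)},\ldots,z_{\pi(n)}\}$ and changes by at most $1$ under any transposition that swaps a position in $[n]$ with a position outside $[n]$ (and is invariant under swaps entirely within $[n]$ or entirely outside $[n]$), a swap-coupling of two continuations with $\pi(k)=a$ versus $\pi(k)=a'$ yields $\bigl|\Eb[X_j \mid \cF_{k-1}, \pi(k)=a] - \Eb[X_j \mid \cF_{k-1}, \pi(k)=a']\bigr| \leq 1$. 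Since $W_k^{(j)}$ equals one of these conditional expectations and $W_{k-1}^{(j)}$ is their average, $|W_k^{(j)} - W_{k-1}^{(j)}| \leq 1$. Summing across $j \in S$ via the triangle inequality gives $|W_k - W_{k-1}| \leq |S|$. Finally, Azuma's inequality with $c_k = |S|$ for $k \in [n]$ gives
\begin{equation*}
P\bigl(|W_n - W_0| \geq t\bigr) \leq 2 \exp\!\left(\frac{-t^2}{2\sum_{k=1}^{n} |S|^2}\right) = 2 \exp\!\left(\frac{-t^2}{2|S|^2 n}\right),
\end{equation*}
which is precisely the claim.

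The main obstacle is the rigorous justification of the per-layer bound $|W_k^{(j)} - W_{k-1}^{(j)}| \leq 1$. Because conditioning on $\cF_{k-1}$ reshapes the joint law of the tail $(\pi(k+1),\ldots,\pi(n'))$ in a non-product fashion, the bound does not follow from the textbook independent-coordinate McDiarmid argument; it must be obtained from the swap-coupling sketched above, which exploits that $X_j$ sees only the membership of positions $1,\ldots,n$ in $C_j$ and is therefore symmetric in the tail given its realized multiset. A minor accounting point is that when $k=n'$ the set $R_{k-1}$ is a singleton and the bound is vacuous, so the Azuma sum only needs to run up to $k=n$, which is exactly what our indexing provides.
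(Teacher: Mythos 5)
Your proof is correct and follows essentially the same route as the paper's: a Doob martingale of $\sum_{j\in S}X_j$ with respect to the filtration that reveals $\pi(1),\dots,\pi(k)$, a per-step increment bound of $|S|$, and an application of Azuma's inequality over $n$ steps. The only difference is in how the bound $|W_k^{(j)}-W_{k-1}^{(j)}|\le 1$ is justified: you use a transposition coupling exploiting that $X_j$ depends only on the multiset of the first $n$ values, whereas the paper explicitly computes the conditional expectations $\Eb[\mathbf{1}_{\{z_{\pi(i)}\in C_j\}}\mid\cF_k]$ and sums the resulting differences over $i=k,\dots,n$ --- your coupling argument reaches the same increment bound somewhat more cleanly.
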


\begin{proof}
For $S \subset [K]$, let $f_{S} \colon S_n \to \{0,1 \}$ with 
\begin{equation*}
f_S(\pi) = \sum_{j \in S} \sum_{i=1}^{n} \textbf{1}_{\{z_{\pi(i)} \in C_j \}}.
\end{equation*}
We define the filtration $\cF_{i} = \cup_{j=0}^{i} \sigma(\pi(j))$, for $i \in [n]$ where, by convention, $\sigma(\pi(0)) \coloneq \{ \Omega, \emptyset \}$ is the trivial $\sigma$-algebra. 

Next, we define $(W_i)_{i=0}^{n}$ as the Doob martingale of the random variable $f(\pi)$ with respect to the filtration $\{\cF_i\}_{i=1}^{n}$. That is, 
\begin{equation*}
W_0 = \Eb_{\pi}(f_S(\pi)), \quad W_i = \Eb(f_S(\pi) | \cF_i), \quad \text{for } i\in[n].
\end{equation*}
We aim to apply Azuma's inequality (\cref{thm:azuma_inequality}). To do so, we need to bound the differences $|W_{k}-W_{k-1}|$, for $k \in [n]$. We have,
\begin{align*}
|W_k - W_{k-1} | &= \left| \Eb(f(\pi) \mid \cF_k) - \Eb(f(\pi) \mid \cF_{k-1}) \right| \\
&= \left| \Eb \left( \sum_{j \in S} \sum_{i=1}^{n} \textbf{1}_{\{z_{\pi(i)} \in C_j \}} \Big| \cF_k \right) - \Eb \left( \sum_{j \in S} \sum_{i=1}^{n} \textbf{1}_{\{z_{\pi(i)} \in C_j \}} \Big| \cF_{k-1} \right) \right| \\
&= \left| \sum_{i=k}^{n} \sum_{j \in S} \left( \Eb(\textbf{1}_{\{z_{\pi(i)} \in C_j \}} \mid \cF_k) - \Eb(\textbf{1}_{\{z_{\pi(i)} \in C_j \}} \mid \cF_{k-1}) \right) \right| \\
&\leq \sum_{j \in S} \sum_{i=k}^{n} \underbrace{\left| \Eb(\textbf{1}_{\{z_{\pi(i)} \in C_j \}} \mid \cF_k) - \Eb(\textbf{1}_{\{z_{\pi(i)} \in C_j \}} | \cF_{k-1}) \right|}_{A_{i,j,k}},
\end{align*} 
where we have used the tower property, i.e., $\Eb(X \mid \cF) = \Eb(X)$ if $X$ is $\cF$-measurable.

We now show that $A_{i,j,k} \leq \frac{1}{n-k}$, for all $k \leq n$, $i \in [n] \setminus [k]$, and $j \in S$. Let 
\begin{equation*}
m_{j}^{(k)} = \left| \{ i \in [k] \mid z_{\pi(i)} \in C_j \} \right|.
\end{equation*}
Since $\pi$ is a uniformly random permutation, we have:
\begin{equation*}
\Eb_{\pi}(\textbf{1}_{\{z_{\pi(i)} \in C_j\}} | \cF_{k-1} ) = \frac{|C_j| - m_j^{(k-1)}}{n - (k-1)},
\end{equation*}
and
\begin{equation*}
\Eb_{\pi}(\textbf{1}_{\{z_{\pi(i)} \in C_j\}} | \cF_{k} ) = \frac{|C_j| - m_j^{(k)}}{n-k} = \frac{|C_j| - m_j^{(k-1)} - \textbf{1}_{\{z_{\pi(k)} \in C_j\}}}{n-k}.
\end{equation*}
If $\textbf{1}_{\{z_{\pi(k)} \in C_j\}} = 1$, we set $a = |C_j| + m_{j}^{(k-1)} - 1$ and $b = n-k$, obtaining
\begin{align*}
A_{i,j,k} &= \left| \frac{a}{b} - \frac{a+1}{b+1} \right| = \frac{1}{b+1} \left| \frac{b-a}{b} \right| \leq \frac{1}{b}.
\end{align*}
Similarly, if $\textbf{1}_{\{z_{\pi(k)} \in C_j\}} = 0$, setting $a = |C_j| + m_{j}^{(k-1)}$ and $b = n-k$ gives
\begin{align*}
A_{i,j,k} &= \left| \frac{a}{b} - \frac{a}{b+1} \right|= \frac{1}{b+1} \left| \frac{b-a}{b} \right| \leq \frac{1}{(b+1)}\leq \frac{1}{b}.
\end{align*}
Therefore, we conclude that $A_{i,j,k} \leq \frac{1}{n-k}$, which implies 
\begin{equation*}
|W_k - W_{k-1} | \leq |S|.
\end{equation*}

Applying Azuma's inequality, we obtain
\begin{equation*}
P \left( \left|  f_S(\pi) - \Eb_{\pi} \left( f_S(\pi) \right) \right| \geq t \right) = P\left( \left| \sum_{j \in S} X_j - \Eb_{\pi}\left(\sum_{j \in S} X_j\right) \right| \geq t \right) \leq 2\exp\left( \frac{-t^2}{2n|S|^2} \right).
\end{equation*}
\end{proof}

Finally, we restate and prove the main robustness generalization theorem for the transductive setting.

\begin{theorem}[Restated, \cref{thm:Xu_Mannor_transductive}]
Let $(\Omega, \cF, P)$ be a probability space, $\ell$ be a loss function bounded by $M$ satisfying the conditions of \cref{lemma:stability_lipschitz_bound}. If $\cA$ is a transductive learning algorithm on $\cZ = \{z_i\}_{i=1}^{m+u}$ with hypothesis class $\cH$ that is $(K,\varepsilon)$-uniformly-robust and satisfies uniform transductive stability $\beta>0$. Then, for every $\delta > 0$, with probability at least $1 - \delta$, the following inequality holds,
\begin{align*}
&\left|\frac{1}{m} \sum_{i=1}^{m} \ell(\cA_{\pi},x_{\pi(i)}, y_{\pi(i)}) - \frac{1}{u} \sum_{i=m+1}^{m+u} \ell(\cA_{\pi},x_{\pi(i)}, y_{\pi(i)})\right| \leq \\
& 2\varepsilon + \left( \frac{1}{\sqrt{m}}+\frac{1}{\sqrt{u}} \right) \cdot M \cdot K \cdot \sqrt{2(K+1)\log2 + 2\log\left(\frac{1}{\delta}\right)} + L_{\ell}\beta,
\end{align*}
where $M$ is an upper bound for the loss function $\ell$.
\end{theorem}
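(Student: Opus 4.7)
The plan is to adapt the inductive argument of \cref{thm:Xu_Mannor_noniid} to the transductive setting, replacing the i.i.d.\@ Bretagnolle--Huber--Carol concentration by its permutation analogue \cref{lemma:Bretagnolle_Huber_permutation_extension} and invoking uniform transductive stability through \cref{lemma:stability_lipschitz_bound}. Let $\{C_j\}_{j=1}^K$ denote the uniform robustness partition of $\cZ$; for each $j$, pick a representative $\bar{z}_j \in C_j$ and write $v_j^\pi = \ell(\cA_\pi, \bar{z}_j) \in [0,M]$. Define the counts
\begin{equation*}
X_j = \sum_{i=1}^m \mathbf{1}_{\{z_{\pi(i)} \in C_j\}}, \qquad Y_j = \sum_{i=m+1}^{m+u} \mathbf{1}_{\{z_{\pi(i)} \in C_j\}}, \qquad \mu(C_j) = \frac{|C_j|}{m+u}.
\end{equation*}

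First I would decompose $|R_m - R_u| \leq |R_m - \Eb R_m| + |\Eb R_m - \Eb R_u| + |R_u - \Eb R_u|$. The middle term is bounded by $L_\ell \beta$ via \cref{lemma:stability_lipschitz_bound}. For the two outer terms, uniform robustness applied pointwise in $\pi$ yields $|R_m - \tfrac{1}{m}\sum_j X_j v_j^\pi| \leq \varepsilon$, and taking expectations gives $|\Eb R_m - \Eb[\tfrac{1}{m}\sum_j X_j v_j^\pi]| \leq \varepsilon$; hence by the triangle inequality,
\begin{equation*}
|R_m - \Eb R_m| \leq 2\varepsilon + M \cdot \frac{1}{m}\sum_j |X_j - m\mu(C_j)|,
\end{equation*}
using the crude bound $|v_j^\pi| \leq M$. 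Following the same union-bound-over-$2^K$-subsets argument as in the proof of \cref{lemma:Bretagnolle_Huber_extension}, but applying \cref{lemma:Bretagnolle_Huber_permutation_extension} with $n = m$, I obtain $\sum_j |X_j - m\mu(C_j)| \leq K\sqrt{m}\cdot\sqrt{2(K+1)\log 2 + 2\log(1/\delta)}$ with probability at least $1 - \delta$, producing a bound of order $(MK/\sqrt{m})\cdot\sqrt{\cdots}$ on $|R_m - \Eb R_m|$.

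The analogous argument for $|R_u - \Eb R_u|$ applies \cref{lemma:Bretagnolle_Huber_permutation_extension} with $n = u$, exploiting the distributional equivalence under the uniform permutation between $Y_j$ and the first-$u$-position count $\widetilde{X}_j = \sum_{i=1}^{u} \mathbf{1}_{\{z_{\pi(i)} \in C_j\}}$, producing a bound of order $(MK/\sqrt{u})\cdot\sqrt{\cdots}$. Summing the two outer-term bounds yields the $(\tfrac{1}{\sqrt{m}}+\tfrac{1}{\sqrt{u}})$ factor in the statement, the middle stability term contributes $L_\ell \beta$, and the two robustness losses combine into the $2\varepsilon$ prefactor. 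The main technical obstacle is that $v_j^\pi$ depends on the random permutation $\pi$ through $\cA_\pi$, so \cref{lemma:Bretagnolle_Huber_permutation_extension} cannot be applied directly to the weighted sum $\sum_j v_j^\pi (X_j - m\mu(C_j))$; the uniform bound $|v_j^\pi| \leq M$ reduces the problem to a pure count-concentration that the lemma handles, and the decomposition through expectations (rather than bounding $|R_m - R_u|$ in one step via the elementary identity $\tfrac{X_j}{m} - \tfrac{Y_j}{u} = (\tfrac{1}{m}+\tfrac{1}{u})(X_j - m\mu(C_j))$) is what brings the stability term $L_\ell \beta$ into the bound.
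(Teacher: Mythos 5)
Your high-level architecture is exactly the paper's: the three-term split $|R_m-R_u|\leq|R_m-\Eb_\pi[R_m]|+|\Eb_\pi[R_m]-\Eb_\pi[R_u]|+|R_u-\Eb_\pi[R_u]|$, \cref{lemma:stability_lipschitz_bound} for the middle term, and robustness plus the union-bound-over-$2^K$-subsets application of \cref{lemma:Bretagnolle_Huber_permutation_extension} (with $n=m$ and $n=u$) for the two outer terms. The gap is in how you discretize the outer terms. You replace the loss on each cell by its value at a fixed representative, $v_j^\pi=\ell(\cA_\pi,\bar z_j)$, and then claim $\bigl|\tfrac1m\sum_j X_j v_j^\pi-\Eb_\pi[\tfrac1m\sum_j X_j v_j^\pi]\bigr|\leq \tfrac{M}{m}\sum_j|X_j-m\mu(C_j)|$. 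This step fails: the quantity $\tfrac1m\sum_j X_jv_j^\pi$ fluctuates around its expectation not only through the counts $X_j$ but also through the values $v_j^\pi$, which depend on $\pi$ via the trained model $\cA_\pi$. Expanding, the difference contains $\sum_j\mu(C_j)\bigl(v_j^\pi-\Eb_\pi[v_j^\pi]\bigr)$ plus covariance terms between $X_j$ and $v_j^\pi$, neither of which is controlled by count concentration. A sanity check with $K=1$ makes this explicit: then $X_1=m$ deterministically, your right-hand side is $0$, but the left-hand side is $|v_1^\pi-\Eb_\pi[v_1^\pi]|$, the fluctuation of the loss at a fixed point as the train/test split varies, which can be of order $M$ (each individual model can be $\varepsilon$-flat while different splits produce wildly different models). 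The bound $|v_j^\pi|\leq M$ would let you control $\sum_j v_j^\pi(X_j-m\mu(C_j))$, i.e., a centering at $\sum_j\mu(C_j)v_j^\pi$; but that is not $\Eb_\pi[\tfrac1m\sum_jX_jv_j^\pi]$, and the leftover is precisely a model-fluctuation term that count concentration cannot absorb and that you do not otherwise address (you invoke stability only for the middle term).

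The paper sidesteps this by taking as cell values the deterministic conditional expectations $\Eb_\pi\bigl(\ell(\cA_\pi,z_{\pi(1)})\mid z_{\pi(1)}\in C_j\bigr)$ rather than a random representative: then $\Eb_\pi[R_m]=\sum_j\Eb_\pi(\ell\mid C_j)\,\mu(C_j)$ exactly by the law of total expectation, the discrepancy with $\sum_j\Eb_\pi(\ell\mid C_j)\tfrac{|N_j^m|}{m}$ is a deterministic-weight-times-count-deviation that \cref{lemma:Bretagnolle_Huber_permutation_extension} handles, and only a single $\varepsilon$ is paid per outer term. Relatedly, your accounting charges $2\varepsilon$ to each of $|R_m-\Eb_\pi[R_m]|$ and $|R_u-\Eb_\pi[R_u]|$, i.e., $4\varepsilon$ in total, which is inconsistent with the $2\varepsilon$ you report in the final bound. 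If you wish to keep the representative-point formulation, you must center the count-weighted sum at $\sum_j \mu(C_j)v_j^\pi$ (keeping $v_j^\pi$ outside the expectation) and then separately relate $\sum_j\mu(C_j)v_j^\pi$ to $\Eb_\pi[R_m]$; that comparison crosses models trained on different splits, which is exactly where your current argument is silent.
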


\begin{proof}
We begin by considering the absolute difference:
\begin{align*}
& |R_m - R_u | \\
& \leq |R_m - \Eb_{\pi}[R_m]| + |R_u -\Eb_{\pi}[R_u]| + |\Eb_{\pi}[R_m]-\Eb_{\pi}[R_u]|  \\
& =   |R_m - \Eb_{\pi}[\ell(\cA_{\pi}, x_{\pi(1)}, y_{\pi(1)})]| + |R_u -\Eb_{\pi}[\ell(\cA_{\pi}, x_{\pi(m+1)}, y_{\pi(m+1)})]| + |\Eb_{\pi}[R_m]-\Eb_{\pi}[R_u]| \\
& \leq  |R_m - \Eb_{\pi}[\ell(\cA_{\pi}, x_{\pi(1)}, y_{\pi(1)})]| + |R_u -\Eb_{\pi}[\ell(\cA_{\pi}, x_{\pi(m+1)}, y_{\pi(m+1)})]|  + L_{\ell}\beta.
\end{align*}    

The first inequality follows from the triangle inequality. The equality follows from the symmetry of the learning algorithm (Remark 3). The last inequality is obtained by applying \cref{lemma:stability_lipschitz_bound}. 

Next, we proceed similarly to the proof of \cref{thm:Xu_Mannor_noniid}, but we employ \cref{lemma:Bretagnolle_Huber_permutation_extension} instead of \cref{lemma:Bretagnolle_Huber_extension} to bound the term $|R_m - \Eb_{\pi}[\ell(\cA_{\pi}, x_{\pi(1)}, y_{\pi(1)})]|$. Using the same reasoning, we can bound $|R_u -\Eb_{\pi}[\ell(\cA_{\pi}, x_{\pi(m+1)}, y_{\pi(m+1)})]|$. 

Defining $N_{j}^{m} = \{i \in [m] \mid z_{\pi(i)} \in C_j \}$, we obtain,
\begin{align*}
& |R_m -\Eb_{\pi}[\ell(\cA_{\pi}, x_{\pi(1)}, y_{\pi(1)})]|  = \left| \sum_{j=1}^K \Eb_{\pi}(\ell(\cA_{\pi}, z_{\pi(1)}) \mid z_{\pi(1)} \in C_j) P(z_{\pi(1)} \in C_j ) - \frac{1}{m} \sum_{i=1}^{m} \ell(\cA_{\pi}, z_{\pi(i)}) \right| \\
& \leq \left| \sum_{j=1}^K \Eb_{\pi}(\ell(\cA_{\pi}, z_{\pi(1)}) \mid z_{\pi(1)} \in C_j) \frac{|N_j^m|}{m} - \frac{1}{m} \sum_{i=1}^{m} \ell(\cA_{\pi}, z_{\pi(i)}) \right| \\
& \quad + \left| \sum_{j=1}^K \Eb_{\pi}(\ell(\cA_{\pi}, z_{\pi(1)}) \mid z_{\pi(1)} \in C_j) P(z_{\pi(1)} \in C_j ) -  \sum_{j =1}^{K} \Eb_{\pi}\left( \ell(\cA_{\pi}, z_{\pi(1)}) \mid z_{\pi(1)}\in C_j \right)\frac{|N_j^m|}{m} \right| \\
& \leq \left| \frac{1}{m} \sum_{j=1}^{K}\sum_{z \in N_j^m} \max_{z' \in C_j} \left| \ell \left(\cA_{\pi},z\right) - \ell \left(\cA_{\pi},z'\right) \right| \right|  + \left| \max_{z \in \cZ} \left| \ell\left( \cA_{\pi}, z \right) \right| \sum_{j=1}^{K} \left| \frac{|N_j^m|}{m}- P(z_{\pi(1)} \in C_j ) \right| \right| \\
& \leq \varepsilon + M\sum_{j=1}^{K} \left| \frac{|N_j^m|}{m}- \frac{|C_j|}{n} \right|.
\end{align*}
Finally, for $t\geq 0$, we apply \cref{lemma:Bretagnolle_Huber_permutation_extension} to bound the probability, 
\begin{equation*}
P\left(\sum_{j=1}^{K} \mleft| \frac{|N_j^m|}{m}- \frac{|C_j|}{n} \mright|\geq t\right).
\end{equation*}

Following the derivations from the proof of \cref{lemma:Bretagnolle_Huber_extension}, we obtain  
\begin{align*}
P\left(\sum_{j=1}^{K} \mleft| \frac{|N_j^m|}{m}- \frac{|C_j|}{n} \mright|\geq t\right) 
&\leq \sum_{S \subset [K]} P\left( \left| \sum_{i=1}^{m} Y_{i}^{(S)} - \Eb\left(\sum_{i=1}^{m} Y_i^{(S)}\right) \right| \geq \frac{mt}{2} \right),
\end{align*}
where  
\begin{equation*}
Y_{i}^{(S)} = \sum_{j \in S} \mathbf{1}_{\{z_{\pi(i)} \in C_j\}}.
\end{equation*}  
Applying \cref{lemma:Bretagnolle_Huber_permutation_extension}, we obtain  
\begin{align*}
P\left(\sum_{j=1}^{K} \mleft| \frac{|N_j^m|}{m}- \frac{|C_j|}{n} \mright|\geq t\right)  
\leq \sum_{S \subset [K]} 2\exp\left(\frac{-t^2 m}{2|S|^2} \right) \leq 2^{K+1} \exp\left(\frac{-t^2 m}{2K^2} \right).
\end{align*}  
Thus, for any $\delta>0$, with probability at least $1-\delta$, we have  
\begin{equation*}
\sum_{j=1}^{K} \mleft| \frac{|N_j^m|}{m}- \frac{|C_j|}{n} \mright| \leq \frac{1}{\sqrt{m}} \cdot K \cdot \sqrt{2(K+1)\log2 + 2\log\left(\frac{1}{\delta}\right)}.
\end{equation*}  
Following the same reasoning for  
\begin{equation*}
|R_u -\Eb_{\pi}(R_u)|,
\end{equation*}  
we derive,
\begin{align*}
\left|R_m - R_u\right| \leq  
2\varepsilon + \left( \frac{1}{\sqrt{m}}+\frac{1}{\sqrt{u}} \right) \cdot M \cdot K \cdot \sqrt{2(K+1)\log2 + 2\log\left(\frac{1}{\delta}\right)} + L_{\ell}\beta.
\end{align*}  
\end{proof}

\section{Missing proofs from Section~\ref{sec:robustness_under_dependency_for_graphs}}
\label{app_sec:robustness_under_dependency_for_graphs}

The main theorems in this section are direct consequences of results established earlier. Specifically, \cref{thm:binaryclassificationdatadepend} follows immediately by combining \cref{thm:Xu_Mannor_noniid} with \cref{thm:lipschitzimpliesrobustness}, and \cref{thm:binaryclassificationtransductive} by combining \cref{thm:Xu_Mannor_transductive} with \cref{thm:lipschitzimpliesrobustness}. 

Specifically, the Lipschitz constant $C$ appearing in \cref{thm:binaryclassificationdatadepend} and \cref{thm:binaryclassificationtransductive} is the same and is given by
\begin{equation*}
C = 2 \widetilde{L} L_{\ell} \left( \prod_{t=1}^{L} L_{\varphi_t} \right),
\end{equation*}
where $L_{\varphi_t}$ denotes the Lipschitz constant of the $t$-th message-passing layer as defined in \cref{def:sum_mpnnsgraphs}, $L_{\ell}$ the Lipschitz constant of the loss function, and $\widetilde{L}$ is the Lipschitz constant established in \cref{prop:Lipschitz_generalized_MPNNs}.

\textbf{On assumptions in node prediction bounds}
Previous works such as \citet{scarselli2018vapnik, Ver+2019, Gar+2020} study generalization in node prediction under an inductive setting, but rely on strong and limiting assumptions. Specifically, they assume that graphs decompose into independent substructures, similarly to the computational trees we defined in \cref{app_sec:1WL}, and that the data distribution is defined over the product space of these trees and their labels. A more restrictive assumption is that these tree-label pairs are sampled independently, even when the trees originate from the same graph, thus ignoring inherent dependencies within the graph structure.
In contrast, our framework treats each graph as a relational object, without reducing it to a collection of unrollings. While we relax the i.i.d.\@ assumption, we still capture dependencies within the training set through a mild and natural condition: independence is assumed only across nodes from different graphs. In contrast, dependencies are allowed among nodes within the same graph. This setting better reflects real-world scenarios and provides more realistic assumptions than prior work.

Finally, we state and prove the following corollary, showing that if we restrict our attention to graphs with maximum node degree $q$, for some $q \in \Nb$, we can bound the covering number from \cref{thm:binaryclassificationdatadepend}.
\begin{corollary}
\label{cor:boundeddegreegeneralization}
Let $\cG_{d,q}^{(-1,1)}$ be the space of featured graphs with node features in $(-1,1)^d$ and maximum node degree $q$, and consider the setting from \cref{thm:binaryclassificationdatadepend}.  
Then,  
\begin{equation*}
\left| \ell_{\text{exp}}(\cA) - \ell_{\text{emp}}(\cA_{\cS}) \right| \leq 2C\varepsilon + M \sqrt{\frac{ D_{\cS} \left( \left(  4 \log2 \right)\left(\frac{3}{\varepsilon}\right)^{d\cdot Q} +2\log 2 + 2\log\left(\frac{1}{\delta}\right)\right)}{N}}, \quad{\text{for all } \varepsilon >0.}
\end{equation*}
where $Q = \frac{q^{L+1}-1}{q-1}$, $C$, and $D_{\cS}$ as previously.
\end{corollary}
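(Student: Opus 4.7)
The corollary is a specialization of Theorem~\ref{thm:binaryclassificationdatadepend} to the bounded-degree setting, so all the substantive work lies in bounding the covering number $K_\varepsilon = \cN(\cG_{d,q}^{(-1,1)}\otimes V, \mathrm{UD}_{\cT,\cV}^{(L)}, \varepsilon)$ in terms of $\varepsilon$, $d$ and $Q = (q^{L+1}-1)/(q-1)$. Since we are in node prediction, $\cT(G) = G$ and $\cV(G,\{u\}) = \{u\}$, so each $(G,u)$ corresponds to a single unrolling tree $\UNR{G,u,L}$. Because the maximum degree is at most $q$ and the depth is $L$, the padded tree produced by the $\mathsf{padd}_{q,L}$ procedure defined in~\cref{app_sec:unrolling_distances_extension} has exactly $Q$ nodes and is isomorphic (as an unlabeled rooted tree) to the complete $q$-ary tree of depth $L$, independently of the underlying graph. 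Fixing a canonical breadth-first ordering of its vertices lets us identify every padded unrolling with a vector in $[-1,1]^{d\cdot Q}$ (with coordinates set to $\vec{0}$ at padded nodes).

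The plan is the following. First, I would exploit the common structure of all padded trees: any two such trees admit a canonical edge-preserving bijection $\varphi^\star$ obtained by matching roots and then matching children in the canonical order. Instantiating~\cref{def:gen_unrolling_distances} with this particular $\varphi^\star$ yields
\[
\mathrm{UD}^{(L)}_{\cT,\cV}((G_1,u_1),(G_2,u_2)) \;\leq\; \sum_{x} \|a_{F_1}(x) - a_{F_2}(\varphi^\star(x))\|_2,
\]
and each summand is bounded by $\sqrt{d}$ times the $\ell_\infty$-distance of the two corresponding feature blocks of $[-1,1]^d$. Second, I would invoke the standard volumetric estimate that $[-1,1]^{d\cdot Q}$ admits an $\ell_\infty$-cover of radius $r$ of size at most $(3/r)^{d\cdot Q}$. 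Choosing $r$ proportional to $\varepsilon/(Q\sqrt{d})$ guarantees that two joint feature vectors in the same grid cell have $\sum_x\|a_{F_1}(x)-a_{F_2}(\varphi^\star(x))\|_2 \le \varepsilon$, and therefore that the induced cover has radius at most $\varepsilon$ under $\mathrm{UD}^{(L)}_{\cT,\cV}$. This gives $K_\varepsilon \le (3/\varepsilon)^{d\cdot Q}$ after the $Q\sqrt{d}$ factor is absorbed by rescaling the parameter $\varepsilon$ appearing in the outer invocation of Theorem~\ref{thm:binaryclassificationdatadepend}.

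Third, I would substitute this covering-number bound into Theorem~\ref{thm:binaryclassificationdatadepend}: the term $2(K_\varepsilon+1)\log 2$ becomes $2\log 2\cdot (3/\varepsilon)^{d\cdot Q}+2\log 2$, and after consolidating constants inside the square root one recovers the bound stated in the corollary, with leading factor $(4\log 2)(3/\varepsilon)^{d\cdot Q}$. The Lipschitz constant $C$ and the dependency term $D_{\cS}/N$ are inherited verbatim from the theorem and require no extra argument.

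The main obstacle is Step~1: one must be careful that the canonical bijection $\varphi^\star$ is genuinely edge-preserving on the padded trees (which relies on the uniform complete $q$-ary structure produced by $\mathsf{padd}_{q,L}$) and that the conversion between per-node $\ell_\infty$-closeness of features and the sum-of-$\ell_2$ form of $\mathrm{UD}^{(L)}_{\cT,\cV}$ is tight enough to yield the clean $(3/\varepsilon)^{d\cdot Q}$ expression. The bookkeeping for the factor $Q\sqrt{d}$ introduced in that conversion---and its absorption by rescaling---is the only nontrivial part; the rest of the argument is routine.
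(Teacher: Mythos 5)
Your proposal is correct and follows essentially the same route as the paper: the paper's proof is a one-line appeal to the volumetric covering bound $\cN \leq (3/\varepsilon)^{n}$ applied to the $d\cdot Q$-dimensional feature space of the padded (complete $q$-ary, depth-$L$) unrolling trees, which is exactly the estimate you derive in detail. If anything, you are more explicit than the paper about the $Q\sqrt{d}$ conversion factor between the $\ell_\infty$ grid and the sum-of-$\ell_2$ form of $\mathrm{UD}^{(L)}_{\cT,\cV}$, a factor the paper silently absorbs into the choice of radius.
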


\begin{proof}
The proof is a straightforward application of the bound on the covering number of a $d$-dimensional unit Euclidean ball with radius $0 < \epsilon < 1$, which is at most $\left( \frac{3}{\epsilon} \right)^d$ (see \citet[Lemma 5.13]{Handel2014}).
\end{proof}

\section{Experimental analysis}
\label{app_sec:exp}
This appendix presents the experimental protocol underlying the results and insights discussed in \cref{sec:experiments}. The source code of all methods is available in the supplementary material.

\textbf{Sampling strategies for \textbf{Q2}}
In total we consider three different sampling strategies for inductive node classification tasks in \textbf{Q2}. Since we want to estimate the difference in generalization capabilities for sampling methods depending on the number of training graphs sampled from we provide methods using the same number of nodes with different distinct training graphs. Throughout the experiments, we fixed the random seed to ensure the same sampling process in each iteration. 

First we provide \emph{random} sampling, selecting a subsample of graphs by randomly ordering the graphs and sampling nodes graph-by-graph (except graphs containing test set nodes), exhausting each graph before moving to the next or stopping once the desired number of nodes is reached. This results in a small number of graphs, where training nodes are obtained exclusively, 
leading to a larger chromatic number $\chi(G[\cS])$ (as in \cref{thm:Xu_Mannor_noniid}) or, equivalently, a larger $D_{\cS}$ (following the notation in \cref{thm:binaryclassificationdatadepend}). 

In contrast \emph{uniform} sampling uses the same number of randomly ordered nodes from each graph in the training dataset. This ensures that at least one node is sampled from every training graph and during training each graph is seen at least once. With the selected number of nodes for both datasets we sample multiple nodes from each graph. 

As an addition to uniform and random sampling we provide the \emph{mixed} sampling process. This method uses either random or uniform sampling for a specified number of graphs, denoted by r and u in the name respectively. In addition, we conduct our experiments with 1000, 4000 and 8000 distinct graphs seen during training for each dataset. This allows us to set the number of graphs instead of just setting the number of nodes, resulting in a fine grained exploration of influences on generalization errors seen for both datasets

Throughout all sampling strategies, we first choose a subset of graphs from the dataset and uniformly sample $n_{test}$ nodes to
form the test set. We then fix the number of training nodes to $n_{train}$. 

    \begin{table}
    \resizebox{\columnwidth}{!}{
    \begin{tabular}{|c|c|c|c|c|c|c|c|c}
    PATTERN & Mixed-1k-r &  Mixed-1k-u & Mixed-4k-r & Mixed-4k-u & Mixed-8k-r &  Mixed-8k-u & random & uniform \\
    \hline
    $n_{train}$ & 120000 & 120000& 120000& 120000& 120000& 120000 & 120000 & 120000 \\
    $n_{test}$  & 116232 & 116232 & 116232 &116232 & 116232 & 116232 & 116232 & 116232\\
    $\mathcal{D}_S$& 186 & 120 & 183 & 30 & 170 & 15 & 186 & 9 \\
    Training loss & 0.445 \tiny$\pm$ 0.01 & 1.5885 \tiny$\pm$ 0.002 & 0.44 \tiny$\pm$ 0.01 & 1.5806 \tiny$\pm$ 0.003 & 0.43 \tiny$\pm$ 0.011 & 1.579 \tiny$\pm$ 0.01 & 0.2485 \tiny$\pm$ 0.0073 & 1.5739 \tiny$\pm$ 0.0123\\
    Test loss & 1.7685 \tiny$\pm$ 0.005 & 1.5203 \tiny$\pm$ 0.03 & 1.763 \tiny$\pm$ 0.002 & 1.516 \tiny$\pm$ 0.03 & 1.749 \tiny$\pm$ 0.013 & 1.515 \tiny$\pm$0.034 &  1.8007 \tiny$\pm$ 0.0426 & 1.5114 \tiny$\pm$ 0.0152 \\
    Gen. Gap & 1.3235 & 0.0682 & 1.323 & 0.0646 & 1.319 & 0.064 & 1.552 & 0.0625 \\ 
    
    \iffalse\caption{
    Generalization results for different sampling strategies related to \textbf{Q2} for inductive node classification on the CLUSTER dataset. Strategy random refers to training nodes sampled from a few graphs (resulting in higher sample dependency), while strategy uniform uses nodes sampled uniformly across many distinct graphs. In addition we use the Strategy mixed with different numbers of distinct graphs and sampling processes. Both strategies use the same number of training and test nodes. $D_{\cS}$ denotes the number of distinct graphs in the training set. Further, $n_{\text{train}}$ and $n_{\text{test}}$ denote the number of train and test nodes.}\fi
    \hline
    CLUSTER & Mixed-1k-r &  Mixed-1k-u & Mixed-4k-r & Mixed-4k-u & Mixed-8k-r &  Mixed-8k-u & random & uniform \\
    \hline
    $n_{train}$ & 120000 & 120000& 120000& 120000& 120000& 120000 & 120000& 120000\\
    $n_{test}$  & 116633  & 116633  & 116633  &116633  & 116633  & 116633 & 116633 & 116633 \\
     $\mathcal{D}_S$ & 177 & 120 & 176 & 30 & 177 & 15 & 190 & 10 \\
    Training loss & 0.1244 \tiny$\pm$ 0.003 & 0.4002 \tiny$\pm$ 0.01 & 0.1187 \tiny$\pm$ 0.003 & 0.3806 \tiny$\pm$ 0.0161 & 0.1157 \tiny$\pm$ 0.001 & 0.394 \tiny$\pm$ 0.018 & 0.1741 \tiny$\pm$0.0260 & 0.2578\tiny$\pm$0.1293 \\
    Test loss & 0.3920 \tiny$\pm$ 0.01& 0.3359 \tiny$\pm$ 0.029& 0.3869 \tiny$\pm$ 0.001 & 0.3396 \tiny$\pm$ 0.024 & 0.3882 \tiny$\pm$ 0.002 & 0.356 \tiny$\pm$ 0.007 & 0.7964 \tiny$\pm 0.3241$ & 0.3579 \tiny$\pm0.0043$  \\
    Gen. Gap & 0.2676 & 0.0643 & 0.2682 & 0.041 & 0.2725 & 0.038 &  0.6223 & 0.1001 \\ 
    \end{tabular}
    }
      \caption{Generalization results for different sampling strategies related to \textbf{Q2} for inductive node classification on the CLUSTER and PATTERN dataset. Strategy random refers to training nodes sampled from a few graphs (resulting in higher sample dependency), while strategy uniform uses nodes sampled uniformly across many distinct graphs. In addition we use the Strategy mixed with different numbers of distinct graphs and sampling processes. Both strategies use the same number of training and test nodes. $D_{\cS}$ denotes the maximum sampled nodes of a single graph in the training set. Further, $n_{\text{train}}$ and $n_{\text{test}}$ denote the number of train and test nodes.}
      \label{table:Q2_complete}
    \end{table}

\textbf{Datasets} To investigate \textbf{Q1} we focus on the transductive setting for node predictions, and we use the datasets Wisconsin, Cornell, Texas \citep{DPeiGeom-GCN2020} available as part of the WebKB dataset available at \url{https://github.com/bingzhewei/geom-gcn} and Cora dataset \citep{YANGRevisiting2016, Sen2008Collective} available under the CC-BY 4.0 license at \url{https://pytorch-geometric.readthedocs.io/en/latest/generated/torch_geometric.datasets.Planetoid.html}. To investigate \textbf{Q2} under the inductive setting, we consider Pattern, and Cluster datasets \citep{DwivediBenchmarking2023}. These datasets are available under the MIT license at \url{https://github.com/graphdeeplearning/benchmarking-gnns}. 

We compute the covering number for all transductive datasets. In the case of the inductive datasets, we omit the covering number computation due to its similarity to the computations provided by \citep{Vas+2024}. Common dataset statistics and properties are in \Cref{tab:datasetstats}. 

In addition we consider synthetic datasets generated using Erdos-Reyni graphs for \textbf{Q3}. 
 We generate the longest shortest path Erdős–Rényi graph LSP-ER(n,p,cc), used for binary node classification, as follows: We first create an Erdős–Rényi (ER) graph with  nodes $n$ and edge probability $p$. In a second step, we connect currently disconnected components in the graph. For this $cc$ denotes the number of randomly sampled links between disconnected components. For example,  would mean one link between disconnected components. This process is executed for pairs of graph components. Then, we identify the longest shortest path and assign label 1 to all nodes involved in this path. In the case of multiple longest shortest paths, we regenerate the graph to ensure an unambiguous result. All other nodes are labeled 0.
 
We chose this generation process to allow some control over how the covering number of the node space varies across datasets. Especially with few suitable small scale datasets available for transductive node-level tasks we aim to generate graphs to determine the behavior of our generalization bounds. Furthermore, with generated data we are able to select a suitable feature space and conduct ablation studies with regard to graph density. Specifically, smaller $p$ (fixed)  and larger $n$ produce sparser graphs in which nodes tend to have similar computation trees. This results in smaller distances between nodes and, consequently, tighter generalization bounds. A similar effect is observed when decreasing $p$ for fixed $n$. 
\begin{table}
\centering
\resizebox{0.7\columnwidth}{!}{
\begin{tabular}{ @{}lcccccc@{} } 
\hline
\bf Hidden dimension & \bf 16 & \bf 32 & \bf 64 & \bf 128 & \bf 256 & \bf 512\\
\hline
Calculated bound (n=500) & 3.575 & 4.602 & 2.71 & 4.53 & 4.04 & 2.578 \\
Generalization Gap (n=500)& 0.471 & 0.461 & 0.436 & 0.438 & 0.424 & 0.425 \\
\hline
Calculated bound (n=1000)& 5.27 & 9.03 & 4.59 & 4.67 & 5.92 & 4.94\\
Generalization Gap (n=1000)& 0.242 & 0.213 & 0.201 & 0.204 & 0.20 & 0.194\\
\hline
\end{tabular}
\label{table:LSPhiddendim}
}
\caption{Ablation study for the size of hidden layers and node prediction bounds on LSP-ER(n,p,cc) graphs with p=,cc=1}
\end{table}

\begin{table}
\centering
\resizebox{0.7\columnwidth}{!}{
\begin{tabular}{ @{}lcccc@{} } 
\hline
\bf Dataset &  \bf (100,0.002,1) & \bf (200,0.002,1) & \bf (500,0.002,1) & \bf (1000,0.002,1) \\
\hline
%Initial loss & 5.07 & 5.71 & 7.81 & 5.75\\
Calculated bound & 5.88 & 5.17 & 4.53 & 4.66 \\
Generalization Gap & 0.56 & 0.548 & 0.44 & 0.21 \\
\hline
\end{tabular}
}
\caption{Evaluation of node prediction bounds with p=0.002, cc=1 for LSP-ER(n,p,cc) graphs}
\label{table:LSP0002}
\end{table}

\begin{table}
\centering
\resizebox{0.85\columnwidth}{!}{
\begin{tabular}{ @{}lcccccc@{} } 
\hline
\bf Dataset & \bf (100,0.0005,1) & \bf (200,0.0005,1) & \bf (300,0.0005,1) & \bf (500,0.0005,1) & \bf (1000,0.0005,1) \\
\hline
%Initial loss & 5.70 & 6.48 & 6.81 & 8.08 & 3.54 \\
Calculated bound & 6.59 & 5.65 & 4.87 & 4.83 & 2.65 \\
Generalization Gap & 0.581 & 0.537 & 0.527 & 0.416 & 0.416 \\
\hline
\end{tabular}
}
\caption{Evaluation of node prediction bounds with p=0.0005, cc=1 for LSP-ER(n,p,cc) graphs}
\label{table:LSP00005}
\end{table}

\begin{table}
\centering
\resizebox{0.7\columnwidth}{!}{
\begin{tabular}{ @{}lcccccc@{} } 
\hline
\bf Dataset & \bf (200,0.0005,1) &\bf (200,0.001,1) & \bf (500,0.0005,1) &\bf (500,0.001,1)\\
\hline
Calculated bound &  5.65& 6.07 & 4.83 & 5.66\\
Generalization Gap& 0.54& 0.574& 0.416 & 0.462\\
\hline
\end{tabular}
}
\caption{Node prediction bounds with increasing edge probability for LSP-ER(n,p,cc) graphs}
\label{table:increasingedge}
\end{table}

\begin{table}
\caption{Statistics for each dataset considered in \Cref{sec:experiments}\label{tab:datasetstats}}
	\centering
	\resizebox{.70\textwidth}{!}{ 	\renewcommand{\arraystretch}{1.05}
		\begin{tabular}{@{}lcccccc@{}}
			\toprule   & \multicolumn{4}{c}{\textbf{Dataset}}
			\\\cmidrule{2-7} 
            & \textsc{Texas} & \textsc{Wisconsin} & \textsc{Cornell} & \textsc{Cora} & \textsc{Pattern} & \textsc{Cluster} \\
			\midrule
            \small \# Graphs         & 1 & 1 & 1 & 1  & 14000 & 12000 \\
			\small \# Avg. nodes         & 183 & 251 & 183 & 2708  & 118.9 & 117.2 \\
			\small \# Avg. edges         & 325 & 515 & 298 & 10556 & 6098.9 & 4303.9 \\
            \small \# Classes       & 5   & 5   & 5   & 7 & 2 & 6 \\
            \bottomrule
		\end{tabular}}
\end{table}

\textbf{Neural architectures} To address \textbf{Q1}, we use randomly initialized GIN and SEAL architectures with three layers. We further incorporate node features into the GIN architecture as initial inputs to the model. Nonlinearity is introduced via the ReLU function as discussed in our theoretical examination. A complete list of hyperparameters used is provided in \Cref{tab:hyperparams}. Note that the slope of the line that upper bounds the observations in \cref{fig:corrMPNNs_link_L2} and \Cref{fig:corrMPNNs_link_L3} can be used as an upper bound on the Lipschitz constant $C$ in \cref{thm:binaryclassificationtransductive}. %The Lipschitz constant of the cross entropy loss is considered to be $L_\ell=1$ \citep{mao2023cross}. 

In the case of SEAL, we also use GIN as the underlying GNN architecture. We sample one negative link for each link in the dataset, providing an equal amount of positive and negative links. Across all datasets, we use the commonly used data splits, and for SEAL, an 80/10/10 train-valid-test split. Similar to the node prediction tasks, we employ ReLU nonlinearity and sum pooling to compute each link representation. The subgraph sampling is done as proposed in the original SEAL paper \citep{Zha+2018}, omitting the target link in its respective subgraph.

For \textbf{Q2} we use the same GIN architecture as in \textbf{Q1} but trained it for 200 epochs to get a suitable generalization error. Furthermore, we use the hyperparameters detailed in \Cref{tab:hyperparams_node_inductive}. 

In the case of \textbf{Q3} we use a randomly initialized GIN architecture with a hidden dimension of 16 or 32 and 3 layers. We incorporate node features into the GIN architecture as initial inputs to the model. Otherwise, we use the same GIN architecture as for \textbf{Q1}. 

\textbf{Hyperparameters}
For inductive node classification on Pattern and Cluster, we trained for 200 epochs using a modified version of GIN, aligning with \Cref{def:sum_mpnnsgraphs}. In addition, we tuned the learning rate using the set $\{0.01, \textbf{0.001}, 0.0001 \}$, while employing the Adam optimizer \citep{KingmaAdam2015}. Across all tasks and models, we used a batch size of 32 and set dropout to 0.1. Further, we do not use learning rate decay across all datasets. We report results on the inductive node classification tasks and the sampling strategies in \Cref{table:Q2}. A list of all hyperparameters used can be found in \Cref{tab:hyperparams_link}, \Cref{tab:hyperparams_node_inductive} and \Cref{tab:hyperparams} for each experiment. 

For the correlation experiments, we use a randomly initialized GIN model and a SEAL model using GIN layers. We use 3 layers each to align the computation with possible computations of the generalization bounds. We set the hidden dimension to 16 for Cornell, Texas, and Wisconsin, and 32 for Cora, respectively. 
Since we do not train the models, we omit further training-specific hyperparameters.

In case of \textbf{Q3} we use a randomly initialized GIN model with 3 layers to compute the generalization bounds. In accordance with \textbf{Q1} we set the hidden dimension of the model to 16 or 32, dropout to 0.1 and do not train the model. Therefore, we omit training-specific hyperparameters. In contrast to real-world datasets and trained models, we utilize all available nodes for evaluating the generalization bound.

Furthermore, we report the runtime and memory usage of our experiments in \Cref{tab:runtime} and \Cref{tab:runtime_lsp}. 
We provide a PyTorch Geometric implementation for each model. All our experiments were executed on a system with 12 CPU cores, an Nvidia L40 GPU, and 120GB of memory.

\textbf{Experimental protocol and model configuration} 
To evaluate \textbf{Q1}, we measure whether the perturbed inputs to the generalized distance lead to perturbations in the MPNN outputs. We use the same 80/10/10 train-valid-test split as for the other tasks. We then select nodes or links randomly from the transductive dataset for each dataset. We then compute the (unrolling-based) distances for two sampled nodes or links and compare them to the Euclidean distance of their respective GIN or SEAL outputs. Sample plots for selected nodes showing the correlation between the generalized distance and MPNN outputs are shown in \Cref{fig:corrMPNNs_link_L2}, \Cref{fig:corrMPNNs_node_L2}, \Cref{fig:corrMPNNs_link_L3}, and \Cref{fig:corrMPNNs_node_L3}.

Concerning \textbf{Q2}, we provide three scenarios for sampling nodes from the graphs in the dataset. First, we consider the case where the train dataset nodes are sampled from specific graphs. Secondly, we assume uniform node subsampling across all training graphs. Finally, we consider the case of both sampling methods with the number of graphs seen during training fixed. We fix the test set for these experiments to a randomly determined train-valid-test split. We report the obtained results in \Cref{table:Q2} and \Cref{table:Q2_complete}, which showcase the difference in generalization performance between each sampling method. Node samples denote the fraction of nodes used for the computation. 
\begin{table}
\caption{Hyperparameters used for correlation experiments in \Cref{fig:corrMPNNs_link_L2}, \Cref{fig:corrMPNNs_link_L3} with two and three GIN layers.  \label{tab:hyperparams_link}}
	\centering
	\resizebox{.60\columnwidth}{!}{ 	\renewcommand{\arraystretch}{1.05}
		\begin{tabular}{@{}lcccc@{}}
			\toprule   & \multicolumn{4}{c}{\textbf{Dataset}}
			\\\cmidrule{2-5} 
                 & \textsc{Texas}  & \textsc{Cornell} & \textsc{Wisconsin} & \textsc{Cora}  \\
            \midrule
            Embedding dim.& 16 & 16 & 16 & 32  \\
            Hidden dim.& 16 & 16 & 16  & 32  \\
            Dropout & 0.1& 0.1 & 0.1 & 0.1  \\
            Node samples & 1 & 1 & 1 & 0.25 \\
			\bottomrule
		\end{tabular}}
\end{table}

\begin{table}
\caption{Hyperparameters used for inductive node classification experiments in \textbf{Q2} and for generalization bound computation experiments on the LSP-ER(n,p,cc) datasets. For LSP-ER(500,p,cc) and LSP-ER(1000,p,cc) we used 32 as hidden dimension, otherwise 16.  \label{tab:hyperparams_node_inductive}}
	\centering 
	\resizebox{.55\columnwidth}{!}{ 	\renewcommand{\arraystretch}{1.05}
		\begin{tabular}{@{}lccc@{}}
			\toprule   & \multicolumn{3}{c}{\textbf{Dataset}}
			\\\cmidrule{2-4} 
            & \textsc{Pattern}    & \textsc{Cluster} & \textsc{LSP-ER}   \\  
            \midrule
            Learning Rate & 0.001&  0.001 & --\\
            Batch Size& 32& 32 & 32\\
            Embedding dim.& 64 & 64  & 64\\
            Hidden dim.& 64 & 64  & 16/32\\
            Epochs & 200 & 200 & -- \\
            LR decay & 0& 0 & -- \\
            Gradient norm & 1& 1 & --\\
            Dropout & 0.1 & 0.1 & 0.1 \\
			\bottomrule
		\end{tabular}}
\end{table}
\iffalse
\begin{table}
\caption{Hyperparameters used for generalization bound computation experiments on the LSP-ER(n,p,cc) datasets. For LSP-ER(500,p,cc) and LSP-ER(1000,p,cc) we used 32 as hidden dimension, otherwise 16. \label{tab:hyperparams_q3}}
	\centering
	\resizebox{.70\textwidth}{!}{ 	\renewcommand{\arraystretch}{1.05}
		\begin{tabular}{@{}lcc@{}}
			\toprule   
            \textbf{Dataset} & \textsc{LSP-ER}   \\  
            \midrule
            Batch Size& 32\\
            Embedding dim.& 64   \\
            Hidden dim.& 16/32 \\
            Dropout & 0.1   \\
            Node samples & 1 \\
			\bottomrule
		\end{tabular}}
\end{table}
\fi
To get generalization bounds for \textbf{Q3}, we evaluate the perturbed inputs to the generalized distance, which leads to perturbations in MPNN outputs. Similar to \textbf{Q1}, we select nodes randomly from the transductive dataset and compute the unrolling-based distances and compare them to the Euclidean distances of MPNN outputs. 
To compute the generalization bound, we estimate the Lipschitz constant by linearly bounding the computed MPNN outputs based on the node distance. In a second step, we then compute the generalization bound by determining the optimal covering numbers required for the bound estimate. With the optimal result obtained through searching over possible covering numbers, we further get the loss bound and actual generalization gap from the experimental data of our evaluation.  
We report results in \Cref{table:LSP0001}, \Cref{table:LSP0002}, \Cref{table:LSP00005} and \Cref{table:increasingedge}. Further, we showcase ablation results for different hidden sizes in \Cref{table:LSPhiddendim}. 
\iffalse
\textbf{Sampling strategies} 
Below, we describe the two sampling strategies used to construct the experiments in \cref{table:Q2}. The goal was to train a GIN-based MPNN for node prediction in the inductive setting, using multiple graphs and two training sets of equal size. 

In \emph{Strategy 1}, we select a small number of graphs from the dataset and collect all training nodes exclusively from these graphs.. In contrast, \emph{Strategy 2} distributes the training nodes across many distinct graphs. While both strategies utilize the same number of training nodes, they share an identical test set, enabling a fair comparison.

Concretely, we first choose a subset of graphs from the dataset and uniformly sample $n_{\text{test}}$ nodes to form the test set. We then fix the number of training nodes to $n_{\text{train}}$. For Strategy 1, we sample nodes graph-by-graph (excluding those used in the test set), exhausting each graph before moving to the next or stopping once $m$ nodes have been collected. For Strategy 2, we sample $n_{\text{train}}$ training nodes uniformly at random across all graphs in the dataset that do not contain any test nodes.
\fi

\begin{table}
\caption{Runtime and Memory Usage for each experiment in \Cref{sec:experiments}\label{tab:runtime}. The first value denotes the runtime in seconds of each experiment, and the second value the used VRAM in MB. We do not report VRAM used in the correlation tasks as we only do a single forward pass. All results were obtained on a single computing node with an Nvidia L40 GPU and 128GB of RAM.}
	\centering
	\resizebox{\textwidth}{!}{ 	\renewcommand{\arraystretch}{1.05}
		\begin{tabular}{@{}lcccccc@{}}
			\toprule   & \multicolumn{6}{c}{\textbf{Dataset}}
			\\\cmidrule{2-7} 
            & \textsc{Texas}    & \textsc{Wisconsin}      & \textsc{Cornell}  & \textsc{Cora} & \textsc{Pattern} & \textsc{Cluster}  \\
			\midrule
            Correlation (SEAL, L=2,3) & 13.32/-& 19.56/-& 13.76/-&- &- &- \\
			Correlation (Node, L=2,3)  & 4.32/- & 7.42/- & 4.45/- & 469.02/- & -& -\\
            Inductive Node Uniform  &- & - & - & - & 199.08/96.71 & 169.64 / 78.77\\
			Inductive Node Random  & - & - & - & - & 257.24/88.67 & 233.11 / 69.46 \\
            Inductive Node Mixed-1k-r &- & - & - & - & 193.42/96.39 & 184.94/75.54 \\
            Inductive Node Mixed-1k-u &- & - & - & - & 219.44/88.67 & 196.38/68.14 \\
            Inductive Node Mixed-4k-r &- & - & - & - & 207.73/96.40 & 177.82/75.53 \\
            Inductive Node Mixed-4k-u &- & - & - & - & 216.34/88.67 & 194.35/68.15 \\
            Inductive Node Mixed-8k-r &- & - & - & - & 179.41/96.40 & 158.61/75.10 \\
            Inductive Node Mixed-8k-u &- & - & - & - & 218.84/88.70 & 200.69/68.14 \\
			\bottomrule
		\end{tabular}
        }
\end{table}

\begin{table}
	\centering
	\resizebox{\textwidth}{!}{ 	\renewcommand{\arraystretch}{1.05}
		\begin{tabular}{@{}lcccccc@{}}
			\toprule   & \multicolumn{6}{c}{\textbf{Dataset}}
			\\\cmidrule{2-7} 
            & \textsc{(100,0.001,1)}    & \textsc{(100,0.0005,1)}      & \textsc{(100,0.002,1)}  & \textsc{(200,0.001,1)} & \textsc{(200,0.0005,1)} & \textsc{(200,0.002,1)}  \\
			\midrule
            Bound Computation (Q3)  & 18.10/- & 15.31/- & 14.78/- & 33.21/- & 36.71/- & 61.02/- \\ 
			\bottomrule
		\end{tabular}
        }

	\centering
	\resizebox{\textwidth}{!}{ 	\renewcommand{\arraystretch}{1.05}
		\begin{tabular}{@{}lcccccc@{}}
			\toprule   & \multicolumn{6}{c}{\textbf{Dataset}}
			\\\cmidrule{2-7} 
            & \textsc{(500,0.001,1)}    & \textsc{(500,0.0005,1)}      & \textsc{(500,0.002,1)}  & \textsc{(1000,0.001,1)} & \textsc{(1000,0.0005,1)} & \textsc{(1000,0.002,1)}  \\
			\midrule
            Bound Computation (Q3)  & 287.01/- & 330.64/- & 314.74/- & 2157.13/- & 3122.56/- & 1937.44/- \\ 
			\bottomrule
		\end{tabular}
        }
        \caption{Runtime and Memory Usage for each experiment in \Cref{sec:experiments}\label{tab:runtime_lsp} using the LSP-ER datasets. The first value denotes the runtime in seconds of each experiment, and the second value denotes the used VRAM in MB. We do not report VRAM used in the correlation tasks, as we only do a single forward pass. All results were obtained on a single computing node with an Nvidia L40 GPU and 128GB of RAM.}
\end{table}

\begin{table}
\caption{Hyperparameters used for correlation experiments in \Cref{fig:corrMPNNs_node_L2} and \Cref{fig:corrMPNNs_node_L3}. One-to-one sampling describes the process of sampling one negative link to each link obtained from the graph.\label{tab:hyperparams}}
	\centering
	\resizebox{.60\textwidth}{!}{ 	\renewcommand{\arraystretch}{1.05}
		\begin{tabular}{@{}lccc@{}}
			\toprule   & \multicolumn{3}{c}{\textbf{Dataset}}
			\\\cmidrule{2-4} 
                  & \textsc{Texas}  & \textsc{Cornell} & \textsc{Wisconsin}   \\
            \midrule
            Embedding dim.& 16 & 16  &  16 \\
            Hidden dim.& 16 &  16 & 16  \\
            Link sample size & all & all & all  \\
            Link sampling & one to one & one to one & one to one  \\
            Dropout & 0.1 & 0.1 & 0.1  \\
            Use node features & True & True & True \\
			\bottomrule
		\end{tabular}}
\end{table}

\begin{figure}
        \begin{center}
        \centering
\includegraphics[width=1\textwidth]{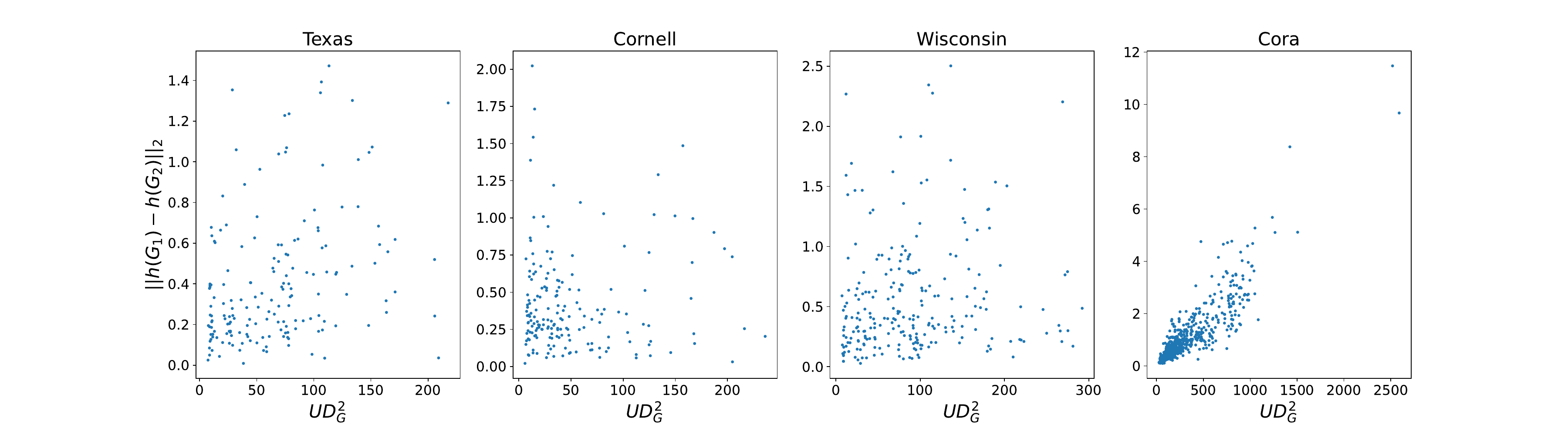}
        \end{center}
	\caption{Correlation between GIN-MPNN outputs and the corresponding unrolling distance across real-world datasets for two GIN layers.\label{fig:corrMPNNs_link_L2}}
\end{figure}

\begin{figure}
        \begin{center}
        \centering
\includegraphics[width=1.0\textwidth]{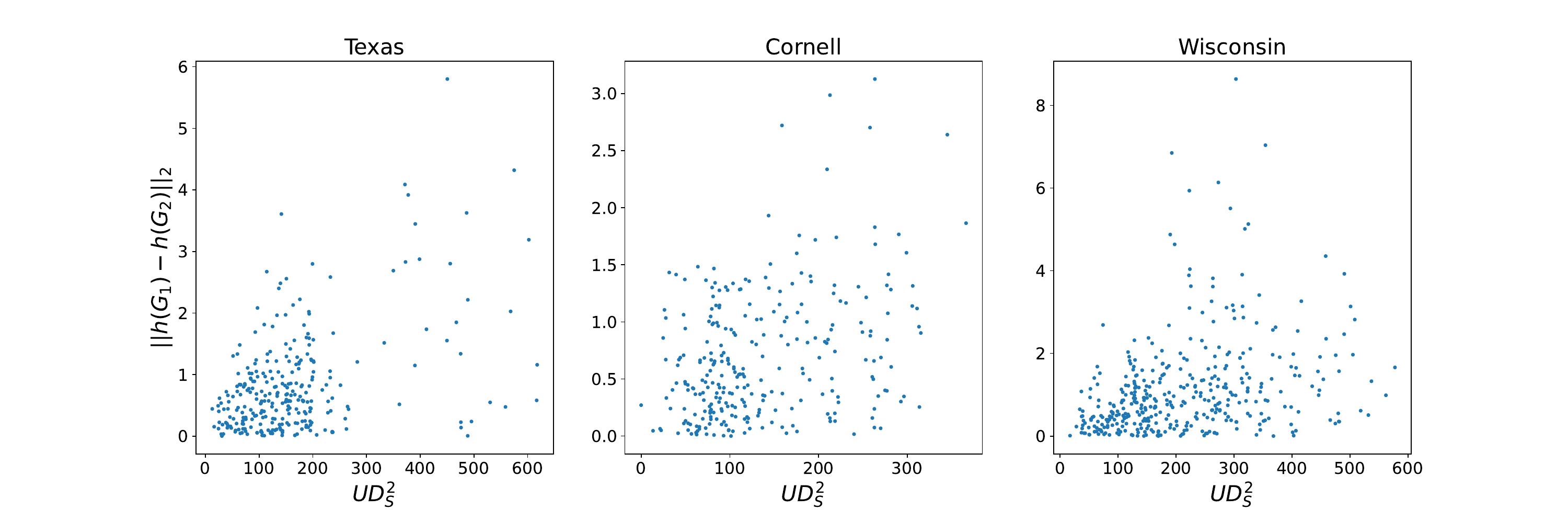}
    \end{center}
	\caption{Correlation between SEAL-MPNN outputs and the corresponding unrolling distance across real-world datasets for two GIN layers.\label{fig:corrMPNNs_node_L2}}
\end{figure}

\begin{figure}
        \begin{center}
        \centering
\includegraphics[width=1\textwidth]{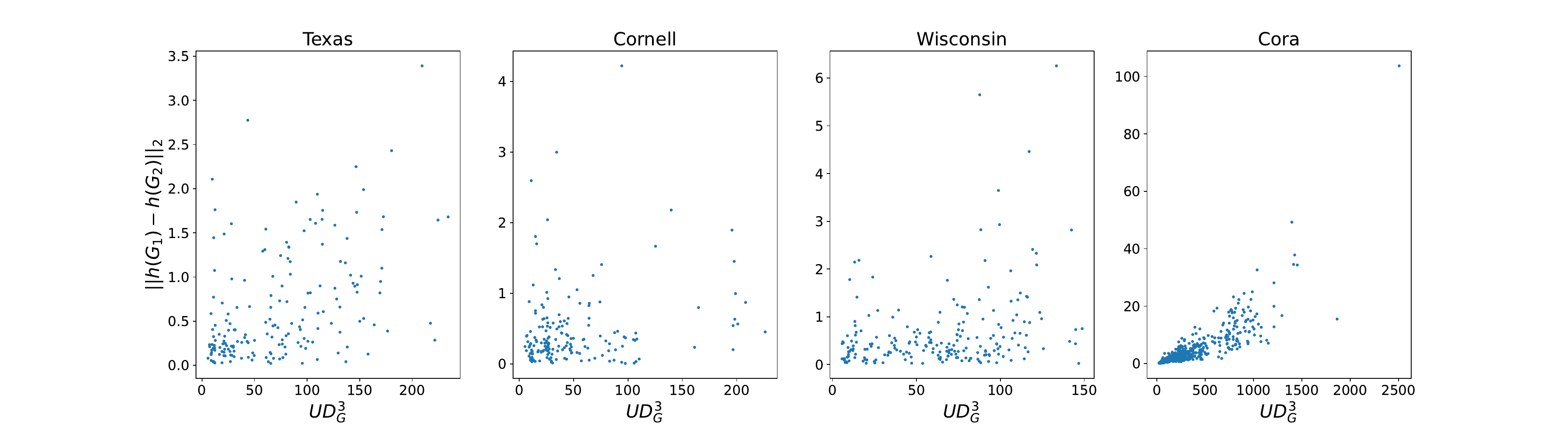}
        \end{center}
	\caption{Correlation between GIN-MPNN outputs and the corresponding unrolling distance across real-world datasets for three GIN layers.\label{fig:corrMPNNs_link_L3}}
\end{figure}

\begin{figure}
        \begin{center}
        \centering
\includegraphics[width=1.0\textwidth]{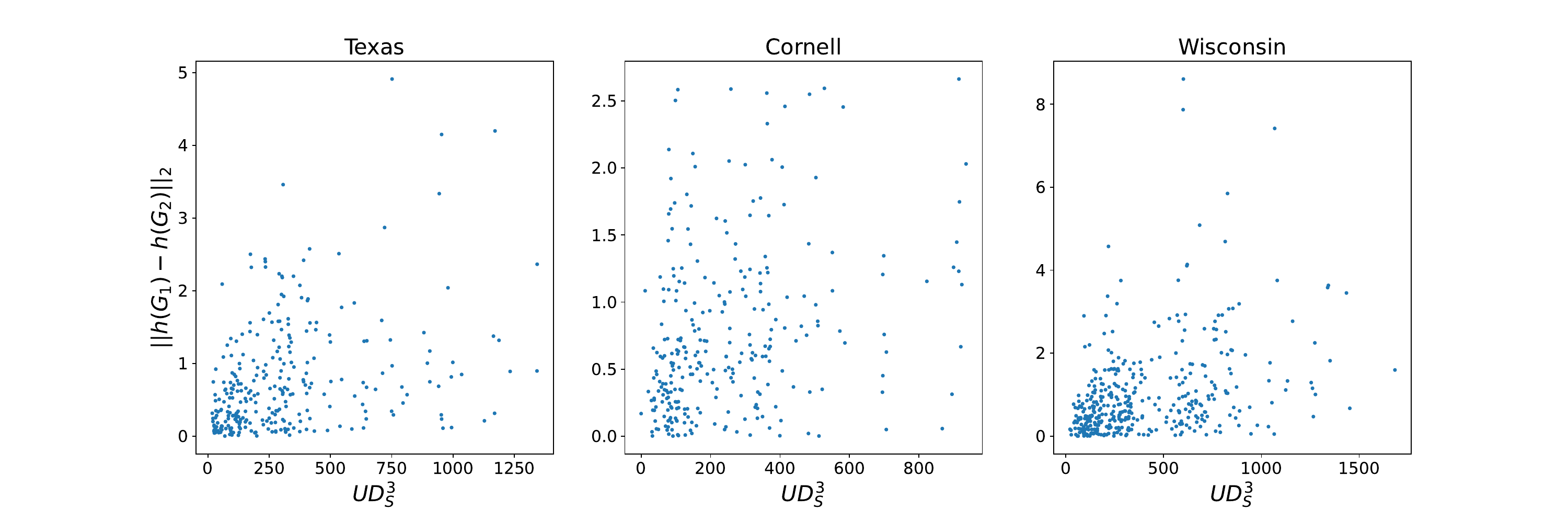}
    \end{center}
	\caption{Correlation between SEAL-MPNN outputs and the corresponding unrolling distance across real-world datasets for three GIN layers.\label{fig:corrMPNNs_node_L3}}
\end{figure}

\end{document}